\title{FutureFill: Fast Generation from Convolutional Sequence Models}
\author{
    Naman Agarwal\\
    \And
    Xinyi Chen \\ 
    \And
    Evan Dogariu \\
    \And
    Devan Shah  \thanks{Princeton University, \texttt{\{ds6237\}@princeton.edu}} \\
    \And 
    Hubert Strauss \thanks{Princeton Language and Intelligence - Princeton University, \texttt{\{hs6702\}@princeton.edu}} \\
    \And
    Vlad Feinberg \\
    \AND
    Daniel Suo  \\
    \And 
    Peter Bartlett \\ 
    \And
    Elad Hazan \thanks{Google DeepMind, \texttt{\{namanagarwal,xinyic,dogariu,vladf,dsuo,peterbartlett,ehazan\}@google.com}} \\
}
\def\mA{{\mathcal A}}
\newcommand{\xc}[1]{\noindent{\textcolor{blue}{\{{\bf XC:} \em #1\}}}}
\newcommand{\A}{\mathcal{A}}
\def\regret{\mbox{{Regret}}}
\newcommand{\ignore}[1]{}
\theoremstyle{plain}
\newtheorem{theorem}{Theorem}
\newtheorem{lemma}[theorem]{Lemma}
\newtheorem{corollary}[theorem]{Corollary}
\newtheorem{proposition}[theorem]{Proposition}
\newtheorem*{theorem*}{Theorem}
\newtheorem*{lemma*}{Lemma}
\newtheorem*{corollary*}{Corollary}
\newtheorem*{proposition*}{Proposition}
\newtheorem*{claim*}{Claim}
\newtheorem*{fact*}{Fact}
\newtheorem*{observation*}{Observation}
\newtheorem*{assumption*}{Assumption}
\theoremstyle{definition}
\newtheorem*{definition*}{Definition}
\newtheorem*{remark*}{Remark}
\newtheorem*{example*}{Example}
 \theoremstyle{plain}
\newtheorem*{theoremaux}{\theoremauxref}
\gdef\theoremauxref{1}
\DeclareMathAlphabet{\mathbfsf}{\encodingdefault}{\sfdefault}{bx}{n}
\def\mA{{\mathcal A}}
\newcommand{\reals}{\mathbb{R}}
\renewcommand{\leq}{~\le~}
\renewcommand{\geq}{~\ge~}
\let\oldtfrac\tfrac
\renewcommand{\tfrac}[2]{\smash{\oldtfrac{#1}{#2}}}
\let\nablaold\nabla
\renewcommand{\nabla}{\nablaold\mkern-2.5mu}
\begin{document}

\maketitle

\begin{abstract}
We address the challenge of efficient auto-regressive generation in sequence prediction models by introducing FutureFill—a general-purpose fast generation method for any sequence prediction algorithm based on convolutional operators. FutureFill reduces generation time from quadratic to quasilinear in the context length. Moreover,  when generating from a prompt, it requires a prefill cache whose size grows only with the number of tokens to be generated—often much smaller than the caches required by standard convolutional or attention‐based models. We validate our theoretical claims with experiments on synthetic tasks and demonstrate substantial efficiency gains when generating from a deep convolutional sequence prediction model.
\end{abstract}


\section{Introduction}
Large Transformer models \cite{vaswani2017attention} have become the method of choice for sequence prediction tasks such as language modeling and machine translation. Despite their success, they face a key computational challenge: the softmax attention mechanism incurs a quadratic computational cost during training and inference. This inefficiency has spurred interest in designing architectures that can handle long sequences more efficiently.

Convolution-based sequence prediction models \cite{li2022makes, poli2023hyena, agarwal2023spectral, fu2024monarch}
have emerged as strong alternatives, primarily because they can leverage fast algorithms, in particular the Fast Fourier Transform (FFT) to achieve near-linear scaling in the sequence length during training. These models build on advances in State Space Models (SSMs), which have shown promise in modeling long sequences across diverse modalities \cite{gu2021efficiently,dao2022hungry, gupta2022diagonal, orvieto2023resurrecting, poli2023hyena, gu2023mamba}.
Convolutional models offer a more general framework than SSMs because they can represent any linear dynamical system (LDS)  without requiring parameters that scale with the dimensionality of the hidden states \cite{agarwal2023spectral}. This flexibility has led to recent developments that can handle longer contexts more effectively both in theory and practice. For instance, Spectral State Space Models or Spectral Transform Units (STUs) \cite{agarwal2023spectral} use convolution-based spectral filtering algorithms \cite{hazan2017learning,hazan2018spectral}
to transform inputs into better-conditioned bases for long-term memory. The Hyena series of models \cite{poli2023hyena, massaroli2024laughing} is another example, which learns implicitly parameterized Markov operators using convolution. Both methods exploit the duality between time-domain convolution and frequency-domain multiplication to accelerate prediction via the FFT algorithm.

While SSMs and recurrent models benefit from fast inference times independent of sequence length, convolutional models have significantly slower token generation times during inference. The best-known result for generating tokens with convolutional models is quadratic in sequence length—comparable to attention-based models (see \cite{massaroli2024laughing} Lemma 2.1). This limitation has prompted research into distilling SSMs from convolutional models \cite{massaroli2024laughing}, but the distilled SSMs are an approximation of the original convolutional models and the approximation gaps are not fully understood.

In this paper, we consider exact auto-regressive generation from convolutional models, significantly reducing both the generation time and the cache size. We present our main results in two settings:

\begin{enumerate}[leftmargin=*]
    \item {\bf Generation from Scratch:} When generating $L$ tokens from scratch, we demonstrate that long convolutional sequence predictors can generate these tokens in total time  $O(L \log^2 L) $ with total memory $O(L)$. This improves upon previous methods that require 
$O(L^2)$ time for generation. We further provide a memory-efficient version wherein the total runtime increases to $O(L^{3/2} \sqrt{\log(L)})$ but the memory requirement is bounded by $O(\sqrt{L \log L})$.
    

    \item {\bf Generation with a Prompt:} When generating $K$ tokens starting from a prompt of length $L$, we show that the total generation time is $O(L \log L + K \log^2 K)$ 
 with a cache size of 
$O(K)$. Previously, the best-known results for convolutional models were a total generation time bounded by 
$O(L \log L + LK + K^2)$ and a cache size bounded by 
$O(L)$ \citep{massaroli2024laughing}. 
\end{enumerate}
Importantly, our algorithms generate exactly from the convolutional model without relying on any approximations. There are numerous recent advances for efficient inference using approximate methods, for example cache compression \cite{cachecompression} and sparse attention \cite{longformer}. Since our approach involves no quality loss, we consider these methods to be in a different class and do not compare against them. Moreover, our methods are applicable to any convolutional model, regardless of how it was trained. 

The following table compares our algorithm with a standard implementation of convolution. It is worth noting that naive online convolution does not require additional memory beyond storing the inputs and filters. Our methods, however, provide a spectrum of trade-offs between computational complexity and memory usage. We also provide a comparison of the time and cache size requirements for exact computation in attention-based models.


\newcommand\Tstrut{\rule{0pt}{3.0ex}}
\newcommand\Tstruttwo{\rule{0pt}{3.0ex}}
\newcommand\Bstrut{\rule[-2.0ex]{0pt}{0pt}}
\begin{table}[h]
    \centering
\subfloat[Comparison for generating $L$ tokens from scratch. Runtime is in asymptotic notation, i.e. $O (\cdot)$ is omitted for brevity. ]{    
\begin{tabular}{|c|c|c|}
        \hline
        Method & Runtime & Memory \Tstrut \\
        
        & & \\
        \hline
        Standard Conv & $L^2$ & $1$\\ 
                \hline
        Standard Attn. & $L^2$ & $1$ \\  
                \hline    
        EpochedFF (ours) & $L^{3/2} \sqrt{\log L}$ & $\sqrt{L \log L}$ \\
        \hline
        ContinuousFF (ours) & $L \log^2 L$ & $L$\\
        \hline
\end{tabular}}\quad
\subfloat[Comparison for generating $K$ tokens starting from a prompt of length $L$, runtime and cache-size are in asymptotic notation.]
{
\begin{tabular}{|c|c|}
        \hline
        Prefill+Genertation & Generation \Tstruttwo\\
        Runtime & Cache Size \Bstrut\\
        \hline
        $LK + L \log L + K^2$ & $L + K$ \\  
                \hline 
        $L^2 + KL$ & $L + K$ \\   
             \hline  
        $L \log L + K^{3/2} \sqrt{\log K}$ & $K$ \\
            \hline  
        $L \log L + K \log^2 K$ & $K$ \\
        \hline
    \end{tabular}
    }
    \label{tab:my_label}
\end{table}
To determine whether our theoretical findings lead to empirical benefits, we apply our algorithms to generate tokens in both controlled settings and from deep convolutional sequence prediction models. As a sanity check, we show empirically on isolated online convolutions that our algorithms achieve sub-quadratic scaling compared to the naive convolution implementation. We then consider more complex workloads where we generate from academic-sized deep sequence prediction models. We evaluate both purely convolutional and hybrid convolution/attention models, and demonstrate that for both generating from scratch and generating from a prompt, our algorithms can achieve a substantial speedup of up to \textbf{1.7×} compared to the baseline. 

\subsection{Related Work}

Due to space limitations we provide a detailed related works section in the Appendix (Section \ref{sec:related_work_deets}), and provide a short review in this section. Recurrent neural networks have been revisited in recent deep learning literature for sequence prediction in the form of state space models (SSMs), many of which can be parameterized as convolutional models. \cite{NEURIPS2020hippo} enable long-term memory via specialized system matrices, with follow-up works \cite{gu2021combining,gu2021efficiently,gupta2022diagonal,smith2023simplified} improving stability and computational efficiency. Convolutional models such as LongConv \cite{fu2023simple}, SGConv \cite{li2022makes}, and Hyena \cite{poli2023hyena} offer structured convolution kernel parameterizations for sequence prediction. For learning linear dynamical systems, spectral filtering \cite{hazan2017learning} emerges as a powerful, efficient method with provable regret guarantees even in MIMO settings. This technique is developed under the online convex optimization \cite{hazan2016introduction} framework, which lays the theoretical basis for adversarial sequence prediction. Given the strong guarantees, spectral filtering has been used to develop novel convolutional architectures for long range prediction and language modeling \cite{agarwal2023spectral, liu2024flash}. Finally, independent work \cite{oncescu2024flash} presents a very similar algorithm for convolutional model inference with a total runtime of $O(L \log^2(L))$ (same as our Continuous-FutureFill result) using the method of relaxed polynomial interpolation. Our algorithms are based on the simple and intuitive idea of FutureFill, which allows us to create more practical algorithmic variants with lower memory usage and more streamlined implementation.

\section{Setting}

\paragraph{Notation:}
For an input sequence $\{u_t\}$ we denote by $u_{1:t}$ the sequence of inputs $u_1,...,u_t$. For any $i \leq j$ let $u_{i:j}$ denote the sub-sequence $u_i, u_{i+1}, \ldots u_j$. When $i > j$, $u_{i:j}$ denotes the subsequence $u_{j:i}$ in reverse order.  We also denote $[k] = \{1,2,...,k\}$ as a set of $k$ natural numbers. For a vector $u$, let $[u]_j$ denote the $j$-th coordinate of $u$; if $u$ is a one-dimensional sequence, then let $[u]_j$ denote the $j$-th position of $u$. Given a multi-dimensional sequence $u_1 \ldots u_t$ where each $u_i \in \reals^d$ and given a vector $v \in \reals^t$, for brevity we overload the definition of inner products by defining $y = \langle v, u_{1:t} \rangle$ with $y \in \reals^d$ as $y_j = \sum_{i=1}^t v_i \cdot [u_i]_j \in \reals$. That is, $y$ is a $d$-dimensional vector where the coordinate $j$ is the inner product between $v$ and the sequence $[u_1]_j, \ldots, [u_t]_j$. 


\paragraph{Convolution:} 
The convolution operator between two vectors $u, \phi \in \mathbb{R}^{t}$ outputs a sequence of length $t$ whose element at any position $s \in [t]$ \footnote{This definition corresponds to the \textit{valid} mode of convolution in typical implementations of convolution e.g. scipy.} is defined as
\begin{equation}
[u*\phi](s) = \sum_{i=1}^{s} u_i \phi_{s+1-i} = \langle u_{1:s}, \phi_{s:1} \rangle. \label{eqn:convdef}
\end{equation}
A classical result in the theory of algorithms is that given two vectors $u, \phi \in \mathbb{R}^{t}$, their convolution can be computed in time $O(t \log t)$, using the FFT algorithm. 


\paragraph{Online Convolution:} We consider the problem of performing the convolution $u*\phi$ when one of the sequences $\phi$ is fully available to the algorithm, however the other sequence $u$ \textit{streams} in -- the element $u_t$ is made available to the algorithm at the start of round $t$, at which point it has to release the output $[u*\phi]_t$. This model of online convolution is immediately relevant to the online auto-regressive generation of tokens from a convolutional sequence model, as the output token at time $t$ becomes the input for the next round. In this setting, the sequence $u$ corresponds to generated tokens and the sequence $\phi$ corresponds to the convolutional filter which is known to the model. We further detail the setup of sequence generation in the next subsection.

\paragraph{Naive Online Convolution:}
Online convolution can be implemented by directly computing the inner product at each time step, as the new input becomes available. We refer to this method as naive online convolution. It has a computational complexity of $O(L^2)$ for predicting for $L$ steps and requires no additional memory beyond storing the inputs and filters.

\subsection{Auto-regressive Sequence Prediction}

\textbf{Sequence Prediction:} In this setting, the input is a sequence of tokens denoted $u_1,...,u_t,... $, where $u_t \in \reals^{d_{in}}$. The predictor's task is to generate a sequence $\hat{y}_1,...,\hat{y}_t,...$, where $\hat{y}_t \in \reals^{d_{out}}$ is generated after observing the inputs $ u_1,...,u_{t-1}$. The output $y_t$ is observed after the predictor generates $\hat{y}_t$. The quality of the prediction is measured by the distance between the predicted and observed outputs according to a loss function $\ell_t(\hat{y}_t,y_t)$, for example the $\ell_2$ distance $\|\hat{y}_t-y_t\|^2$.

\textbf{Auto-regressive Sequence Prediction:} When predicting a sequence in an auto-regressive fashion, in each iteration an online predictor first makes a prediction using the existing inputs $u_1, \ldots, u_{t-1}$, and then append the prediction $\hat{y}_t$ to the inputs to be used in the next iteration, where the inputs become $u_1, \ldots, u_{t-1}, \hat{y}_t$. When predicting from scratch, the online predictor starts from a given initial token and predicts, or generates, the rest of the sequence.

\textbf{Auto-regressive Sequence Prediction from a Prompt:} Auto-regressive sequence prediction starting from a prompt is commonly used by large language models. Herein the sequence model has to generate a specified number of tokens given a certain context. 
In practice, this setting consists of two stages, the prefill stage and the decode stage. 

During prefill, the model ingests the entire context and generates a cache that stores context information required for generation. When decoding, the model takes the cache and the most recently generated token as input and generates the next output token. The cache is then updated with the most recent input token. The cache stores the input information the prediction algorithm needs in order to generate the output. For instance, Transformers typically save the key and value vectors of past inputs in a KV cache, and for convolutional models, naive online convolution stores all previous inputs. As a result, for these models, generating $K$ tokens from a prefill of length $L$ requires a cache of size $O(L + K)$. This can be prohibitively large for long-context inference with an extensive prompt, and reducing the cache size is key in this setting \cite{kvquant}.





\subsection{Online Convolutions in Sequence Prediction}

We define a convolutional sequence prediction model to be given by a {\it filter}, which is a vector denoted by $\phi \in \reals^L$ where $L$ is the \textit{context length} of the model. It takes as an input a sequence $u$, and outputs a prediction at time $t$ according to the following equation, $\hat{y}_t =  \langle \phi , u_{t:t-L} \rangle $. 

The above definition can be extended to include nonlinearities and multiple filter \textit{channels}.
This paradigm captures several prominent convolutional sequence models considered in the literature, and we highlight some of them below (additional details are provided in the appendix in Section \ref{sec:con_seq_deets}). Our online convolution techniques can be straightforwardly applied to all the following models, leading to an improvement in the generation time from $O(L^2)$ to $\tilde{O}(L)$. When generating from a prompt, we improve the cache size from $O(L+K)$ to $O(K)$.



\textbf{Spectral Transform Units:} The STU architecture was proposed in \cite{agarwal2023spectral} based on the spectral filtering technique for linear dynamical systems \citep{hazan2017learning,hazan2018spectral}. These are convolutional sequence models based on carefully constructed filters that are {\bf not data-dependent}. More specifically, the filters $\phi_1,...,\phi_k$ are derived from a fixed Hankel matrix $H_L$ depending only on the sequence length $L$.
The STU predicts according to the following rule \footnote{more precisely, there are additional linear and constant terms depending on the exact filters used, such as $ \hat{y}_t =   \hat{y}_{t-2} +  \sum_{i=1}^{3} M^{u}_{i} u_{t+1-i} + \sum_{i=1}^k M_i \langle \phi_i , u_{t:t-L} \rangle $, see \cite{agarwal2023spectral} for more details.} 
$ \hat{y}_t =  \sum_{i=1}^k M_i \langle \phi_i , u_{t:t-L} \rangle  , $
where $M_{1:k}$ are learned projection matrices. Note that the inner products $\langle \phi_i , u_{t:t-L} \rangle$ are the outputs of $\phi_i * u$.
The STU architecture is particularly appealing for learning LDS with long memory, as demonstrated by its dimension-free sublinear regret guarantees for this setting \cite{agarwal2023spectral}. For more details see Appendix \ref{sec:con_seq_deets}.

\textbf{Hyena:} The Hyena architecture proposed in \cite{poli2023hyena} sequentially applies convolutions and element-wise products in an alternatve fashion. Formally, given an input $u_{1:t}$, $N+1$ linear projections $v, x_1, \ldots x_N$ of the input are constructed (similar to the $q,k,v$ sequences in self-attention). The hyena operator as a sequence of convolutions with learnable filters $h_1 \ldots h_N$ is then given by 
\[ y = x_N \cdot \left( h_N * \left( x_{N-1} \cdot \left( h_{N-1} * (\ldots) \right)\right)\right).\]




\section{Efficient Online Convolutions using FutureFill}

We begin by introducing a simple and convenient primitive named FutureFill that forms the crucial building block of our algorithms. Intuitively, FutureFill corresponds to computing the \textit{contribution} of the current and previously generated tokens on the future tokens yet to be generated. For a convolutional model (and unlike attention) this contribution can be efficiently determined without even having generated the future tokens. Here onwards, for brevity of notation, for any $v \in \reals^t$, we assume $v_j = 0$ for any $j \leq 0$ or any $j > t$. Formally, given two sequences $v \in \reals^{t_1}$, $w \in \mathbb{R}^{t_2}$ we define $\mathrm{FutureFill}(v, w) \in \reals^{t_2-1}$ as \footnote{recall that we denote $[x] = \{1 \ldots x\}$.}
\[ \forall s \in [t_2 - 1] \;\;[\mathrm{FutureFill}(v, w)]_s = \sum_{i=1}^{t_2 - s} v_{t_1-i+1} \cdot w_{s + i}.\]
Figure \ref{fig:ffillschematic} in Appendix \ref{sec:algoscematics} depicts the FutureFill operation between an input sequence and a convolutional filter. Conceptually, $[\mathrm{FutureFill}(v, w)]_s$ is the contribution of the input $v$ of length $t_1$ to the output $[v * w]$ at position $t_1 + s$. The FFT algorithm for convolutions can easily be extended to compute the FutureFill as well in time at most $O((t_1 + t_2)\log(t_1 + t_2))$. For example, the \textit{full} mode of a standard conv implementation (e.g., scipy) can be used to compute FutureFill in the following way under Python slicing convention (exclusive of the last index),
    \begin{verbatim} FutureFill(v, w) = scipy.linalg.conv(v, w, mode=full)[t_1:t_1+t_2-1] 
    \end{verbatim}
To leverage FutureFill for efficient generation from a convolutional model, consider the proposition below that follows from the definition of convolution.
\begin{proposition}
\label{prop:future_fill}
Given two vectors $a,b \in \reals^{t}$, we have that $\forall t_1,s \in [t]$,
\[  [a*b]_{s} = \begin{cases}
[a_{1:t_1}*b_{1:t_1}]_{s} \qquad \qquad   \text{ if } s \leq t_1\\
[a_{t_1+1:t}*b_{1:t-t_1}]_{s-t_1} + [\mathrm{FutureFill}(a_{1:t_1}, b)]_{s-t_1} 
\end{cases}\]
\end{proposition}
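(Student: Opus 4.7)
The plan is to unwind the definition of convolution \eqref{eqn:convdef}, namely $[a*b]_s = \sum_{i=1}^{s} a_i b_{s+1-i}$, and then verify each of the two stated formulas by a case split on whether $s \leq t_1$ or $s > t_1$.

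The first case ($s \leq t_1$) should be essentially immediate: every summation index $i$ lies in $[1,s] \subseteq [1,t_1]$, and the reversed index $s+1-i$ of $b$ likewise lies in $[1, t_1]$, so replacing $a$ and $b$ by their restrictions to the first $t_1$ coordinates changes no summand. This directly yields $[a*b]_s = [a_{1:t_1} * b_{1:t_1}]_s$.

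In the second case ($s > t_1$) I would split the summation at $i = t_1$, writing
\[ [a*b]_s \eq \sum_{i=1}^{t_1} a_i b_{s+1-i} \;+\; \sum_{i=t_1+1}^{s} a_i b_{s+1-i}. \]
The second piece, after the reindexing $j = i - t_1$, becomes $\sum_{j=1}^{s-t_1} a_{t_1+j} \cdot b_{(s-t_1)+1-j}$, which by \eqref{eqn:convdef} is exactly $[a_{t_1+1:t} * b_{1:t-t_1}]_{s-t_1}$ (a valid index, since $s \leq t$ implies $s - t_1 \leq t - t_1$). For the first piece, I would apply the substitution $j = t_1 - i + 1$ and compare against the FutureFill definition: with $v = a_{1:t_1}$ (ambient length $t_1$) and $w = b$ (so $t_2 = t$), the defining sum reads $[\mathrm{FutureFill}(a_{1:t_1}, b)]_{s-t_1} = \sum_{i=1}^{t-s+t_1} a_{t_1-i+1}\cdot b_{(s-t_1)+i}$, and the same substitution converts this back to $\sum_{j=1}^{t_1} a_j \cdot b_{s+1-j}$, with the convention $a_j = 0$ for $j \leq 0$ absorbing any extra summands in the FutureFill range.

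The only real obstacle I anticipate is the bookkeeping for this last step: aligning the reversed indexing $v_{t_1-i+1}$ in the FutureFill definition with the forward indexing $a_i$ of the convolution sum, and verifying that when $s < t$ the extra portion of the FutureFill sum (where $i$ exceeds $t_1$) contributes zero via the zero-padding convention. Both points reduce to the single substitution $j = t_1 - i + 1$, after which the identification is term-by-term and the proposition follows.
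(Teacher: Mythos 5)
Your proposal is correct and follows essentially the same route as the paper's proof: the first case is the same term-by-term identification, and in the second case you split the sum $\sum_{i=1}^{s} a_i b_{s+1-i}$ at $i = t_1$, match the tail to $[a_{t_1+1:t}*b_{1:t-t_1}]_{s-t_1}$ via $j = i - t_1$, and match the head to the FutureFill term via $j = t_1 - i + 1$ together with the zero-padding convention $a_j = 0$ for $j \leq 0$ — exactly the two reindexings the paper uses. No gaps; your explicit handling of the extra FutureFill summands absorbed by zero-padding is the same observation the paper makes.
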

That is, the convolution of two vectors $a$ and $b$ can be broken into a FutureFill operation and another convolution involving $b$ and only the most recent positions of $a$. We provide a proof in the appendix. 


\subsection{Epoched-FutureFill: Efficient Online Convolution}


When computing online convolutions, the FutureFill routine efficiently pre-computes the effect of past tokens on future ones. We leverage this property in the Epoched-FutureFill procedure outlined in Algorithm \ref{alg:epoch_ff} to compute online convolutions.

\begin{algorithm}[h]
\caption{Epoched-FutureFill: Efficient Online Convolutional Prediction} \label{alg:epoch_ff}
\begin{algorithmic}[1]
\STATE {\bf Input:} Filter $\phi \in \reals^{L}$. Input sequence $u \in \reals^L$, streaming coordinate-wise. $K$, the epoch length.
\STATE Set $\tau = 1$. Set FutureFill cache $C \in \reals^K$ to 0.
\FOR {$t = 1,2,...,L$}
\STATE \label{alg_line:fast_pred_epoch_ff} Receive $u_t$, and compute and output\ \  
$ \hat{y}_t =   \sum_{j=1}^\tau u_{t+1-j} \cdot \phi_j + C_{\tau}.$

\IF { $\tau = K$ } 
\STATE \label{alg_line:ff_comp_epoch_ff} Compute FutureFill cache $C \in \reals^{K}$ defined as $C_j = [\mathrm{FutureFill}(u_{1:t}, \phi_{1:t+K})]_j$.
\STATE $\tau \leftarrow 1$
\ELSE
\STATE $ \tau \leftarrow \tau + 1$ 
\ENDIF
\ENDFOR 
\end{algorithmic}
\end{algorithm}


The following theorem establishes the properties of Epoched-FutureFill and provide a trade-off between the additional memory overhead and total runtime incurred by the algorithm. In particular, the runtime in this trade-off is optimized when the total memory is $O(\sqrt{L \log L})$, leading to a total runtime of $O(L^{3/2}\sqrt{\log L})$.

\begin{theorem} \label{thm:epoch_ff}
Algorithm \ref{alg:epoch_ff} computes the online convolution of sequences with length $L$ and runs in total time $O\left(\frac{L^2 \log L}{K} + KL \right)$ with a total additional memory requirement of $O(K)$. Setting $K = \sqrt{L \log L}$ to minimize the runtime, Algorithm \ref{alg:epoch_ff} computes online convolution in $O(L^{3/2}\sqrt{\log L})$ total time and $O(\sqrt{L \log L})$ memory.
\end{theorem}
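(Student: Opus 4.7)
The plan is to establish correctness first, then bound the runtime, and finally optimize over $K$. For correctness, at any step $t$ let $t_1$ denote the most recent epoch boundary (the largest multiple of $K$ strictly less than $t$, or $0$ if $t \leq K$), so that the algorithm's variable satisfies $\tau = t - t_1$. Applying Proposition \ref{prop:future_fill} with $a = u$, $b = \phi$, and split point $t_1$ yields
\[ [u*\phi]_t = [u_{t_1+1:t} * \phi_{1:\tau}]_\tau + [\mathrm{FutureFill}(u_{1:t_1}, \phi)]_\tau. \]
A direct reindexing (substitute $j = \tau + 1 - i$ in the convolution definition) shows that the first term equals $\sum_{j=1}^\tau u_{t+1-j}\phi_j$, which is exactly the linear sum computed in line \ref{alg_line:fast_pred_epoch_ff}. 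The bulk of the correctness argument is then to verify that $C_\tau$, as populated in line \ref{alg_line:ff_comp_epoch_ff}, equals $[\mathrm{FutureFill}(u_{1:t_1}, \phi)]_\tau$; the subtlety is that the cache is computed with the truncated filter $\phi_{1:t_1+K}$ rather than the full $\phi_{1:L}$. I would handle this by noting that both sums are effectively restricted to $i \leq t_1$ by the convention that $u_j = 0$ for $j \leq 0$, and that the cache's upper index $t_1 + K$ exceeds the needed index $t_1 + \tau$ since $\tau \leq K$, so every nonzero term is retained.

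For the runtime, the work per step in line \ref{alg_line:fast_pred_epoch_ff} is $O(\tau) = O(K)$, contributing $O(KL)$ over the $L$ steps. The FutureFill computation in line \ref{alg_line:ff_comp_epoch_ff} is invoked once per epoch, i.e.\ $L/K$ times. At the $m$-th epoch it operates on sequences of lengths $mK$ and $(m+1)K$, so via FFT it costs $O(mK \log(mK))$. Summing the arithmetic series gives $\sum_{m=1}^{L/K} O(mK \log L) = O(K \log L \cdot (L/K)^2) = O(L^2 \log L / K)$, which combined with the per-step cost yields the claimed total runtime. The memory bound is immediate: the only persistent auxiliary storage is the length-$K$ cache $C$.

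To finish, I would balance the two terms in the runtime by solving $KL = L^2 \log L / K$, giving $K = \sqrt{L \log L}$ and a matched runtime of $O(L^{3/2}\sqrt{\log L})$ with memory $O(\sqrt{L \log L})$. The main technical obstacle I foresee is the correctness step, specifically the index bookkeeping needed to show that the truncated-filter FutureFill cache agrees with the ideal FutureFill contribution at all later query times in the epoch; everything else is routine FFT accounting followed by an elementary optimization.
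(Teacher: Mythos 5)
Your proposal is correct and follows essentially the same route as the paper: the runtime accounting ($O(\tau)$ per step plus one $O(L\log L)$ FFT-based FutureFill per epoch, summed and balanced at $K=\sqrt{L\log L}$) is identical, and your correctness argument---invoking Proposition \ref{prop:future_fill} at the epoch boundary and checking that the truncated-filter cache loses no nonzero terms---is just a repackaging of the paper's direct index-expansion in the appendix, with the key subtlety (that $u_j=0$ for $j\le 0$ makes the truncation to $\phi_{1:t_1+K}$ harmless) correctly identified and resolved in both.
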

\begin{proof}
Since the proof of correctness is mainly careful accounting of various terms, we provide it in the appendix and give the running time results in this proof. The running time consists of two components. First, at every iteration, line \ref{alg_line:fast_pred_epoch_ff} is executed. One term, $C_\tau$, has already been computed and saved in line \ref{alg_line:ff_comp_epoch_ff}, so we can retrieve it in constant time. The other term is a sum of $\tau$ products, which can be computed in time $O(\tau)$. Second, every $K$ iterations, we execute line \ref{alg_line:ff_comp_epoch_ff} and update the cache. The FutureFill operation can be computed via the FFT in at most $O(L \log L)$ time.

Summing over $L$ iterations, the total computational complexity is
$$
    \frac{L}{K}\left(L\log L + \sum_{\tau=1}^{K} \tau\right)
    = O\left(\frac{L^2\log L}{K} + KL\right)
    = O\left(L^{3/2}\sqrt{\log L}\right),
$$
where the last equality holds when the cache size $K= \sqrt{L\log L}$ is chosen to minimize the sum.
\end{proof}

\subsection{Continuous-FutureFill: Quasilinear Online Convolution}

In this section we specify a procedure that significantly improves upon the runtime of Epoched-FutureFill. Our starting point is Proposition \ref{prop:future_fill}, which implies that to compute the convolution between two sequences, we can break the sequences at any point, compute the convolution between the corresponding parts and \textit{stitch} them together via a FutureFill computation. This motivates the following Divide and Conquer algorithm to compute the convolution of two sequences $a, b \in \reals^L$
\begin{itemize}
    \item Recursively compute $a_{1:L/2}*b_{1:L/2}$, $a_{L/2+1:t}*b_{1:L/2}$.
    \item Output the concatenation of $a_{1:L/2}*b_{1:L/2}$ and $(a_{L/2+1:t}*b_{1:L/2}) + \mathrm{FutureFill}(a_{1:L/2}, b)$.
\end{itemize}

Since FutureFill for $L$-length sequences can be computed in time $O(L\log L)$ via the FFT, a standard divide-and-conquer approach yields an $O(L \log^2 L)$ computational complexity for the algorithm. Although this complexity is worse than an FFT, the advantage of the above method is that it can be executed \textit{online}, i.e. the tokens can be generated as input streams in.

We provide a formal description of the algorithm in Algorithm \ref{alg:cont_ff}. We note that the algorithm description essentially serializes the sequence of operations involved in the above divide-and-conquer procedure by their chronological order. For high-level intuition, we encourage the reader to maintain the divide-and-conquer structure when understanding the algorithm. The algorithm proceeds as follows: at each time step, $\hat{y}_t = \langle u_{1:t}, \phi_{t:1}\rangle$ is returned as a sum of $C_t$, the cache that stores the contribution from past tokens, and $u_t\cdot \phi_1$, the contribution from token $u_t$. In Line 7, the algorithm then computes the contribution of tokens $u_{t-2^{k(t)}+1:t}$ to positions $t+1, \ldots, t+2^{k(t)}$ of $[u * \phi]$. Finally, we add the output of FutureFill to the existing cache $C$ to accumulate the contributions. We provide a schematic illustrating the flow of the algorithm in the Appendix (Section \ref{sec:algoscematics}). In the following theorem we provide a running time bound for Algorithm \ref{alg:cont_ff} and defer the proof to the Appendix (Section \ref{sec:def_proofs}).




\begin{theorem}
\label{thm:cont_ff}
Algorithm \ref{alg:cont_ff} computes the online convolution of sequences with length $L$ and runs in total time $O(L \log^2(L))$ with a total additional memory requirement of $O(L)$.
\end{theorem}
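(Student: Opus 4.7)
The plan is to prove correctness inductively and then count the FutureFill work level by level, exploiting that Algorithm \ref{alg:cont_ff} serializes a balanced divide-and-conquer computation of $u*\phi$. I would take $k(t)$ to be the 2-adic valuation of $t$, i.e.\ the largest $k$ with $2^k \mid t$, so the algorithm issues exactly one FutureFill of size $2^{k(t)}$ at step $t$. Each such call corresponds to an internal node of a complete binary tree over $[L]$ being \emph{closed}: the block $u_{t - 2^{k(t)} + 1 : t}$ has just been fully observed, and its contribution to outputs $\hat{y}_{t+1}, \ldots, \hat{y}_{t + 2^{k(t)}}$ is exactly what FutureFill computes. Contributions to outputs farther than $2^{k(t)}$ into the future are deferred to a later, larger FutureFill, triggered when an enclosing block also closes.

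For correctness, I would prove by induction on $t$ that at the moment $\hat{y}_t$ is output in line 4, the cache satisfies $C_t = \sum_{j=2}^{t} u_{t+1-j}\phi_j$, so that adding the $u_t \cdot \phi_1$ term produces $[u*\phi]_t = \langle u_{1:t}, \phi_{t:1}\rangle$ exactly. The inductive step is a dyadic unrolling of Proposition \ref{prop:future_fill}: each pair $(i,j)$ with $j\geq 2$ and $i+j-1 = t$ is written into $C_t$ by a unique FutureFill, namely the one that closes the deepest dyadic block containing $i$ but not $t$. Matching this combinatorial identity with the schedule driven by $k(\cdot)$ shows that no contribution is double-counted or dropped.

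For the runtime, the number of $t \in [L]$ with $k(t) = k$ is $\Theta(L/2^{k+1})$, and each such FutureFill runs on operands of length $O(2^k)$, costing $O(2^k \log 2^k)$ via FFT. Summing $(L/2^{k+1}) \cdot O(k\, 2^k)$ over $k = 0, \ldots, \lfloor \log L \rfloor$ yields $O(L \log^2 L)$; the constant work per step in line 4 adds an $O(L)$ term that is dominated. For memory, $C$ records pending contributions to future positions; since the largest FutureFill range is $L$ and all writes accumulate into a single array indexed by absolute output position, the footprint is $O(L)$.

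The main obstacle will be the combinatorial bookkeeping in the inductive step: one must verify that the schedule triggered by $k(t)$ exactly realizes a post-order traversal of the complete binary tree over $[L]$, and that the specific arguments in line 7 recover precisely the right-sibling carry-over prescribed by Proposition \ref{prop:future_fill}. Once this bijection between dyadic blocks and FutureFill calls is pinned down, both the correctness induction and the level-by-level time counting are essentially routine.
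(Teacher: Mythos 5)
Your plan is essentially the paper's own proof: the runtime bound is obtained by the identical level-by-level count $\sum_k |\{t : k(t)=k\}| \cdot O(k\,2^k) = O(L\log^2 L)$, and the correctness argument rests on the same combinatorial fact, which the paper organizes per output position (its Lemma on cache updates shows $C_{i+1}$ is written exactly at the ``cumulants'' of $i$, i.e.\ the closing times of the maximal dyadic blocks partitioning $[1,i]$) while you organize it per input--output pair; the two bookkeepings are equivalent. One slip to fix before carrying out the induction: the FutureFill responsible for the pair $(i,t)$ is the one closing the \emph{largest} (shallowest) dyadic block containing $i$ but not $t$ --- the left child of their lowest common dyadic ancestor --- not the deepest such block, which is the singleton $\{i\}$ and whose FutureFill only reaches one position ahead.
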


 \begin{algorithm}[h]
\caption{Continuous-FutureFill: Quasilinear Generation From Convolutional Models} \label{alg:cont_ff}
\begin{algorithmic}[1]
\STATE {\bf Input:} Convolutional filter $\phi \in \reals^{L}$. Input sequence $u \in \reals^L$, streaming one coordinate every round.
\STATE Set $b = \lfloor \log L \rfloor$. Set FutureFill cache $C \in \reals^L$ to 0.
\FOR{$t = 1 \ldots L$}
\STATE Receive $u_t$. Output $\hat{y_t} = C_t + u_t \cdot \phi_1$.

\STATE Let $k(t)$ be the highest power of 2 that divides $t$, i.e. $k = \max\{i \in [b]: \;t\mod 2^{i} = 0\}$.
\STATE \label{alg_line:ff_comp_cont_ff} Compute $\mathrm{FF} = \mathrm{FutureFill}(u_{t-2^{k(t)}+1:t}, \phi_{1:2^{k(t)+1}})$
\STATE Set $C_{i} = C_{i}  + \mathrm{FF}_{i-t}$ $\;\;\;$ $\forall\;\; i \in [t+1, t+2^{k(t)}]$
\ENDFOR
\end{algorithmic}
\end{algorithm}


\subsection{Limitations}
The computational complexity of Continuous-FutureFill is optimal up to poly-log factors as we cannot hope to generate faster than constant time per token, and further improvements remain an open question. We also acknowledge that our algorithm currently does not address quantization, which is common in practical settings. Finally, despite significant theoretical savings, actual efficiency gains from our algorithms can be affected by hardware-specific factors.
\section{Fast Auto-regressive Sequence Generation from a Prompt}
\begin{figure*}[t]
    \centering
\includegraphics[width=0.8\columnwidth]{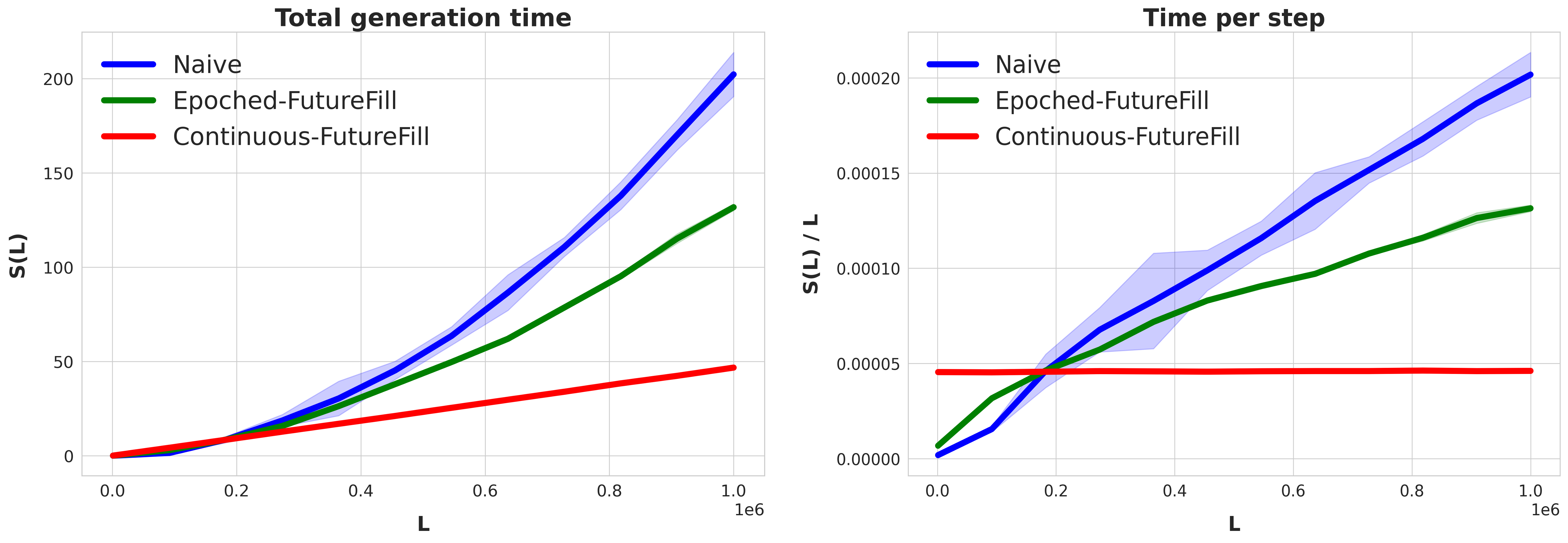}
\caption{Total and average number of seconds per step when generating $L$ tokens, as a function of $L$. 
}
    \label{fig:amortizedtimings}
\end{figure*}
In this section we consider the problem of auto-regressively generating $K$ tokens starting from a given prompt of length $L$. For convolutional models in particular, we define an abstract version of the  problem: given a prompt vector $p \in \reals^L$ and a convolutional filter $\phi \in \reals^{L+K}$ \footnote{the assumption of the filter being larger than $L+K$ is without loss of generality as it can be padded with 0s}, the aim is to iteratively generate the following sequence of tokens
$$
    \hat{y}_t = \langle \hat{y}_{1:t-1}, \phi_{t-1:1} \rangle + \langle p_{1:L}, \phi_{t+L-1:t} \rangle = \sum_{j=1}^{t-1} \hat{y}_{t-j} \cdot \phi_j + \sum_{j=t}^{t+L-1} p_{t+L-j} \phi_j.
$$
As the above definition clearly shows, the expected output is an online convolution where the input sequence $u$ has a prefix of the prompt $p$ and the input sequence is appended by the most recently generated output by the model (i.e. auto-regressive generation). Observe that the output can be computed from a FutureFill operation and another online convolution involving the generated tokens, which can be computed using either of our online convolution algorithms. In the Appendix (Section \ref{sec:algofastprompt}), we formally provide Algorithm \ref{alg:efficient_sf_pf} that  specifies the above method using Continuous-FutureFill (Algorithm \ref{alg:cont_ff}) as the online convolution algorithm. The corollary below bounds the running time for the overall method which follows easily from Theorem \ref{thm:cont_ff}.

\begin{corollary} 
Algorithm \ref{alg:efficient_sf_pf} when supplied with a prompt of sequence length $L$, generates $K$ tokens in total time $O(L \log L + K \log^2{K})$ using a total cache of size $O(K)$. 
\end{corollary}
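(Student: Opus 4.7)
The plan is to decompose each generated token $\hat{y}_t$ into two pieces that can be handled independently: a \emph{prompt contribution} $A_t = \sum_{j=t}^{t+L-1} p_{t+L-j}\phi_j$, which depends only on the fixed quantities $p$ and $\phi$, and a \emph{self contribution} $B_t = \sum_{j=1}^{t-1}\hat{y}_{t-j}\phi_j$, which is exactly the online convolution of the generated sequence $\hat{y}$ with the filter $\phi_{1:K}$. This is the same prompt/generation split suggested by Proposition \ref{prop:future_fill} applied to the concatenated input $u = (p, \hat{y})$: the first part of the split becomes a FutureFill involving the prompt, while the second part is an online convolution over only the $K$ tokens produced during decoding.

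For the prefill stage, I would observe that the sequence $(A_t)_{t \in [K]}$ coincides, after a fixed reindexing, with a $K$-length window of $\mathrm{FutureFill}(p,\phi_{1:L+K})$. Evaluated via a single FFT this costs $O((L+K)\log(L+K))$ time; in the regime $K \le L$ this is $O(L \log L)$, and in the regime $K > L$ it is $O(K \log K) \le O(K \log^2 K)$, so in either case it fits inside the target bound $O(L\log L + K\log^2 K)$. After the FFT completes, only the $K$ entries of the window are retained as the persistent prompt cache, so the cache footprint maintained during generation is $O(K)$.

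For the decode stage, I would invoke Algorithm \ref{alg:cont_ff} on the stream of generated tokens with filter $\phi_{1:K}$, which by Theorem \ref{thm:cont_ff} computes $(B_t)_{t \in [K]}$ in total time $O(K \log^2 K)$ using $O(K)$ additional memory. At each step $t$, the algorithm reads $u_t = \hat{y}_{t-1}$ (the previously emitted token), retrieves $A_t$ from the prompt cache in $O(1)$, obtains $B_t$ from the Continuous-FutureFill output, and emits $\hat{y}_t = A_t + B_t$. Adding the two stages yields the claimed $O(L\log L + K\log^2 K)$ runtime, while the two caches (the $K$-entry prompt cache and the $O(K)$ internal cache of Continuous-FutureFill) together occupy $O(K)$ memory.

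The main point requiring care will be the indexing bookkeeping: verifying that the specific $K$-length window of $\mathrm{FutureFill}(p,\phi_{1:L+K})$ actually equals $(A_t)_{t \in [K]}$ under the paper's convolution convention, and that truncating the filter to $\phi_{1:K}$ for the self-convolution loses nothing since only coefficients $\phi_1,\ldots,\phi_{K-1}$ ever multiply a generated token. Both checks follow from the convention $v_j = 0$ outside the index range together with the definition of FutureFill, so no new ideas beyond Proposition \ref{prop:future_fill} and Theorem \ref{thm:cont_ff} are required; the rest of the argument is a routine sum of the two stage bounds.
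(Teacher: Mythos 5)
Your proposal is correct and matches the paper's construction exactly: Algorithm \ref{alg:efficient_sf_pf} performs precisely the prompt/self split you describe, computing the prompt contribution as a single FutureFill in $O((L+K)\log(L+K))$ time stored in a $K$-entry cache and delegating the self contribution to Continuous-FutureFill, whose $O(K\log^2 K)$ time and $O(K)$ memory bounds come from Theorem \ref{thm:cont_ff}. The paper states only that the corollary ``follows easily'' from that theorem, so your write-up is in fact a more explicit version of the same argument, including the correct handling of both regimes $K\le L$ and $K>L$.
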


\section{Experiments}



\subsection{Controlled settings}
In this section, we verify our theoretical results on isolated online convolution operations. We randomly initialize one-dimensional filters and study the setting where we generate from scratch, where algorithms generate $L$ outputs from a given initial input. We
evaluate Epoched-FutureFill (Algorithm \ref{alg:epoch_ff}) which has a runtime of $O(L^{3/2} \sqrt{\log L})$ and Continuous-FutureFill (Algorithm \ref{alg:cont_ff}) which has a runtime of $O(L \log^2 L)$ against the naive implementation, which has a runtime of $O(L^2)$. For increasing values of $L$, we measure the time $S(L)$ it takes 
to generate $L$ outputs. In Figure \ref{fig:amortizedtimings} we plot the amortized step time $S(L) / L$ and total generation time $S(L)$, respectively, as functions of $L$. The behavior is consistent with our theory: the naive algorithm runs in amortized $O(L)$ per step, while our methods achieve sublinear and logarithmic runtime complexities respectively. In the appendix (Section \ref{sec:app_exps}) we present additional experiments where we show that Epoched-FutureFill significantly outperforms Transformer models with a standard KV cache and convolutional models with naive decoding (the state of the art for convolutional models) for inference.



\newcommand{\hub}[1]{\textcolor{orange}{hub: #1}}

\subsection{Experiments on Convolutional Language Models}
\label{sec:experiments_on_conv_lms}

In this section, we further show that our theoretical results on FutureFill's sub-quadratic generation time hold when using academic-sized convolutional language models, i.e. models of up to $826.05$M parameters. We focus here on the more practical Epoched-FutureFill (Algorithm \ref{alg:epoch_ff}). 


\subsubsection{Setup} 
We conduct our experiments using two variants of FlashSTU-T, a convolutional model based on Spectral Transform Units as introduced in \cite{liu2024flash}: \begin{itemize} [leftmargin=*]
    \item \textbf{Fully convolutional variant}: This model consists entirely of Spectral Transform Units (STUs) with the tensordot approximation, which convolve the projected input against fixed spectral filters and apply an MLP layer. We use \texttt{float32} as the default precision.
    \item \textbf{Hybrid variant}: This model combines 50\% of STU blocks with 50\% of local attention blocks. We adhere closely to the setup specified in \cite{liu2024flash}, apart from the number of layers and input dimensions as those will be modified in our ablations and the number of attention heads which is set to 4. See Appendix~\ref{sec:app_real_world_exps_impl_details} for more details. Here, we use \texttt{bfloat16} as the default precision.
\end{itemize}
Since our focus in the ablations below is primarily on generation speed rather than downstream performance, we initialize the filters ($\phi_{1:k}$) uniformly at random while following the initialization approach detailed in \cite{liu2024flash} for all other layers.

\textbf{Ablations without prefill}: We conduct three ablations—generation length, model depth, and model width—starting each run from the \texttt{<|endoftext|>} token with no prefill cache. Specifically, we:
\begin{itemize}[leftmargin=*]
\item Vary generation length $L_{\mathrm{gen}}$ from $4096 \rightarrow 126976$.
\item Vary the depth of the model within $[6, 8, 12]$ layers while keeping a fixed input dimension of $1024$.
\item Vary the width of the model within $[512, 896, 1024]$ while keeping a fixed number of layers at $12$.
\end{itemize}

\textbf{Ablations with Prefill}: In this set of ablations, we only consider the fully convolutional variant of FlashSTU-T. Generation is initiated from a prompt which we use in the prefill stage. We:
\begin{itemize}[leftmargin=*]
    \item Vary the length of the input prompt $L_{prompt}$ from $512 \rightarrow 32768$. The length of the generation $L_{\mathrm{gen}}$ is varied from $4096 \rightarrow 130560$.
    \item Vary the depth and width of the model in the same manner as in the ablations without prefill, and fix the input dimension to be 1024, the number of layers to be 12.
\end{itemize}

In both sets of ablations, our baseline for comparison is naive online convolution, where past activations are stored and the convolution is recomputed across the entire sequence at each generation step.
These configurations yield models ranging from $160.71$M to $689.48$M parameters. Refer to Appendices~\ref{sec:app_real_world_exps_ablations_wo_prefill} and Appendix~\ref{sec:app_real_world_exps_ablations_with_prefill} for more details.

\paragraph{Ablation on cache size $K$} Across all our previous experiments, we set the FutureFill epoch length to the theoretical optimum $K=\sqrt{L_{\mathrm{gen}}\log L_{\mathrm{gen}}}$ (Theorem~\ref{thm:epoch_ff}). To abate this choice, we consider a fixed generation length $L_{\mathrm{gen}}$ of $65536$ tokens without prefill, and we sweep $K$ using fully convolutional FlashSTU-T of various sizes between $417.08$ M and $826.05$M.


\subsubsection{Results}
All experiments were run on a single NVIDIA H100 GPU. For each model configuration and sequence length, we measure the total generation time (including the full forward pass through MLPs and STU/attention blocks) over three successive runs and report the average of the final two.

\paragraph{Ablations Without Prefill} 

\begin{figure*}[!t]
  \centering
  \begin{subfigure}[b]{0.52\textwidth}
    \centering
    \includegraphics[width=\linewidth]{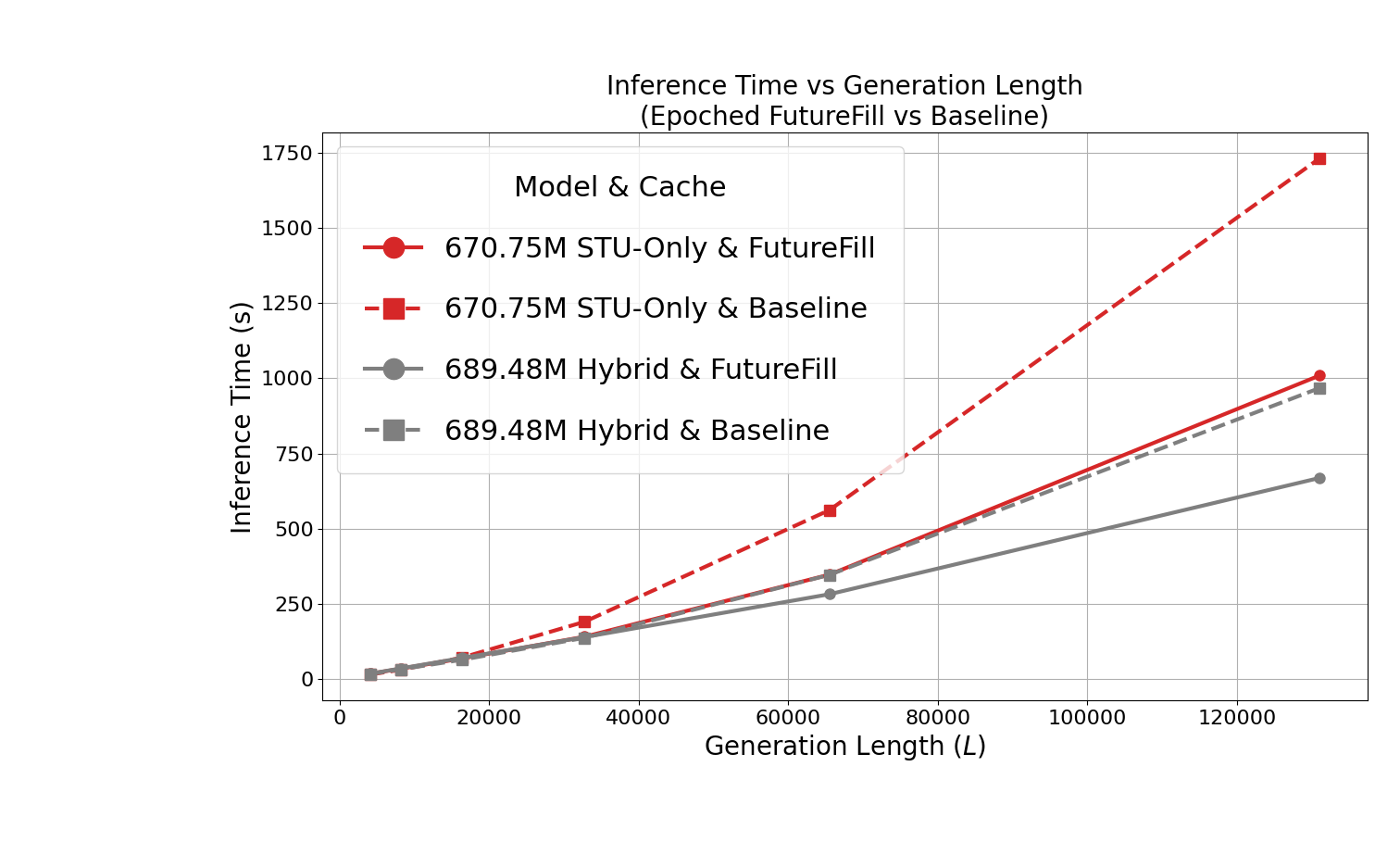}
    \captionsetup{width=\linewidth, justification=centering}
    \caption{Inference time (in s), without prefill.\\ Baselines are in dashed lines.}
    \label{fig:results_main_wo_prefill}
  \end{subfigure}%
  \hfill
    \centering
    \begin{subfigure}[b]{0.48\textwidth}
    \includegraphics[width=\linewidth]{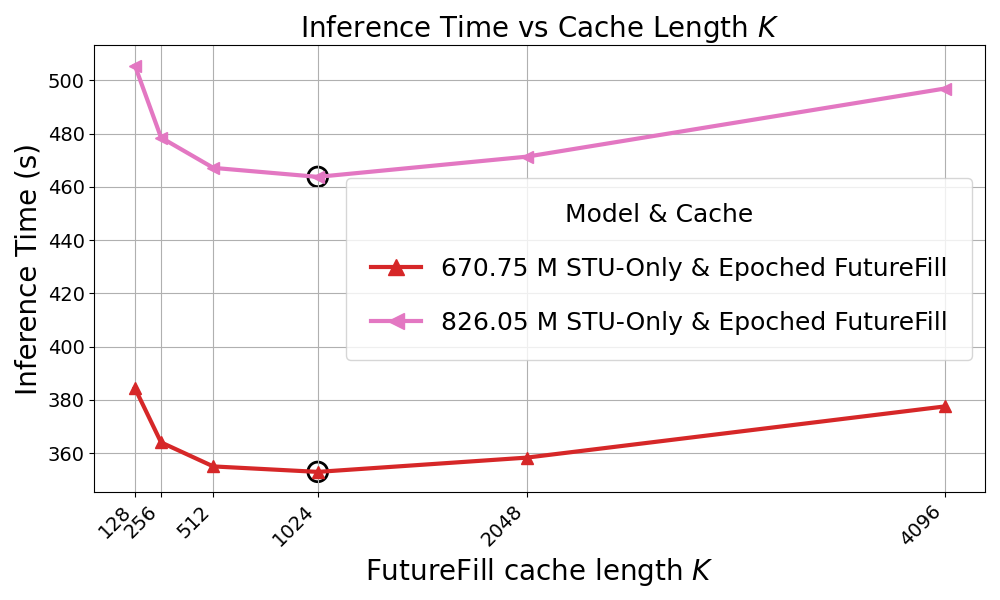}
    \caption{Inference time (in s) for a fixed generation length of \(65{,}536\) tokens without prefill. } 
    \label{fig:results_ablations_K}
  \end{subfigure}
  \caption{Inference time without prefill and ablations on cache length K}
  \label{fig:main_wo_prefill}
\end{figure*}



Figure~\ref{fig:results_main_wo_prefill} reports the inference times for our two largest models ($12$ layers, $1024$-dim input, $4$ attention heads when applicable): the $670.75$M-parameter STU-only model and the $689.48$M-parameter hybrid model. Across all generation lengths, Epoched-FutureFill exhibits clear sub-quadratic scaling, while the baseline shows near-quadratic growth in runtime.

Indeed, as $L$ grows larger, the runtime advantage of Epoched-FutureFill becomes more noticeable. At the largest generation length \(L_{gen}=126{,}976\), we observe a \textbf{1.7× speedup} for STU-only and a \textbf{1.5× speedup} for the hybrid variant, compared to a naive convolution (baseline). More results for different combinations of depth and width are provided in Appendix~\ref{sec:app_real_world_exps_ablations_wo_prefill} and we achieve consistent speedup. 


\paragraph{Ablations With Prefill, Input Prompt Longer than Generation} In the case of prefilling, we measure the prefill time separately from the generation time. We average the prefill time per model over the generation length. Generation time and prefill time are both reported in seconds in Table~\ref{tab:results_prefill_small_bigger_prompt}.

\begin{table}[H]
  \centering
  \scriptsize
  \setlength{\tabcolsep}{2pt}
  \renewcommand{\arraystretch}{0.9}
  \begin{adjustbox}{width=\textwidth,center}
    \begin{tabular}{c c c c c *{3}{c}}
      \toprule
      \multirow{2}{*}{\textbf{Parameter count}} &
      \multirow{2}{*}{\textbf{Input dim}}      &
      \multirow{2}{*}{\textbf{Layer count}}    &
      \multirow{2}{*}{\textbf{Cache Type}}     &
      \multirow{2}{*}{\textbf{Avg Prefill Time}}   &
      \multicolumn{3}{c}{\textbf{Generation length $L_{\text{gen}}$}} \\
      \cmidrule(lr){6-8}
      & & & & & \textbf{4096} & \textbf{8192} & \textbf{16384} \\
      \midrule
      $515.46$M (STU only) & $1024$  & $8$ & Epoched FutureFill & $21.40$  & $13.12 \pm 0.05$ & $26.18 \pm 0.01$ & $52.22 \pm 0.08$\\
      $515.46$M (STU only) & 1024  & 8 & Baseline           & $21.28$ & $25.23$ & $52.31 \pm 0.02$ & $111.92 \pm 0.06$ \\
      $670.75$M (STU only) & 1024 & 12 & Epoched FutureFill & $31.98$  & $19.06 \pm 0.1$ & $37.80 \pm 0.01$ & $75.66 \pm 0.61$ \\
      $670.75$M (STU only) & 1024 & 12 & Baseline           & $37.20$ & $37.21 \pm 0.02$ & $77.15 \pm 0.01$ & $165.13 \pm 0.07$ \\
      \bottomrule
    \end{tabular}
  \end{adjustbox}
  \captionsetup{justification=centering}
  \caption{Inference time (in s) with prefill on an input prompt of length
           $L_{\text{prompt}} = 32{,}768$ tokens. \\ Error bars are $\pm 1$ sample standard deviation over the two post–warmup runs. Times that have a sample standard deviation of < 0.01 s across runs are omitted.}
  \label{tab:results_prefill_small_bigger_prompt}
\end{table}
Figure ~\ref{tab:results_prefill_small_bigger_prompt} reports the inference times for our two largest models ($12$ layers, $1024$-dim input, $4$ attention heads when applicable): the $670.75$M-parameter STU-only model and the $689.48$M-parameter hybrid model. Epoched-FutureFill's decoding is substantially faster with increasing model size. It is even noticeable for smaller generation length when the initial prefill is large, as shown in Table~\ref{tab:results_prefill_small_bigger_prompt}, because the naive baseline recomputes the full prompt convolution at every token. At the largest generation length \(L_{\mathrm{gen}}=16{,}384\), for a prefill length of \(32{,}768\) tokens, we observe a \textbf{2× speedup} for both models, compared to the naive cached convolutions (baseline).

In our experiments, we computed the prefill pass on the same H100 that performed decoding, so very long prompts occasionally triggered GPU OOM errors. In practical deployments, the prefill cache is typically produced on a separate host and then loaded onto the decoding GPU, eliminating this memory bottleneck.

For further examples of depth–width pairings in the case where the input prompt exceeds the generation length, see Appendix~\ref{sec:app_real_world_exps_ablations_with_prefill_bigger_than_generation}. In the less common scenario - when the generation length is far longer than the input prompt - Appendix~\ref{sec:app_real_world_exps_ablations_with_prefill_smaller_than_generation} presents ablations over depth, width and generation length. In every case, the observed speedups are robust and grow steadily as the model scales.

\paragraph{Results of Ablations on $K$, without prefill} 

As the generation length is set to \(65{,}536\) tokens, the optimal $K$ is \(1024\) according to Theorem~\ref{thm:epoch_ff}. It is verified empirically in Figure~\ref{fig:results_ablations_K}: monotonic improvement in the generation time until the optimal $K$.

\section{Conclusions}


In this paper, we address the problem of online sequence prediction in convolutional models and present FutureFill, a novel method that reduces the computational complexity of generating $L$ tokens from $O(L^2)$ to $O(L \log^2 L)$. We introduce a simple but powerful subroutine which enables a flexible runtime/memory trade-off, making our method adaptable to different practical settings. Our experiments confirm the theoretical improvements, demonstrating significant efficiency gains in convolutional sequence generation. These results suggest that FutureFill can serve as an efficient alternative to existing methods, particularly for applications requiring long-sequence modeling.





\bibliography{main}

\begin{thebibliography}{10}

\bibitem{agarwal2023spectral}
Naman Agarwal, Daniel Suo, Xinyi Chen, and Elad Hazan.
\newblock Spectral state space models.
\newblock {\em arXiv preprint arXiv:2312.06837}, 2023.

\bibitem{longformer}
Iz~Beltagy, Matthew~E. Peters, and Arman Cohan.
\newblock Longformer: The long-document transformer.
\newblock {\em arXiv:2004.05150}, 2020.

\bibitem{jax2018github}
James Bradbury, Roy Frostig, Peter Hawkins, Matthew~James Johnson, Chris Leary, Dougal Maclaurin, George Necula, Adam Paszke, Jake Vander{P}las, Skye Wanderman-{M}ilne, and Qiao Zhang.
\newblock {JAX}: composable transformations of {P}ython+{N}um{P}y programs, 2018.

\bibitem{dao2023flashattention2}
Tri Dao.
\newblock Flash{A}ttention-2: Faster attention with better parallelism and work partitioning.
\newblock In {\em International Conference on Learning Representations (ICLR)}, 2024.

\bibitem{dao2022flashattention}
Tri Dao, Daniel~Y. Fu, Stefano Ermon, Atri Rudra, and Christopher R{\'e}.
\newblock Flash{A}ttention: Fast and memory-efficient exact attention with {IO}-awareness.
\newblock In {\em Advances in Neural Information Processing Systems (NeurIPS)}, 2022.

\bibitem{dao2022hungry}
Tri Dao, Daniel~Y Fu, Khaled~K Saab, Armin~W Thomas, Atri Rudra, and Christopher R{\'e}.
\newblock Hungry hungry hippos: Towards language modeling with state space models.
\newblock {\em arXiv preprint arXiv:2212.14052}, 2022.

\bibitem{fu2024monarch}
Dan Fu, Simran Arora, Jessica Grogan, Isys Johnson, Evan~Sabri Eyuboglu, Armin Thomas, Benjamin Spector, Michael Poli, Atri Rudra, and Christopher R{\'e}.
\newblock Monarch mixer: A simple sub-quadratic gemm-based architecture.
\newblock {\em Advances in Neural Information Processing Systems}, 36, 2024.

\bibitem{fu2023simple}
Daniel~Y Fu, Elliot~L Epstein, Eric Nguyen, Armin~W Thomas, Michael Zhang, Tri Dao, Atri Rudra, and Christopher R{\'e}.
\newblock Simple hardware-efficient long convolutions for sequence modeling.
\newblock {\em arXiv preprint arXiv:2302.06646}, 2023.

\bibitem{gu2023mamba}
Albert Gu and Tri Dao.
\newblock Mamba: Linear-time sequence modeling with selective state spaces.
\newblock {\em arXiv preprint arXiv:2312.00752}, 2023.

\bibitem{NEURIPS2020hippo}
Albert Gu, Tri Dao, Stefano Ermon, Atri Rudra, and Christopher R\'{e}.
\newblock Hippo: Recurrent memory with optimal polynomial projections.
\newblock In H.~Larochelle, M.~Ranzato, R.~Hadsell, M.F. Balcan, and H.~Lin, editors, {\em Advances in Neural Information Processing Systems}, volume~33, pages 1474--1487. Curran Associates, Inc., 2020.

\bibitem{gu2021efficiently}
Albert Gu, Karan Goel, and Christopher R{\'e}.
\newblock Efficiently modeling long sequences with structured state spaces.
\newblock {\em arXiv preprint arXiv:2111.00396}, 2021.

\bibitem{gu2021combining}
Albert Gu, Isys Johnson, Karan Goel, Khaled Saab, Tri Dao, Atri Rudra, and Christopher R{\'e}.
\newblock Combining recurrent, convolutional, and continuous-time models with linear state space layers.
\newblock {\em Advances in neural information processing systems}, 34:572--585, 2021.

\bibitem{gupta2022diagonal}
Ankit Gupta, Albert Gu, and Jonathan Berant.
\newblock Diagonal state spaces are as effective as structured state spaces.
\newblock In Alice~H. Oh, Alekh Agarwal, Danielle Belgrave, and Kyunghyun Cho, editors, {\em Advances in Neural Information Processing Systems}, 2022.

\bibitem{hazan2016introduction}
Elad Hazan et~al.
\newblock Introduction to online convex optimization.
\newblock {\em Foundations and Trends{\textregistered} in Optimization}, 2(3-4):157--325, 2016.

\bibitem{hazan2018spectral}
Elad Hazan, Holden Lee, Karan Singh, Cyril Zhang, and Yi~Zhang.
\newblock Spectral filtering for general linear dynamical systems.
\newblock In {\em Advances in Neural Information Processing Systems}, pages 4634--4643, 2018.

\bibitem{hazan2022introduction}
Elad Hazan and Karan Singh.
\newblock Introduction to online nonstochastic control.
\newblock {\em arXiv preprint arXiv:2211.09619}, 2022.

\bibitem{hazan2017learning}
Elad Hazan, Karan Singh, and Cyril Zhang.
\newblock Learning linear dynamical systems via spectral filtering.
\newblock In {\em Advances in Neural Information Processing Systems}, pages 6702--6712, 2017.

\bibitem{kvquant}
Coleman Richard~Charles Hooper, Sehoon Kim, Hiva Mohammadzadeh, Michael~W. Mahoney, Sophia Shao, Kurt Keutzer, and Amir Gholami.
\newblock {KVQ}uant: Towards 10 million context length {LLM} inference with {KV} cache quantization.
\newblock In {\em The Thirty-eighth Annual Conference on Neural Information Processing Systems}, 2024.

\bibitem{jouppi2020domain}
Norman~P Jouppi, Doe~Hyun Yoon, George Kurian, Sheng Li, Nishant Patil, James Laudon, Cliff Young, and David Patterson.
\newblock A domain-specific supercomputer for training deep neural networks.
\newblock {\em Communications of the ACM}, 63(7):67--78, 2020.

\bibitem{li2022makes}
Yuhong Li, Tianle Cai, Yi~Zhang, Deming Chen, and Debadeepta Dey.
\newblock What makes convolutional models great on long sequence modeling?
\newblock {\em arXiv preprint arXiv:2210.09298}, 2022.

\bibitem{liu2024flash}
Y~Isabel Liu, Windsor Nguyen, Yagiz Devre, Evan Dogariu, Anirudha Majumdar, and Elad Hazan.
\newblock Flash stu: Fast spectral transform units.
\newblock {\em arXiv preprint arXiv:2409.10489}, 2024.

\bibitem{massaroli2024laughing}
Stefano Massaroli, Michael Poli, Dan Fu, Hermann Kumbong, Rom Parnichkun, David Romero, Aman Timalsina, Quinn McIntyre, Beidi Chen, Atri Rudra, et~al.
\newblock Laughing hyena distillery: Extracting compact recurrences from convolutions.
\newblock {\em Advances in Neural Information Processing Systems}, 36, 2024.

\bibitem{mehta2023long}
Harsh Mehta, Ankit Gupta, Ashok Cutkosky, and Behnam Neyshabur.
\newblock Long range language modeling via gated state spaces.
\newblock In {\em The Eleventh International Conference on Learning Representations}, 2023.

\bibitem{oncescu2024flash}
Costin-Andrei Oncescu, Sanket Purandare, Stratos Idreos, and Sham Kakade.
\newblock Flash inference: Near linear time inference for long convolution sequence models and beyond.
\newblock {\em arXiv preprint arXiv:2410.12982}, 2024.

\bibitem{orvieto2023resurrecting}
Antonio Orvieto, Samuel~L Smith, Albert Gu, Anushan Fernando, Caglar Gulcehre, Razvan Pascanu, and Soham De.
\newblock Resurrecting recurrent neural networks for long sequences.
\newblock {\em arXiv preprint arXiv:2303.06349}, 2023.

\bibitem{pytorch}
Adam Paszke, Sam Gross, Francisco Massa, Adam Lerer, James Bradbury, Gregory Chanan, Trevor Killeen, Zeming Lin, Natalia Gimelshein, Luca Antiga, Alban Desmaison, Andreas Kopf, Edward Yang, Zachary DeVito, Martin Raison, Alykhan Tejani, Sasank Chilamkurthy, Benoit Steiner, Lu~Fang, Junjie Bai, and Soumith Chintala.
\newblock Pytorch: An imperative style, high-performance deep learning library.
\newblock {\em Advances in Neural Information Processing Systems 32}, pages 8024--8035, 2019.

\bibitem{poli2023hyena}
Michael Poli, Stefano Massaroli, Eric Nguyen, Daniel~Y Fu, Tri Dao, Stephen Baccus, Yoshua Bengio, Stefano Ermon, and Christopher R{\'e}.
\newblock Hyena hierarchy: Towards larger convolutional language models.
\newblock In {\em International Conference on Machine Learning}, pages 28043--28078. PMLR, 2023.

\bibitem{press2022trainshorttestlong}
Ofir Press, Noah~A. Smith, and Mike Lewis.
\newblock Train short, test long: Attention with linear biases enables input length extrapolation, 2022.

\bibitem{shazeer2020glu}
Noam Shazeer.
\newblock Glu variants improve transformer.
\newblock {\em arXiv preprint arXiv:2002.05202}, 2020.

\bibitem{pmlr-v202-shi23f}
Jiaxin Shi, Ke~Alexander Wang, and Emily Fox.
\newblock Sequence modeling with multiresolution convolutional memory.
\newblock In {\em International Conference on Machine Learning}, pages 31312--31327. PMLR, 2023.

\bibitem{shi2023sequencemodelingmultiresolutionconvolutional}
Jiaxin Shi, Ke~Alexander Wang, and Emily~B. Fox.
\newblock Sequence modeling with multiresolution convolutional memory, 2023.

\bibitem{smith2023simplified}
Jimmy~T.H. Smith, Andrew Warrington, and Scott Linderman.
\newblock Simplified state space layers for sequence modeling.
\newblock In {\em The Eleventh International Conference on Learning Representations}, 2023.

\bibitem{vaswani2017attention}
Ashish Vaswani, Noam Shazeer, Niki Parmar, Jakob Uszkoreit, Llion Jones, Aidan~N Gomez, {\L}ukasz Kaiser, and Illia Polosukhin.
\newblock Attention is all you need.
\newblock {\em Advances in neural information processing systems}, 30, 2017.

\bibitem{cachecompression}
Tianyi Zhang, Jonah~Wonkyu Yi, Zhaozhuo Xu, and Anshumali Shrivastava.
\newblock {KV} cache is 1 bit per channel: Efficient large language model inference with coupled quantization.
\newblock In {\em The Thirty-eighth Annual Conference on Neural Information Processing Systems}, 2024.

\end{thebibliography}
\bibliographystyle{plain}


\newpage

\newif\ifnotpreprint
\ifnotpreprint
\section*{NeurIPS Paper Checklist}

The checklist is designed to encourage best practices for responsible machine learning research, addressing issues of reproducibility, transparency, research ethics, and societal impact. Do not remove the checklist: {\bf The papers not including the checklist will be desk rejected.} The checklist should follow the references and follow the (optional) supplemental material.  The checklist does NOT count towards the page
limit. 

Please read the checklist guidelines carefully for information on how to answer these questions. For each question in the checklist:
\begin{itemize}
    \item You should answer \answerYes{}, \answerNo{}, or \answerNA{}.
    \item \answerNA{} means either that the question is Not Applicable for that particular paper or the relevant information is Not Available.
    \item Please provide a short (1–2 sentence) justification right after your answer (even for NA). 
\end{itemize}

{\bf The checklist answers are an integral part of your paper submission.} They are visible to the reviewers, area chairs, senior area chairs, and ethics reviewers. You will be asked to also include it (after eventual revisions) with the final version of your paper, and its final version will be published with the paper.

The reviewers of your paper will be asked to use the checklist as one of the factors in their evaluation. While "\answerYes{}" is generally preferable to "\answerNo{}", it is perfectly acceptable to answer "\answerNo{}" provided a proper justification is given (e.g., "error bars are not reported because it would be too computationally expensive" or "we were unable to find the license for the dataset we used"). In general, answering "\answerNo{}" or "\answerNA{}" is not grounds for rejection. While the questions are phrased in a binary way, we acknowledge that the true answer is often more nuanced, so please just use your best judgment and write a justification to elaborate. All supporting evidence can appear either in the main paper or the supplemental material, provided in appendix. If you answer \answerYes{} to a question, in the justification please point to the section(s) where related material for the question can be found.

IMPORTANT, please:
\begin{itemize}
    \item {\bf Delete this instruction block, but keep the section heading ``NeurIPS Paper Checklist"},
    \item  {\bf Keep the checklist subsection headings, questions/answers and guidelines below.}
    \item {\bf Do not modify the questions and only use the provided macros for your answers}.
\end{itemize}


\begin{enumerate}

\item {\bf Claims}
    \item[] Question: Do the main claims made in the abstract and introduction accurately reflect the paper's contributions and scope?
    \item[] Answer: \answerYes{} 
    \item[] Justification: See sections 3, 4, 5 for theoretical and experimental justifications.
    \item[] Guidelines:
    \begin{itemize}
        \item The answer NA means that the abstract and introduction do not include the claims made in the paper.
        \item The abstract and/or introduction should clearly state the claims made, including the contributions made in the paper and important assumptions and limitations. A No or NA answer to this question will not be perceived well by the reviewers. 
        \item The claims made should match theoretical and experimental results, and reflect how much the results can be expected to generalize to other settings. 
        \item It is fine to include aspirational goals as motivation as long as it is clear that these goals are not attained by the paper. 
    \end{itemize}

\item {\bf Limitations}
    \item[] Question: Does the paper discuss the limitations of the work performed by the authors?
    \item[] Answer: \answerYes{} 
    \item[] Justification: See Section 3. 
    \item[] Guidelines:
    \begin{itemize}
        \item The answer NA means that the paper has no limitation while the answer No means that the paper has limitations, but those are not discussed in the paper. 
        \item The authors are encouraged to create a separate "Limitations" section in their paper.
        \item The paper should point out any strong assumptions and how robust the results are to violations of these assumptions (e.g., independence assumptions, noiseless settings, model well-specification, asymptotic approximations only holding locally). The authors should reflect on how these assumptions might be violated in practice and what the implications would be.
        \item The authors should reflect on the scope of the claims made, e.g., if the approach was only tested on a few datasets or with a few runs. In general, empirical results often depend on implicit assumptions, which should be articulated.
        \item The authors should reflect on the factors that influence the performance of the approach. For example, a facial recognition algorithm may perform poorly when image resolution is low or images are taken in low lighting. Or a speech-to-text system might not be used reliably to provide closed captions for online lectures because it fails to handle technical jargon.
        \item The authors should discuss the computational efficiency of the proposed algorithms and how they scale with dataset size.
        \item If applicable, the authors should discuss possible limitations of their approach to address problems of privacy and fairness.
        \item While the authors might fear that complete honesty about limitations might be used by reviewers as grounds for rejection, a worse outcome might be that reviewers discover limitations that aren't acknowledged in the paper. The authors should use their best judgment and recognize that individual actions in favor of transparency play an important role in developing norms that preserve the integrity of the community. Reviewers will be specifically instructed to not penalize honesty concerning limitations.
    \end{itemize}

\item {\bf Theory assumptions and proofs}
    \item[] Question: For each theoretical result, does the paper provide the full set of assumptions and a complete (and correct) proof?
    \item[] Answer: \answerYes{} 
    \item[] Justification: See sections 3, 4, and the appendix for complete proofs. 
    \item[] Guidelines:
    \begin{itemize}
        \item The answer NA means that the paper does not include theoretical results. 
        \item All the theorems, formulas, and proofs in the paper should be numbered and cross-referenced.
        \item All assumptions should be clearly stated or referenced in the statement of any theorems.
        \item The proofs can either appear in the main paper or the supplemental material, but if they appear in the supplemental material, the authors are encouraged to provide a short proof sketch to provide intuition. 
        \item Inversely, any informal proof provided in the core of the paper should be complemented by formal proofs provided in appendix or supplemental material.
        \item Theorems and Lemmas that the proof relies upon should be properly referenced. 
    \end{itemize}

    \item {\bf Experimental result reproducibility}
    \item[] Question: Does the paper fully disclose all the information needed to reproduce the main experimental results of the paper to the extent that it affects the main claims and/or conclusions of the paper (regardless of whether the code and data are provided or not)?
    \item[] Answer: \answerYes{} 
    \item[] Justification: See section 5 and the appendix for experimental details. 
    \item[] Guidelines:
    \begin{itemize}
        \item The answer NA means that the paper does not include experiments.
        \item If the paper includes experiments, a No answer to this question will not be perceived well by the reviewers: Making the paper reproducible is important, regardless of whether the code and data are provided or not.
        \item If the contribution is a dataset and/or model, the authors should describe the steps taken to make their results reproducible or verifiable. 
        \item Depending on the contribution, reproducibility can be accomplished in various ways. For example, if the contribution is a novel architecture, describing the architecture fully might suffice, or if the contribution is a specific model and empirical evaluation, it may be necessary to either make it possible for others to replicate the model with the same dataset, or provide access to the model. In general. releasing code and data is often one good way to accomplish this, but reproducibility can also be provided via detailed instructions for how to replicate the results, access to a hosted model (e.g., in the case of a large language model), releasing of a model checkpoint, or other means that are appropriate to the research performed.
        \item While NeurIPS does not require releasing code, the conference does require all submissions to provide some reasonable avenue for reproducibility, which may depend on the nature of the contribution. For example
        \begin{enumerate}
            \item If the contribution is primarily a new algorithm, the paper should make it clear how to reproduce that algorithm.
            \item If the contribution is primarily a new model architecture, the paper should describe the architecture clearly and fully.
            \item If the contribution is a new model (e.g., a large language model), then there should either be a way to access this model for reproducing the results or a way to reproduce the model (e.g., with an open-source dataset or instructions for how to construct the dataset).
            \item We recognize that reproducibility may be tricky in some cases, in which case authors are welcome to describe the particular way they provide for reproducibility. In the case of closed-source models, it may be that access to the model is limited in some way (e.g., to registered users), but it should be possible for other researchers to have some path to reproducing or verifying the results.
        \end{enumerate}
    \end{itemize}

\item {\bf Open access to data and code}
    \item[] Question: Does the paper provide open access to the data and code, with sufficient instructions to faithfully reproduce the main experimental results, as described in supplemental material?
    \item[] Answer: \answerNo{} 
    \item[] Justification: If accepted, we will release code for the camera-ready version.
    \item[] Guidelines:
    \begin{itemize}
        \item The answer NA means that paper does not include experiments requiring code.
        \item Please see the NeurIPS code and data submission guidelines (\url{https://nips.cc/public/guides/CodeSubmissionPolicy}) for more details.
        \item While we encourage the release of code and data, we understand that this might not be possible, so “No” is an acceptable answer. Papers cannot be rejected simply for not including code, unless this is central to the contribution (e.g., for a new open-source benchmark).
        \item The instructions should contain the exact command and environment needed to run to reproduce the results. See the NeurIPS code and data submission guidelines (\url{https://nips.cc/public/guides/CodeSubmissionPolicy}) for more details.
        \item The authors should provide instructions on data access and preparation, including how to access the raw data, preprocessed data, intermediate data, and generated data, etc.
        \item The authors should provide scripts to reproduce all experimental results for the new proposed method and baselines. If only a subset of experiments are reproducible, they should state which ones are omitted from the script and why.
        \item At submission time, to preserve anonymity, the authors should release anonymized versions (if applicable).
        \item Providing as much information as possible in supplemental material (appended to the paper) is recommended, but including URLs to data and code is permitted.
    \end{itemize}

\item {\bf Experimental setting/details}
    \item[] Question: Does the paper specify all the training and test details (e.g., data splits, hyperparameters, how they were chosen, type of optimizer, etc.) necessary to understand the results?
    \item[] Answer: \answerYes{} 
    \item[] Justification: See section 5 and the appendix. 
    \item[] Guidelines:
    \begin{itemize}
        \item The answer NA means that the paper does not include experiments.
        \item The experimental setting should be presented in the core of the paper to a level of detail that is necessary to appreciate the results and make sense of them.
        \item The full details can be provided either with the code, in appendix, or as supplemental material.
    \end{itemize}

\item {\bf Experiment statistical significance}
    \item[] Question: Does the paper report error bars suitably and correctly defined or other appropriate information about the statistical significance of the experiments?
    \item[] Answer: \answerYes{} 
    \item[] Justification: See section 5.2.2 
    \item[] Guidelines:
    \begin{itemize}
        \item The answer NA means that the paper does not include experiments.
        \item The authors should answer "Yes" if the results are accompanied by error bars, confidence intervals, or statistical significance tests, at least for the experiments that support the main claims of the paper.
        \item The factors of variability that the error bars are capturing should be clearly stated (for example, train/test split, initialization, random drawing of some parameter, or overall run with given experimental conditions).
        \item The method for calculating the error bars should be explained (closed form formula, call to a library function, bootstrap, etc.)
        \item The assumptions made should be given (e.g., Normally distributed errors).
        \item It should be clear whether the error bar is the standard deviation or the standard error of the mean.
        \item It is OK to report 1-sigma error bars, but one should state it. The authors should preferably report a 2-sigma error bar than state that they have a 96\% CI, if the hypothesis of Normality of errors is not verified.
        \item For asymmetric distributions, the authors should be careful not to show in tables or figures symmetric error bars that would yield results that are out of range (e.g. negative error rates).
        \item If error bars are reported in tables or plots, The authors should explain in the text how they were calculated and reference the corresponding figures or tables in the text.
    \end{itemize}

\item {\bf Experiments compute resources}
    \item[] Question: For each experiment, does the paper provide sufficient information on the computer resources (type of compute workers, memory, time of execution) needed to reproduce the experiments?
    \item[] Answer: \answerYes{} 
    \item[] Justification: See section 5 and the appendix. 
    \item[] Guidelines:
    \begin{itemize}
        \item The answer NA means that the paper does not include experiments.
        \item The paper should indicate the type of compute workers CPU or GPU, internal cluster, or cloud provider, including relevant memory and storage.
        \item The paper should provide the amount of compute required for each of the individual experimental runs as well as estimate the total compute. 
        \item The paper should disclose whether the full research project required more compute than the experiments reported in the paper (e.g., preliminary or failed experiments that didn't make it into the paper). 
    \end{itemize}
    
\item {\bf Code of ethics}
    \item[] Question: Does the research conducted in the paper conform, in every respect, with the NeurIPS Code of Ethics \url{https://neurips.cc/public/EthicsGuidelines}?
    \item[] Answer: \answerYes{} 
    \item[] Justification: 
    \item[] Guidelines:
    \begin{itemize}
        \item The answer NA means that the authors have not reviewed the NeurIPS Code of Ethics.
        \item If the authors answer No, they should explain the special circumstances that require a deviation from the Code of Ethics.
        \item The authors should make sure to preserve anonymity (e.g., if there is a special consideration due to laws or regulations in their jurisdiction).
    \end{itemize}

\item {\bf Broader impacts}
    \item[] Question: Does the paper discuss both potential positive societal impacts and negative societal impacts of the work performed?
    \item[] Answer: \answerNA{} 
    \item[] Justification: This work is foundational research, and presents generic algorithms for convolutional sequence prediction models. It is not tied to a particular application. 
    \item[] Guidelines:
    \begin{itemize}
        \item The answer NA means that there is no societal impact of the work performed.
        \item If the authors answer NA or No, they should explain why their work has no societal impact or why the paper does not address societal impact.
        \item Examples of negative societal impacts include potential malicious or unintended uses (e.g., disinformation, generating fake profiles, surveillance), fairness considerations (e.g., deployment of technologies that could make decisions that unfairly impact specific groups), privacy considerations, and security considerations.
        \item The conference expects that many papers will be foundational research and not tied to particular applications, let alone deployments. However, if there is a direct path to any negative applications, the authors should point it out. For example, it is legitimate to point out that an improvement in the quality of generative models could be used to generate deepfakes for disinformation. On the other hand, it is not needed to point out that a generic algorithm for optimizing neural networks could enable people to train models that generate Deepfakes faster.
        \item The authors should consider possible harms that could arise when the technology is being used as intended and functioning correctly, harms that could arise when the technology is being used as intended but gives incorrect results, and harms following from (intentional or unintentional) misuse of the technology.
        \item If there are negative societal impacts, the authors could also discuss possible mitigation strategies (e.g., gated release of models, providing defenses in addition to attacks, mechanisms for monitoring misuse, mechanisms to monitor how a system learns from feedback over time, improving the efficiency and accessibility of ML).
    \end{itemize}
    
\item {\bf Safeguards}
    \item[] Question: Does the paper describe safeguards that have been put in place for responsible release of data or models that have a high risk for misuse (e.g., pretrained language models, image generators, or scraped datasets)?
    \item[] Answer: \answerNA{}{} 
    \item[] Justification: The paper does not contain new data or models. 
    \item[] Guidelines:
    \begin{itemize}
        \item The answer NA means that the paper poses no such risks.
        \item Released models that have a high risk for misuse or dual-use should be released with necessary safeguards to allow for controlled use of the model, for example by requiring that users adhere to usage guidelines or restrictions to access the model or implementing safety filters. 
        \item Datasets that have been scraped from the Internet could pose safety risks. The authors should describe how they avoided releasing unsafe images.
        \item We recognize that providing effective safeguards is challenging, and many papers do not require this, but we encourage authors to take this into account and make a best faith effort.
    \end{itemize}

\item {\bf Licenses for existing assets}
    \item[] Question: Are the creators or original owners of assets (e.g., code, data, models), used in the paper, properly credited and are the license and terms of use explicitly mentioned and properly respected?
    \item[] Answer: \answerYes{} 
    \item[] Justification: See paper and appendix.
    \item[] Guidelines:
    \begin{itemize}
        \item The answer NA means that the paper does not use existing assets.
        \item The authors should cite the original paper that produced the code package or dataset.
        \item The authors should state which version of the asset is used and, if possible, include a URL.
        \item The name of the license (e.g., CC-BY 4.0) should be included for each asset.
        \item For scraped data from a particular source (e.g., website), the copyright and terms of service of that source should be provided.
        \item If assets are released, the license, copyright information, and terms of use in the package should be provided. For popular datasets, \url{paperswithcode.com/datasets} has curated licenses for some datasets. Their licensing guide can help determine the license of a dataset.
        \item For existing datasets that are re-packaged, both the original license and the license of the derived asset (if it has changed) should be provided.
        \item If this information is not available online, the authors are encouraged to reach out to the asset's creators.
    \end{itemize}

\item {\bf New assets}
    \item[] Question: Are new assets introduced in the paper well documented and is the documentation provided alongside the assets?
    \item[] Answer: \answerNA{} 
    \item[] Justification: No datasets or models are released in this paper.
    \item[] Guidelines:
    \begin{itemize}
        \item The answer NA means that the paper does not release new assets.
        \item Researchers should communicate the details of the dataset/code/model as part of their submissions via structured templates. This includes details about training, license, limitations, etc. 
        \item The paper should discuss whether and how consent was obtained from people whose asset is used.
        \item At submission time, remember to anonymize your assets (if applicable). You can either create an anonymized URL or include an anonymized zip file.
    \end{itemize}

\item {\bf Crowdsourcing and research with human subjects}
    \item[] Question: For crowdsourcing experiments and research with human subjects, does the paper include the full text of instructions given to participants and screenshots, if applicable, as well as details about compensation (if any)? 
    \item[] Answer: \answerNA{} 
    \item[] Justification: The paper does not involve human subjects or crowdsourcing. 
    \item[] Guidelines:
    \begin{itemize}
        \item The answer NA means that the paper does not involve crowdsourcing nor research with human subjects.
        \item Including this information in the supplemental material is fine, but if the main contribution of the paper involves human subjects, then as much detail as possible should be included in the main paper. 
        \item According to the NeurIPS Code of Ethics, workers involved in data collection, curation, or other labor should be paid at least the minimum wage in the country of the data collector. 
    \end{itemize}

\item {\bf Institutional review board (IRB) approvals or equivalent for research with human subjects}
    \item[] Question: Does the paper describe potential risks incurred by study participants, whether such risks were disclosed to the subjects, and whether Institutional Review Board (IRB) approvals (or an equivalent approval/review based on the requirements of your country or institution) were obtained?
    \item[] Answer: \answerNA{} 
    \item[] Justification: No human studies involved. 
    \item[] Guidelines:
    \begin{itemize}
        \item The answer NA means that the paper does not involve crowdsourcing nor research with human subjects.
        \item Depending on the country in which research is conducted, IRB approval (or equivalent) may be required for any human subjects research. If you obtained IRB approval, you should clearly state this in the paper. 
        \item We recognize that the procedures for this may vary significantly between institutions and locations, and we expect authors to adhere to the NeurIPS Code of Ethics and the guidelines for their institution. 
        \item For initial submissions, do not include any information that would break anonymity (if applicable), such as the institution conducting the review.
    \end{itemize}

\item {\bf Declaration of LLM usage}
    \item[] Question: Does the paper describe the usage of LLMs if it is an important, original, or non-standard component of the core methods in this research? Note that if the LLM is used only for writing, editing, or formatting purposes and does not impact the core methodology, scientific rigorousness, or originality of the research, declaration is not required.
    \item[] Answer: \answerNA{} 
    \item[] Guidelines:
    \begin{itemize}
        \item The answer NA means that the core method development in this research does not involve LLMs as any important, original, or non-standard components.
        \item Please refer to our LLM policy (\url{https://neurips.cc/Conferences/2025/LLM}) for what should or should not be described.
    \end{itemize}

\end{enumerate}

\fi
\newpage
\appendix

\section{Additional Implementation Details and Ablations}
\label{sec:additional_implementation_details}

\subsection{Additional Implementation Details}
\label{sec:app_real_world_exps_impl_details}

All experiments were run on a single NVIDIA H100 GPU. All timings were measured over three independent runs. For each configuration, we discard the first run and compute the mean and sample standard deviation over the remaining two. Error bars represent these sample standard deviations and are omitted whenever the standard deviation is below 0.01 s.

We employ the FlashSTU-T architecture from \cite{liu2024flash}. Our ablations use either a \emph{hybrid} variant—alternating between STU-T blocks and sliding-window attention layers—or a fully convolutional \emph{STU-T only} variant. Inputs are tokenized with the o200k\_base tokenizer and embedded with tied weights between the input embedding and output unembedding matrices. We add special tokens (\texttt{<|endofprompt|>}, \texttt{<|endoftext|>}) to signal generation boundaries. 

For our attention layers, we leverage FlashAttention v2 \cite{dao2022flashattention, dao2023flashattention2} with ALiBi positional encodings \cite{press2022trainshorttestlong}. Each MLP layer has a hidden dimension $12\times$ the input dimension.

Let $x_1, x_2, \dots, x_\ell \in \mathbb{R}^{d}$ be the inputs the Spectral Transform Unit (STU) layer. the STU leverages $k = 48$ spectral filters $\phi_1, \dots, \phi_k \in \mathbb{R}^{L}$, with $L \geq \ell$, to compute $U_{j} = \sum_{i = 0}^{\ell-1} x_{\ell - i} \cdot \phi_j(i) \in \mathbb{R}^{d}$. The STU maintains learned parameters $M_j \in \mathbb{R}^{d \times d}$ to compute output $x_{\ell + 1}=
\sum_{j=1}^k M_j\,U_{j}$. Thus, the STU layer involves $k \cdot d$ convolutions per auto-regressive generation. The STU-T represents the Spectral Transform Unit with the tensordot approximation as introduced in  \cite{liu2024flash}. For an STU with tensordot approximation, rather than maintaining $k$ matrices $M_j \in \mathbb{R}^{d \times d}$, concatenated as $M \in \mathbb{R}^{d \times k \times d}$, we approximate $M \approx M^{(1)} \times M^{(2)}$, with $M^{(1)} \in \mathbb{R}^{k \times d}$ and $M^{(2)} \in \mathbb{R}^{d \times d}$. This allows for a more convenient computation, as we can compute $x_{\ell+1} = \sum_{i = 0}^{\ell-1} (M^{(2)}x_{\ell - i}) \odot M_{\textrm{filters}}[i]$, where $M_{\textrm{filters}} =  [\phi_1, \dots, \phi_k]^{\top}M^{(1)} \in \mathbb{R}^{L \times d}$ and $\odot$ denotes element-wise product. Thus, the STU with tensordot approximation requires only $d$ convolutions per token. Although the STU performance guarantees on learning linear dynamical systems (Appendix \ref{sec:con_seq_deets}) no longer holds for the tensordot approximation, the STU-T retains practical performance as shown in \cite{liu2024flash}.


Convolutions within each STU-T block are implemented via an FFT-based operation: given batched inputs \(v\in\mathbb{R}^{B\times t_1}\) and filters \(w\in\mathbb{R}^{B\times t_2}\), we zero-pad both to the next power-of-two length \(n_{\mathrm{FFT}}\) when memory allows, perform real-valued FFTs (\(\mathrm{rfft}\)), multiply pointwise in the frequency domain, and invert back with \(\mathrm{irfft}\). This yields the causal convolution output in \(\mathcal{O}(n_{\mathrm{FFT}}\log n_{\mathrm{FFT}})\) time, which is significantly faster than direct convolution for long sequences. Note that, for memory reasons, we do this padding to the next power-of-two only for FFT sizes below $131072$. In our PyTorch implementation \cite{pytorch}, we cast inputs to \texttt{float32} for FFT compatibility, and finally truncate the result to the causal FutureFill window before casting back to the original dtype. 

\begin{figure}[H]
    \centering
\includegraphics[width=0.3\linewidth]{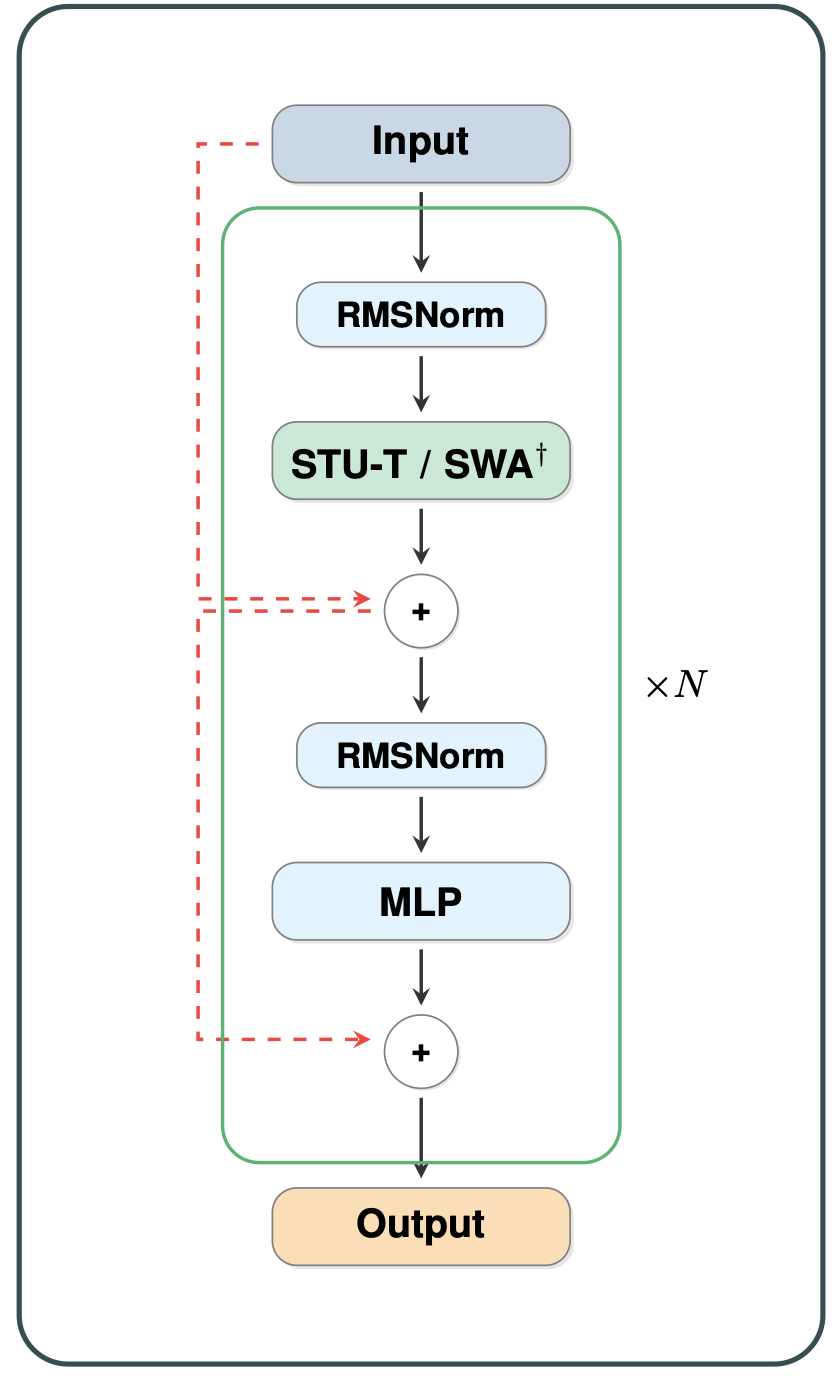}
    \caption{FlashSTU-T architecture. Figure from \cite{liu2024flash}.}
    \label{fig:flash_stu}
\end{figure}

\textbf{FlashSTU} \cite{liu2024flash} source code is publicly available at \url{https://github.com/hazan-lab/flash-stu}. It is released under the Apache License, Version 2.0, which permits unrestricted use, modification, and distribution with attribution. \\
\textbf{PyTorch} \cite{pytorch} is used for the implementations and associated experiments, in the current section ~\ref{sec:experiments_on_conv_lms} of the main paper and section  ~\ref{sec:additional_implementation_details} of the Appendix. PyTorch's code is hosted at \url{https://github.com/pytorch/pytorch} (tag \texttt{v2.0.0}) and distributed under the BSD 3-Clause License, allowing use and redistribution with minimal restrictions. \\
\textbf{FlashAttention} \cite{dao2022flashattention, dao2023flashattention2} is publicly available at \url{https://github.com/Dao-AILab/flash-attention}. As is PyTorch, FlashAttention is distributed under a BSD 3-Clause License, allowing use and redistribution with minimal restrictions.

\subsection{Additional Ablations, Without Prefill}
\label{sec:app_real_world_exps_ablations_wo_prefill}

\begin{table}[H]
  \centering
  \scriptsize                   
  \setlength{\tabcolsep}{2pt}    
  \renewcommand{\arraystretch}{0.9}
  \begin{adjustbox}{width=\textwidth,center, scale = 1}
    \begin{tabular}{c c c c c *{8}{c}}
      \toprule
      \multirow{2}{*}{\textbf{Parameter count}}
        & \multirow{2}{*}{\textbf{Input dim}}
        & \multirow{2}{*}{\textbf{Layer count}}
        & \multirow{2}{*}{\textbf{Cache Type}}
        & \multicolumn{6}{c}{\textbf{Generation length $L_{gen}$}} \\
      \cmidrule(lr){5-10}
      & & &
      & \textbf{4096} & \textbf{8192}
      & \textbf{16384} & \textbf{32768}
      & \textbf{65536} & \textbf{126976} \\
      \midrule
      $160.71$M & $512$ & $6$ & Epoched FutureFill 
        & $9.31 \pm 0.01$ & $18.42 \pm 0.03$ & $37.08 \pm 0.05$ & $74.05 \pm 0.30$ & $147.74 \pm 0.15$ & $322.28 \pm 0.06$ \\
      $180.13$M & $512$ & $8$ & Epoched FutureFill 
        & $11.85 \pm 0.01$ & $23.60 \pm 0.05$ & $46.87 \pm 0.03$ & $93.29 \pm 0.08$ & $187.12 \pm 0.55$ & $416.87 \pm 0.15$ \\
      $218.98$M & $512$ & $12$ & Epoched FutureFill 
        & $17.02 \pm 0.07$ & $33.85 \pm 0.02$ & $68.20 \pm 0.09$ & $135.70 \pm 0.33$ & $272.35 \pm 0.21$ & $611.70 \pm 0.57$ \\
      $357.62$M & $896$ & $6$ & Epoched FutureFill 
        & $9.35 \pm 0.01$ & $18.60 \pm 0.04$ & $37.33 \pm 0.07$ & $ 74.70 \pm 0.15$ & $170.16 \pm 0.01$ & $468.34 \pm 0.04$ \\
      $417.08$M & $896$ & $8$ & Epoched FutureFill 
        & $11.77 \pm 0.01$ & $23.58 \pm 0.03$ & $47.12 \pm 0.05$ & $94.03 \pm 0.71$ & $219.96 \pm 0.20$ & $613.40 \pm 0.09$ \\
      $535.99$M & $896$ & $12$ & Epoched FutureFill
        & $17.06 \pm 0.04$ & $34.12 \pm 0.02$ & $67.98 \pm 0.12$ & $137.01 \pm 0.20$ & $321.47 \pm 0.08$ & $904.04 \pm 0.28$ \\
      $437.81$M & $1024$ & $6$ & Epoched FutureFill 
        & $9.30 \pm 0.02$ & $18.54 \pm 0.02$ & $36.84 \pm 0.04$ & $ 75.21 \pm 0.23$ & $182.49 \pm 0.31$ & $521.45 \pm 0.04$ \\
      $515.46$M & $1024$ & $8$ & Epoched FutureFill 
        & $11.73$ & $23.31 \pm 0.02$ & $46.61 \pm 0.09$ & $96.42 \pm 0.38$ & $237.20$ & $684.54 \pm 0.10$ \\
      $670.75$M & $1024$ & $12$ & Epoched FutureFill
        & $17.07 \pm 0.01$ & $34.07$ & $68.07 \pm 0.15$ & $140.32 \pm 0.16$ & $347.01 \pm 0.09$ & $1009.36 \pm 0.02$ \\
        \midrule
     $160.71$M & $512$ & $6$ & Baseline 
        & $7.49 \pm 0.02$ & $15.02 \pm 0.05$ & $30.24 \pm 0.07$ & $ 68.79 \pm 0.11$ & $187.53 \pm 0.10$ & $530.74 \pm 0.23$ \\
      $180.13$M & $512$ & $8$ & Baseline 
        & $9.61 \pm 0.01$ & $19.33 \pm 0.01$ & $38.82 \pm 0.05$ & $88.87 \pm 0.18$ & $244.50 \pm 0.19$ & $698.11 \pm 0.08$ \\
      $218.98$M & $512$ & $12$ & Baseline
        & $13.85 \pm 0.01$ & $27.83 \pm 0.03$ & $55.960 \pm 0.07$ & $129.75 \pm 0.05$ & $360.35 \pm 0.17$ & $1035.73 \pm 0.24$ \\
      $357.62$M & $896$ & $6$ & Baseline
        & $7.87 \pm 0.01$ & $15.96 \pm 0.01$ & $35.68 \pm 0.03$ & $ 92.39 \pm 0.07$ & $265.51 \pm 0.09$ & $796.94 \pm 0.06$ \\
      $417.08$M & $896$ & $8$ & Baseline
        & $10.20 \pm 0.01$ & $20.61 \pm 0.01$ & $46.13 \pm 0.04$ & $119.85 \pm 0.06$ & $346.97 \pm 0.03$ & $1048.48 \pm 0.06$ \\
      $535.99$M & $896$ & $12$ & Baseline
        & $14.51 \pm 0.01$ & $29.48 \pm 0.03$ & $66.58 \pm 0.19$ & $174.97 \pm 0.02$ & $511.13 \pm 0.08$ & $1555.67 \pm 0.07$ \\
      $437.81$M & $1024$ & $6$ & Baseline
        & $8.00 \pm 0.01$ & $16.51$ & $38.19 \pm 0.02$ & $100.51 $ & $292.20 \pm 0.03$ & $887.85 \pm 0.12$ \\
      $515.46$M & $1024$ & $8$ & Baseline
        & $10.33 \pm 0.01$ & $21.18 \pm 0.02$ & $49.05 \pm 0.06$ & $130.20 \pm 0.04$ & $381.39 \pm 0.03$ & $1167.59 \pm 0.11$ \\
      $670.75$M & $1024$ & $12$ & Baseline
        & $14.72 \pm 0.02$ & $30.33 \pm 0.05$ & $70.91 \pm 0.08$ & $190.12 \pm 0.12$ & $561.88 \pm 0.04$ & $1731.67 \pm 0.1$\\
      \bottomrule
    \end{tabular}
  \end{adjustbox}
  \captionsetup{justification=centering}
  \caption{Inference time (in s) for STU-only models, without prefill.}
  \label{tab:app_stu_only_results_ablation}
\end{table}

\begin{table}[H]
  \centering
  \scriptsize
  \setlength{\tabcolsep}{2pt}
  \renewcommand{\arraystretch}{0.9}
  \begin{adjustbox}{width=\textwidth,center,scale=1}
    \begin{tabular}{c c c c c *{8}{c}}
      \toprule
      \multirow{2}{*}{\textbf{Parameter count}}
        & \multirow{2}{*}{\textbf{Input dim}}
        & \multirow{2}{*}{\textbf{Layer count}}
        & \multirow{2}{*}{\textbf{Cache Type}}
        & \multicolumn{6}{c}{\textbf{Generation length $L_{gen}$}} \\
      \cmidrule(lr){5-10}
      & & &
        & \textbf{4096} & \textbf{8192} & \textbf{16384}
        & \textbf{32768} & \textbf{65536} & \textbf{126976} \\
      \midrule
      $163.03$M & $512$ & $6$ & Epoched FutureFill
        & $9.51$ & $18.91 \pm 0.10$ & $38.00 \pm 0.02$ & $75.45 \pm 0.03$ & $151.69 \pm 0.40$ & $293.22 \pm 0.04$ \\
      $183.23$M & $512$ & $8$ & Epoched FutureFill
        & $12.11 \pm 0.01$ & $24.01 \pm 0.02$ & $48.41 \pm 0.05$ & $96.58 \pm 0.31$ & $193.16 \pm 0.01$ & $370.67 \pm 0.24$ \\
      $223.63$M & $512$ & $12$ & Epoched FutureFill
        & $16.89 \pm 0.02$ & $33.73 \pm 0.01$ & $67.22 \pm 0.11$ & $134.71 \pm 0.16$ & $268.93 \pm 0.08$ & $526.35 \pm 0.64$ \\
      $364.78$M & $896$ & $6$ & Epoched FutureFill
        & $9.83 \pm 0.03$ & $19.68$ & $39.30 \pm 0.02$ & $78.50 \pm 0.01$ & $156.52 \pm 0.45$ & $335.21 \pm 1.83$ \\
      $426.63$M & $896$ & $8$ & Epoched FutureFill
        & $12.25 \pm 0.02$ & $24.53 \pm 0.05$ & $48.93 \pm 0.03$ & $98.08 \pm 0.07$ & $194.62 \pm 0.03$ & $430.45 \pm 0.02$ \\
      $550.31$M & $896$ & $12$ & Epoched FutureFill
        & $17.46 \pm 0.02$ & $34.85 \pm 0.01$ & $69.58 \pm 0.34$ & $138.96 \pm 0.20$ & $278.09 \pm 0.01$ & $627.05 \pm 0.09$ \\
      $447.17$M & $1\,024$ & $6$ & Epoched FutureFill
        & $9.64 \pm 0.01$ & $19.24 \pm 0.08$ & $38.53$ & $76.79 \pm 0.27$ & $152.75 \pm 0.11$ & $350.96 \pm 0.05$ \\
      $527.94$M & $1\,024$ & $8$ & Epoched FutureFill
        & $12.21 \pm 0.01$ & $24.44 \pm 0.01$ & $48.74$ & $97.09 \pm 0.09$ & $196.06 \pm 0.49$ & $457.30 \pm 0.23$ \\
      $689.48$M & $1\,024$ & $12$ & Epoched FutureFill
        & $17.50$ & $34.93 \pm 0.01$ & $69.70 \pm 0.13$ & $139.12 \pm 0.72$ & $281.83 \pm 0.09$ & $668.60 \pm 0.16$ \\
      \midrule
      $163.03$M & $512$ & $6$ & Baseline (naïve conv)
        & $8.85 \pm 0.01$ & $17.64 \pm 0.10$ & $35.49 \pm 0.08$ & $70.49 \pm 0.04$ & $145.73 \pm 0.22$ & $337.90 \pm 0.09$ \\
      $183.23$M & $512$ & $8$ & Baseline
        & $10.97 \pm 0.01$ & $21.96 \pm 0.03$ & $43.85 \pm 0.11$ & $87.47 \pm 0.09$ & $183.71 \pm 0.06$ & $438.86 \pm 0.10$ \\
      $223.63$M & $512$ & $12$ & Baseline
        & $15.45 \pm 0.02$ & $31.00 \pm 0.11$ & $61.99 \pm 0.16$ & $123.79 \pm 0.03$ & $264.36 \pm 0.16$ & $642.55 \pm 0.30$ \\
      $364.78$M & $896$ & $6$ & Baseline
        & $8.95 \pm 0.01$ & $17.89 \pm 0.01$ & $35.81 \pm 0.17$ & $72.60 \pm 0.02$ & $171.12 \pm 0.14$ & $457.10 \pm 0.40$ \\
      $426.63$M & $896$ & $8$ & Baseline
        & $11.24 \pm 0.01$ & $22.54 \pm 0.07$ & $44.99 \pm 0.03$ & $92.92 \pm 0.05$ & $223.65 \pm 0.04$ & $601.28 \pm 0.28$ \\
      $550.31$M & $896$ & $12$ & Baseline
        & $15.94 \pm 0.01$ & $31.91 \pm 0.02$ & $63.39 \pm 0.20$ & $132.32 \pm 0.22$ & $324.83 \pm 0.04$ & $885.40 \pm 0.15$ \\
      $447.17$M & $1\,024$ & $6$ & Baseline
        & $8.88 \pm 0.02$ & $17.75 \pm 0.03$ & $35.40 \pm 0.16$ & $73.99 \pm 0.06$ & $182.20 \pm 0.19$ & $498.67 \pm 0.32$ \\
      $527.94$M & $1\,024$ & $8$ & Baseline
        & $11.27 \pm 0.02$ & $22.55 \pm 0.02$ & $45.12 \pm 0.03$ & $95.20 \pm 0.02$ & $237.16 \pm 0.17$ & $654.97 \pm 0.01$ \\
      $689.48$M & $1\,024$ & $12$ & Baseline
        & $15.96 \pm 0.02$ & $31.90 \pm 0.01$ & $63.71 \pm 0.12$ & $136.66 \pm 0.14$ & $346.37 \pm 0.06$ & $967.64 \pm 0.12$ \\
      \bottomrule
    \end{tabular}
  \end{adjustbox}
  \captionsetup{justification=centering}
  \caption{Inference time (in s) for Hybrid models (50\% STU / 50\% Attention), without prefill.}
  \label{tab:app_hybrid_results_dim_ablation}
\end{table}

\subsection{Additional Ablations, With Prefill}
\label{sec:app_real_world_exps_ablations_with_prefill}

Prefill times reported below have been measured separately from generation times, i.e. generation times below do not include prefill times. 

\subsubsection{Prefill length is larger than (or equal to)  generation length}
\label{sec:app_real_world_exps_ablations_with_prefill_bigger_than_generation}

During prefill, the minimal FFT length required to recover the linear convolution of a prompt of length \(L_{\text{prompt}}\) and a generation of length \(L_{\text{generation}}\) is
\[
N = \mathrm{next\_pow2}\bigl(L_{\text{prompt}} + \bigl(L_{\text{prompt}} + L_{\text{generation}}) - 1\bigr).
\]
For \(L_{\text{prompt}} = 16\,384\) and 
\(L_{\text{generation}} \in \{4\,096,\dots,32\,768\}\), this yields \(N = 65\,536\). This is why Table~\ref{tab:results_prefill_small_bigger_prompt_16384} reports only the average prefill time and its sample standard deviation.  
Likewise, for \(L_{\text{prompt}} = 32\,768\) and 
\(L_{\text{generation}} \in \{4\,096,\dots,16\,384\}\), the minimal FFT size is \(N = 131\,072\), and Table~\ref{tab:results_prefill_small_bigger_prompt_32768} summarizes the corresponding average prefill times and their sample standard deviations.

\begin{table}[H]
  \centering
  \scriptsize
  \setlength{\tabcolsep}{2pt}
  \renewcommand{\arraystretch}{0.9}
  \begin{adjustbox}{width=\textwidth,center}
    \begin{tabular}{c c c c c *{4}{c}}
      \toprule
      \multirow{2}{*}{\textbf{Parameter count}} &
      \multirow{2}{*}{\textbf{Input dim}}      &
      \multirow{2}{*}{\textbf{Layer count}}    &
      \multirow{2}{*}{\textbf{Cache Type}}     &
      \multirow{2}{*}{\textbf{Prefill Time}}   &
      \multicolumn{4}{c}{\textbf{Generation length $L_{\text{gen}}$}} \\
      \cmidrule(lr){6-9}
      & & & & & \textbf{4096} & \textbf{8192} & \textbf{16384} & \textbf{32768}  \\
      \midrule

      $160.71$M & $512$ & $6$  & Epoched FutureFill & $4.03$ & $10.18 \pm 0.01$ & $20.34 \pm 0.03$ & $40.39 \pm 0.01$ & $80.21 \pm 0.07$ \\
      $180.13$M   & $512$   & $8$    & Epoched FutureFill & $5.35$   & $13.08 \pm 0.03$ & $26.19$ & $51.43 \pm 0.01$ & $103.80 \pm 0.21$ \\
      $218.98$M   & $512$   & $12$   & Epoched FutureFill & $8.01 \pm 0.02$   & $18.90 \pm 0.01$ & $37.46 \pm 0.02$ & $75.07 \pm 0.08$ & $149.97 \pm 0.08$ \\
      $357.62$M   & $896$   & $6$    & Epoched FutureFill & $7.04 \pm 0.01$   & $10.10 \pm 0.01$ & $20.18 \pm 0.04$ & $40.46 \pm 0.24$ & $80.75 \pm 0.11$ \\
      $417.08$M   & $896$   & $8$    & Epoched FutureFill & $9.36$   & $13.01 \pm 0.02$ & $26.04 \pm 0.05$ & $52.35 \pm 0.09$ & $104.38 \pm 0.13$ \\
      $437.81$M   & $1024$  & $6$    & Epoched FutureFill & $8.09 \pm 0.02$   & $10.10 \pm 0.03$ & $20.27 \pm 0.02$ & $40.36$ & $80.65 \pm 0.06$ \\
      $515.46$M   & $1024$  & $8$    & Epoched FutureFill & $10.74 \pm 0.03$  & $13.08 \pm 0.01$ & $26.22 \pm 0.04$ & $52.37 \pm 0.02$ & $104.91 \pm 0.09$ \\
      $535.99$M   & $896$   & $12$   & Epoched FutureFill & $13.96 \pm 0.02$  & $19.03$ & $37.64 \pm 0.16$ & $76.09 \pm 0.28$ & $152.01 \pm 0.06$ \\
      $670.75$M   & $1024$  & $12$   & Epoched FutureFill & $16.01 \pm 0.02$  & $18.91 \pm 0.03$ & $37.71 \pm 0.16$ & $75.99 \pm 0.07$ & $152.27 \pm 0.20$ \\
      \midrule
      $160.71$M   & $512$   & $6$    & Baseline           & $4.03 \pm 0.01$   & $8.52$  & $17.82$ & $39.09 \pm 0.01$ & $92.37$ \\
      $180.13$M   & $512$   & $8$    & Baseline           & $5.36$   & $10.99 \pm 0.02$ & $23.13 \pm 0.01$ & $50.94$ & $120.74 \pm 0.05$ \\
      $218.98$M   & $512$   & $12$   & Baseline           & $8.02$   & $16.17 \pm 0.01$ & $33.96$ & $74.95 \pm 0.04$ & $178.56 \pm 0.01$ \\
      $357.62$M   & $896$   & $6$    & Baseline           & $7.056$  & $12.17 $ & $25.78$ & $57.09 \pm 0.02$ & $134.20 \pm 0.03$ \\
      $417.08$M   & $896$   & $8$    & Baseline           & $9.37$ & $15.84 \pm 0.01$ & $33.73$ & $74.74$ & $175.99 \pm 0.05$ \\
      $437.81$M   & $1024$  & $6$    & Baseline           & $8.09$   & $13.44 \pm 0.02$ & $28.40 \pm 0.01$ & $62.73 \pm 0.02$ & $147.74 \pm 0.05$ \\
      $515.46$M   & $1024$  & $8$    & Baseline           & $10.74$  & $17.49 \pm 0.01$ & $37.03 \pm 0.03$ & $81.94 \pm 0.02$ & $193.78 \pm 0.01$ \\
      $535.99$M   & $896$   & $12$   & Baseline           & $14.00$  & $23.19 \pm 0.01$ & $49.34$ & $109.55 \pm 0.04$ & $259.24 \pm 0.02$ \\
      $670.75$M   & $1024$  & $12$   & Baseline           & $16.05$  & $25.63$ & $54.37 \pm 0.03$ & $120.34 \pm 0.04$ & $285.88 \pm 0.13$ \\
      \bottomrule
    \end{tabular}
  \end{adjustbox}
  \captionsetup{justification=centering}
  \caption{Inference time (in s) for STU-only models, with prefill on an input prompt of length
           $L_{\text{prompt}} = 16{,}384$ tokens.}
  \label{tab:results_prefill_small_bigger_prompt_16384}
\end{table}

\begin{table}[H]
  \centering
  \scriptsize
  \setlength{\tabcolsep}{2pt}
  \renewcommand{\arraystretch}{0.9}
  \begin{adjustbox}{width=\textwidth,center}
    \begin{tabular}{c c c c c *{3}{c}}
      \toprule
      \multirow{2}{*}{\textbf{Parameter count}} &
      \multirow{2}{*}{\textbf{Input dim}}      &
      \multirow{2}{*}{\textbf{Layer count}}    &
      \multirow{2}{*}{\textbf{Cache Type}}     &
      \multirow{2}{*}{\textbf{Prefill Time}}   &
      \multicolumn{3}{c}{\textbf{Generation length $L_{\text{gen}}$}} \\
      \cmidrule(lr){6-8}
      & & & & & \textbf{4096} & \textbf{8192} & \textbf{16384} \\
      \midrule

      $160.71$M & $512$  & $6$  & Epoched FutureFill & $8.01$ & $10.11 \pm 0.01$ & $20.10 \pm 0.02$ & $40.02 \pm 0.08$ \\
      $180.13$M   & $512$   & $8$   & Epoched FutureFill & $10.58 \pm 0.01$  & $13.07 \pm 0.01$ & $25.96$ & $52.11 \pm 0.19$ \\
      $218.98$M   & $512$    & $12$   & Epoched FutureFill & $15.81 \pm 0.03$  & $18.93 \pm 0.01$ & $37.70 \pm 0.04$ & $75.89 \pm 0.02$ \\
      $357.62$M   & $896$    & $6$    & Epoched FutureFill & $14.01 \pm 0.05$  & $10.05 \pm 0.01$ & $20.13$ & $40.26 \pm 0.04$ \\
      $417.08$M   & $896$    & $8$    & Epoched FutureFill & $18.66 \pm 0.04$  & $13.18 \pm 0.01$ & $26.29 \pm 0.04$ & $52.36 \pm 0.04$ \\
      $437.81$M   & $1024$   & $6$    & Epoched FutureFill & $16.05 \pm 0.03$  & $10.11$ & $20.15$ & $40.33 \pm 0.05$ \\
      $515.46$M   & $1024$   & $8$    & Epoched FutureFill & $21.40 \pm 0.03$  & $13.12 \pm 0.05$ & $26.18 \pm 0.02$ & $52.22 \pm 0.09$ \\
      $535.99$M   & $896$    & $12$   & Epoched FutureFill & $27.74$  & $18.93 \pm 0.36$ & $37.89 \pm 0.04$ & $75.90 \pm 0.07$ \\
      $670.75$M   & $1024$   & $12$   & Epoched FutureFill & $31.98 \pm 0.09$  & $19.06 \pm 0.10$ & $37.80 \pm 0.01$ & $75.66 \pm 0.61$ \\
      \midrule
      $160.71$M   & $512$    & $6$    & Baseline           & $7.99$   & $12.08$ & $24.93 \pm 0.01$ & $53.43 \pm 0.01$ \\
      $180.13$M   & $512$   & $8$    & Baseline           & $10.60$  & $15.85$ & $32.83 \pm 0.02$ & $69.82$ \\
      $218.98$M   & $512$    & $12$   & Baseline           & $15.84$  & $23.39 \pm 0.01$ & $48.50 \pm 0.01$ & $103.64 \pm 0.02$ \\
      $357.62$M   & $896$    & $6$    & Baseline           & $14.03$  & $17.48$ & $36.20 \pm 0.01$ & $77.18 \pm 0.01$ \\
      $417.08$M   & $896$    & $8$    & Baseline           & $18.62$  & $22.96$ & $47.51$ & $101.50 \pm 0.02$ \\
      $437.81$M   & $1024$   & $6$    & Baseline           & $16.05$  & $19.22$ & $39.83$ & $85.10 \pm 0.01$ \\
      $515.46$M   & $1024$   & $8$    & Baseline           & $21.28$  & $25.23$ & $52.31 \pm 0.02$ & $111.92 \pm 0.06$ \\
      $535.99$M   & $896$    & $12$   & Baseline           & $27.82$  & $33.83 \pm 0.01$ & $70.03 \pm 0.01$ & $149.86 \pm 0.02$ \\
      $670.75$M   & $1024$   & $12$   & Baseline           & $32.70$  & $37.21 \pm 0.02$ & $77.15 \pm 0.02$ & $165.13 \pm 0.07$ \\
      \bottomrule
    \end{tabular}
  \end{adjustbox}
  \captionsetup{justification=centering}
  \caption{Inference time (in s) for STU-only models, with prefill on an input prompt of length
           $L_{\text{prompt}} = 32{,}768$ tokens.}
  \label{tab:results_prefill_small_bigger_prompt_32768}
\end{table}

\subsubsection{Prefill length is smaller than (or equal to) generation length}
\label{sec:app_real_world_exps_ablations_with_prefill_smaller_than_generation}

In the ablations shown in this section, the prompt length is fixed at $L_{\text{prompt}}$, the column headers refer to the total sequence length $L = L_{prompt} + L_{generation}$, rather than to the generation length $L_{\text{gen}}$ alone. For instance, in the case of $L_{\text{prompt}} = 512$: $L = 8192$ means that there are $512$ tokens in the input prompt and $7680$ generated tokens.

\begin{table}[H]
  \centering
  \scriptsize                   
  \setlength{\tabcolsep}{2pt}    
  \renewcommand{\arraystretch}{0.9}
  \begin{adjustbox}{width=\textwidth,center, scale = 1}
    \begin{tabular}{c c c c*{6}{c}}
      \toprule
      \multirow{2}{*}{\textbf{Parameter count}}
        & \multirow{2}{*}{\textbf{Input dim}}
        & \multirow{2}{*}{\textbf{Layer count}}
        & \multirow{2}{*}{\textbf{Cache Type}}
        & \multicolumn{6}{c}{\textbf{Total length $L (= L_{gen} + L_{prompt})$}} \\
      \cmidrule(lr){5-10}
      & & &
      & \textbf{4096} & \textbf{8192}
      & \textbf{16384} & \textbf{32768}
      & \textbf{65536} & \textbf{131072} \\
      \midrule
      $180.13$M & $512$ & $8$ & Epoched FutureFill
        & $11.45 \pm 0.03$ & $24.60 \pm 0.02$ & $50.71 \pm 0.17$ & $103.12 \pm 0.04$ & $206.14 \pm 1.86$ & $461.63 \pm 0.49$ \\
      $218.98$M & $512$ & $12$ & Epoched FutureFill
        & $16.16 \pm 0.01$ & $34.49 \pm 0.01$ & $71.64 \pm 0.08$ & $145.42 \pm 0.60$ & $293.34 \pm 0.35$ & $671.33 \pm 0.61$ \\
      $417.08$M & $896$ & $8$ & Epoched FutureFill
        & $11.24 \pm 0.03$ & $23.91 \pm 0.08$ & $49.54 \pm 0.07$ & $100.51 \pm 0.40$ & $228.71 \pm 0.12$ & $659.71 \pm 0.21$ \\
      $535.99$M & $896$ & $12$ & Epoched FutureFill
        & $16.22 \pm 0.07$ & $34.49 \pm 0.08$ & $71.42 \pm 0.01$ & $145.93 \pm 0.09$ & $334.20 \pm 0.17$ & $976.03 \pm 1.00$ \\
      $515.46$M & $1024$ & $8$ & Epoched FutureFill
        & $11.13 \pm 0.01$ & $23.92 \pm 0.10$ & $49.57 \pm 0.02$ & $101.80 \pm 0.27$ & $245.07 \pm 0.45$ & $735.22 \pm 0.11$ \\
      $670.75$M & $1024$ & $12$ & Epoched FutureFill
        & $16.33 \pm 0.02$ & $35.10 \pm 0.03$ & $72.42 \pm 0.10$ & $149.12 \pm 0.07$ & $361.35 \pm 0.29$ & $1097.50$ \\
      \midrule
      $180.13$M & $512$ & $8$ & Baseline
        & $8.50 \pm 0.02$ & $18.26 \pm 0.01$ & $37.61 \pm 0.09$ & $88.27 \pm 0.12$ & $245.27 \pm 0.01$ & $738.23 \pm 0.92$ \\
      $218.98$M & $512$ & $12$ & Baseline
        & $11.90 \pm 0.01$ & $25.60$ & $53.23 \pm 0.04$ & $127.73 \pm 0.14$ & $359.73 \pm 0.25$ & $1094.02 \pm 0.01$ \\
      $417.08$M & $896$ & $8$ & Baseline
        & $8.83 \pm 0.01$ & $19.20 \pm 0.04$ & $44.95 \pm 0.02$ & $119.35 \pm 0.08$ & $347.60 \pm 0.09$ & $1111.11 \pm 0.08$ \\
      $535.99$M & $896$ & $12$ & Baseline
        & $12.59 \pm 0.01$ & $27.38$ & $64.91 \pm 0.06$ & $174.19 \pm 0.08$ & $512.10 \pm 0.05$ & $1648.08 \pm 0.21$ \\
      $515.46$M & $1024$ & $8$ & Baseline
        & $8.94$ & $19.87 \pm 0.05$ & $48.11 \pm 0.03$ & $129.78 \pm 0.02$ & $382.44 \pm 0.04$ & $1238.22.86 \pm 0.02$ \\
      $670.75$M & $1024$ & $12$ & Baseline
        & $12.68 \pm 0.01$ & $28.42 \pm 0.02$ & $69.42 \pm 0.13$ & $189.57 \pm 0.10$ & $563.47 \pm 0.14$ & $1837.16 \pm 0.05$ \\
      \bottomrule
    \end{tabular}
  \end{adjustbox}
  \captionsetup{justification=centering}
  \caption{Inference time (in s) for STU-only models, with prefill on an input prompt of length $L_{prompt} = 512$ tokens.}
  \label{tab:app_stu_only_results_dim_ablation_with_prefill_512}
\end{table}

\begin{table}[H]
  \centering
  \scriptsize                   
  \setlength{\tabcolsep}{2pt}    
  \renewcommand{\arraystretch}{0.9}
  \begin{adjustbox}{width=\textwidth,center, scale = 1}
    \begin{tabular}{c c c c c*{6}{c}}
      \toprule
      \multirow{2}{*}{\textbf{Parameter count}}
        & \multirow{2}{*}{\textbf{Input dim}}
        & \multirow{2}{*}{\textbf{Layer count}}
        & \multirow{2}{*}{\textbf{Cache Type}}
        & \multicolumn{6}{c}{\textbf{Prefill Times associated with Total length $L (= L_{gen} + L_{prompt})$}} \\
      \cmidrule(lr){5-10}
      & & & &
      \textbf{4096} & \textbf{8192}
      & \textbf{16384} & \textbf{32768}
      & \textbf{65536} & \textbf{131072} \\
      \midrule
      $180.13$M & $512$ & $8$ & Epoched FutureFill
        & $0.20$ & $0.20$ & $0.21$ & $0.21$ & $0.25$ & $0.38$ \\
      $218.98$M & $512$ & $12$ & Epoched FutureFill
        & $0.30$ & $0.30$ & $0.31$ & $0.32$ & $0.36$ & $0.57$ \\
      $417.08$M & $896$ & $8$ & Epoched FutureFill
        & $0.39$ & $0.39$ & $0.40$ & $0.41$ & $0.46$ & $0.70$ \\
      $535.99$M & $896$ & $12$ & Epoched FutureFill
        & $0.58$ & $0.58$ & $0.59$ & $0.62$ & $0.69$ & $1.05$ \\
      $515.46$M & $1024$ & $8$ & Epoched FutureFill
        & $0.39$ & $0.40$ & $0.41$ & $0.42$ & $0.48$ & $0.75$ \\
      $670.75$M & $1024$ & $12$ & Epoched FutureFill
        & $0.59$ & $0.59$ & $0.60$ & $0.63$ & $0.72$ & $1.13$ \\
      \midrule
      $180.13$M & $512$ & $8$ & Baseline
        & $0.20$ & $0.20$ & $0.20$ & $0.20$ & $0.20$ & $0.20$ \\
      $218.98$M & $512$ & $12$ & Baseline
        & $0.30$ & $0.30$ & $0.30$ & $0.30$ & $0.30$ & $0.30$ \\
      $417.08$M & $896$ & $8$ & Baseline
        & $0.38$ & $0.38$ & $0.38$ & $0.38$ & $0.38$ & $0.38$ \\
      $535.99$M & $896$ & $12$ & Baseline
        & $0.57$ & $0.57$ & $0.57$ & $0.57$ & $0.57$ & $0.57$ \\
      $515.46$M & $1024$ & $8$ & Baseline
        & $0.39$ & $0.39$ & $0.39$ & $0.39$ & $0.39$ & $0.39$ \\
      $670.75$M & $1024$ & $12$ & Baseline
        & $0.58$ & $0.58$ & $0.58$ & $0.58$ & $0.58$ & $0.58$ \\
      \bottomrule
    \end{tabular}
  \end{adjustbox}
  \captionsetup{justification=centering}
  \caption{Prefill time (in s) for STU-only models, associated with an input prompt of length $L_{prompt} = 512$ tokens.}
  \label{tab:app_stu_only_results_prefill_time_with_prefill_512}
\end{table}

\begin{table}[H]
  \centering
  \scriptsize              
  \setlength{\tabcolsep}{2pt}    
  \renewcommand{\arraystretch}{0.9}
  \begin{adjustbox}{width=\textwidth,center, scale = 1}
    \begin{tabular}{c c c c*{6}{c}}
      \toprule
      \multirow{2}{*}{\textbf{Parameter count}}
        & \multirow{2}{*}{\textbf{Input dim}}
        & \multirow{2}{*}{\textbf{Layer count}}
        & \multirow{2}{*}{\textbf{Cache Type}}
        & \multicolumn{6}{c}{\textbf{Total length $L (= L_{gen} + L_{prompt})$}} \\
      \cmidrule(lr){5-10}
      & & & &
      \textbf{4096} & \textbf{8192}
      & \textbf{16384} & \textbf{32768}
      & \textbf{65536} & \textbf{131072} \\
      \midrule
      $180.13$M & $512$ & $8$ & Epoched FutureFill
        & $9.77 \pm 0.05$ & $22.73 \pm 0.11$ & $48.54 \pm 0.20$ & $101.12 \pm 0.35$ & $205.98 \pm 1.05$ & $460.10 \pm 1.05$ \\
      $218.98$M & $512$ & $12$ & Epoched FutureFill
        & $13.86 \pm 0.02$ & $32.30 \pm 0.06$ & $69.19 \pm 0.04$ & $143.38 \pm 0.03$ & $291.17 \pm 0.62$ & $670.61 \pm 1.28$ \\
      $417.08$M & $896$ & $8$ & Epoched FutureFill
        & $9.68 \pm 0.02$ & $22.32 \pm 0.02$ & $48.05 \pm 0.03$ & $99.31 \pm 0.21$ & $226.30 \pm 0.20$ & $658.32 \pm 0.30$ \\
      $535.99$M & $896$ & $12$ & Epoched FutureFill
        & $13.92 \pm 0.04$ & $32.32 \pm 0.09$ & $69.49 \pm 0.01$ & $144.03 \pm 0.11$ & $330.52 \pm 0.61$ & $974.19 \pm 0.54$ \\
      $515.46$M & $1024$ & $8$ & Epoched FutureFill
        & $9.57 \pm 0.03$ & $22.43 \pm 0.03$ & $47.71 \pm 0.13$ & $100.29 \pm 0.30$ & $242.70 \pm 0.20$ & $733.30 \pm 0.12$ \\
      $670.75$M & $1024$ & $12$ & Epoched FutureFill
        & $13.63 \pm 0.02$ & $31.80 \pm 0.07$ & $68.53 \pm 0.04$ & $143.44 \pm 0.49$ & $355.29 \pm 0.15$ & $1099.61 \pm 2.16$ \\
      \midrule
      $180.13$M & $512$ & $8$ & Baseline
        & $7.13 \pm 0.01$ & $16.62 \pm 0.03$ & $35.82 \pm 0.07$ & $86.23 \pm 0.09$ & $242.88 \pm 0.12$ & $736.30 \pm 0.13$ \\
      $218.98$M & $512$ & $12$ & Baseline
        & $10.27 \pm 0.01$ & $23.91 \pm 0.01$ & $51.78 \pm 0.12$ & $126.30 \pm 0.03$ & $357.94 \pm 0.41$ & $1091.89 \pm 0.35$ \\
      $417.08$M & $896$ & $8$ & Baseline
        & $7.60 \pm 0.01$ & $17.94 \pm 0.03$ & $43.71 \pm 0.06$ & $118.07 \pm 0.08$ & $46.26 \pm 0.04$ & $1109.39 \pm 0.09$ \\
      $535.99$M & $896$ & $12$ & Baseline
        & $10.83 \pm 0.01$ & $25.61 \pm 0.02$ & $63.14 \pm 0.13$ & $172.44 \pm 0.03$ & $509.91 \pm 0.24$ & $1645.93 \pm 0.11$ \\
      $515.46$M & $1024$ & $8$ & Baseline
        & $7.68$ & $18.64$ & $46.86 \pm 0.05$ & $128.61 \pm 0.12$ & $381.15 \pm 0.38$ & $1237.62 \pm 0.11$ \\
      $670.75$M & $1024$ & $12$ & Baseline
        & $10.90$ & $26.58 \pm 0.04$ & $67.62 \pm 0.01$ & $187.76 \pm 0.07$ & $561.71 \pm 0.01$ & $1834.86 \pm 0.18$ \\

      \bottomrule
    \end{tabular}
  \end{adjustbox}
  \captionsetup{justification=centering}
  \caption{Inference time (in s) for STU-only models, with prefill on an input prompt of length $L_{prompt} = 1024$ tokens.}
  \label{tab:app_stu_only_results_dim_ablation_with_prefill_1024}
\end{table}

\begin{table}[H]
  \centering
  \scriptsize                   
  \setlength{\tabcolsep}{2pt}    
  \renewcommand{\arraystretch}{0.9}
  \begin{adjustbox}{width=\textwidth,center, scale = 1}
    \begin{tabular}{c c c c*{6}{c}}
      \toprule
      \multirow{2}{*}{\textbf{Parameter count}}
        & \multirow{2}{*}{\textbf{Input dim}}
        & \multirow{2}{*}{\textbf{Layer count}}
        & \multirow{2}{*}{\textbf{Cache Type}}
        & \multicolumn{6}{c}{\textbf{Prefill Times associated with Total length $L (= L_{gen} + L_{prompt})$}} \\
      \cmidrule(lr){5-10}
      & & &
      & \textbf{4096} & \textbf{8192}
      & \textbf{16384} & \textbf{32768}
      & \textbf{65536} & \textbf{131072} \\
      \midrule
      $180.13$M & $512$ & $8$ & Epoched FutureFill
        & $0.38$ & $0.39$ & $0.39 \pm 0.09$ & $0.40$ & $0.43$ & $0.56$ \\
      $218.98$M & $512$ & $12$ & Epoched FutureFill
        & $0.58$ & $0.58$ & $0.59$ & $0.60$ & $0.64$ & $0.85$ \\
      $417.08$M & $896$ & $8$ & Epoched FutureFill
        & $0.66$ & $0.67$ & $0.68$ & $0.69$ & $0.74$ & $0.98$ \\
      $535.99$M & $896$ & $12$ & Epoched FutureFill
        & $0.99$ & $1.00$ & $1.01$ & $1.03$ & $1.10$ & $1.47$ \\
      $515.46$M & $1024$ & $8$ & Epoched FutureFill
        & $0.76$ & $0.76$ & $0.77$ & $0.79$ & $0.84$ & $1.12$ \\
      $670.75$M & $1024$ & $12$ & Epoched FutureFill
        & $1.13$ & $1.14$ & $1.15$ & $1.18$ & $1.26$ & $1.68$ \\
      \midrule
      $180.13$M & $512$ & $8$ & Baseline
        & $0.38$ & $0.38$ & $0.38$ & $0.38$ & $0.38$ & $0.38$ \\
      $218.98$M & $512$ & $12$ & Baseline
        & $0.58$ & $0.58$ & $0.58$ & $0.58$ & $0.58$ & $0.58$ \\
      $417.08$M & $896$ & $8$ & Baseline
        & $0.66$ & $0.66$ & $0.66$ & $0.66$ & $0.66$ & $0.66$ \\
      $535.99$M & $896$ & $12$ & Baseline
        & $0.99$ & $0.99$ & $0.99$ & $0.99$ & $0.99$ & $0.99$ \\
      $515.46$M & $1024$ & $8$ & Baseline
        & $0.75$ & $0.75$ & $0.75$ & $0.75$ & $0.75$ & $0.75$ \\
      $670.75$M & $1024$ & $12$ & Baseline
        & $1.12$ & $1.12$ & $1.12$ & $1.12$ & $1.12$ & $1.12$ \\
      \bottomrule
    \end{tabular}
  \end{adjustbox}
  \captionsetup{justification=centering}
  \caption{Prefill time (in s) for STU-only models, associated with an input prompt of length $L_{prompt} = 1024$ tokens.}
  \label{tab:app_stu_only_results_prefill_time_with_prefill_1024}
\end{table}

\begin{table}[H]
  \centering
  \scriptsize                   
  \setlength{\tabcolsep}{2pt}    
  \renewcommand{\arraystretch}{0.9}
  \begin{adjustbox}{width=\textwidth,center, scale = 1}
    \begin{tabular}{c c c c*{6}{c}}
      \toprule
      \multirow{2}{*}{\textbf{Parameter count}}
        & \multirow{2}{*}{\textbf{Input dim}}
        & \multirow{2}{*}{\textbf{Layer count}}
        & \multirow{2}{*}{\textbf{Cache Type}}
        & \multicolumn{6}{c}{\textbf{Total length $L (= L_{gen} + L_{prompt})$}} \\
      \cmidrule(lr){5-10}
      & & &
      & \textbf{4096} & \textbf{8192}
      & \textbf{16384} & \textbf{32768}
      & \textbf{65536} & \textbf{131072} \\
      \midrule
      $180.13$M & $512$ & $8$ & Epoched FutureFill
        & $6.44 \pm 0.01$ & $19.32 \pm 0.03$ & $44.96 \pm 0.11$ & $95.46 \pm 0.09$ & $195.44 \pm 0.71$ & $450.41 \pm 0.70$ \\
      $218.98$M & $512$ & $12$ & Epoched FutureFill
        & $9.23 \pm 0.02$ & $27.81 \pm 0.05$ & $64.44 \pm 0.13$ & $137.98 \pm 0.56$ & $286.83 \pm 0.11$ & $662.80 \pm 0.35$ \\
      $417.08$M & $896$ & $8$ & Epoched FutureFill
        & $6.39 \pm 0.03$ & $19.22 \pm 0.06$ & $44.82 \pm 0.06$ & $95.73 \pm 0.52$ & $222.07 \pm 0.51$ & $649.40 \pm 0.12$ \\
      $535.99$M & $896$ & $12$ & Epoched FutureFill
        & $9.27 \pm 0.02$ & $27.78 \pm 0.12$ & $64.48 \pm 0.32$ & $138.90 \pm 0.46$ & $323.72 \pm 0.03$ & $955.18 \pm 0.43$ \\
      $515.46$M & $1024$ & $8$ & Epoched FutureFill
        & $6.38 \pm 0.03$ & $19.11 \pm 0.03$ & $44.82 \pm 0.12$ & $96.74 \pm 0.10$ & $237.19 \pm 0.20$ & $722.81 \pm 0.32$ \\
      $670.75$M & $1024$ & $12$ & Epoched FutureFill
        & $9.09 \pm 0.01$ & $27.20 \pm 0.08$ & $63.28 \pm 0.30$ & $138.06 \pm 0.72$ & $346.51 \pm 0.26$ & $1077.25 \pm 0.62$ \\
      \midrule
      $180.13$M & $512$ & $8$ & Baseline
        & $4.75$ & $14.27$ & $33.51 \pm 0.07$ & $84.11 \pm 0.08$ & $240.56 \pm 0.11$ & $733.73 \pm 0.05$ \\
      $218.98$M & $512$ & $12$ & Baseline
        & $6.84$ & $20.56 \pm 0.01$ & $48.35 \pm 0.06$ & $122.67 \pm 0.13$ & $354.58 \pm 0.11$ & $1088.95 \pm 0.13$ \\
      $417.08$M & $896$ & $8$ & Baseline
        & $5.06$ & $15.42 \pm 0.01$ & $41.17 \pm 0.04$ & $115.49 \pm 0.01$ & $343.68$ & $1107.32 \pm 0.03$ \\
      $535.99$M & $896$ & $12$ & Baseline
        & $7.22 \pm 0.01$ & $22.08 \pm 0.03$ & $59.62 \pm 0.04$ & $168.82 \pm 0.08$ & $506.61 \pm 0.15$ & $1642.43 \pm 0.11$ \\
      $515.46$M & $1024$ & $8$ & Baseline
        & $5.13 \pm 0.01$ & $16.09 \pm 0.01$ & $44.29 \pm 0.02$ & $126.11 \pm 0.02$ & $378.62 \pm 0.05$ & $1234.521 \pm 0.04$ \\
      $670.75$M & $1024$ & $12$ & Baseline
        & $7.23 \pm 0.02$ & $22.94 \pm 0.02$ & $64.01 \pm 0.05$ & $184.11 \pm 0.04$ & $557.94 \pm 0.41$ & $1831.83 \pm 0.38$ \\

      \bottomrule
    \end{tabular}
  \end{adjustbox}
  \captionsetup{justification=centering}
  \caption{Inference time (in s) for STU-only models, with prefill on an input prompt of length $L_{prompt} = 2048$ tokens.}
  \label{tab:app_stu_only_results_dim_ablation_with_prefill_2048}
\end{table}

\begin{table}[H]
  \centering
  \scriptsize                   
  \setlength{\tabcolsep}{2pt}    
  \renewcommand{\arraystretch}{0.9}
  \begin{adjustbox}{width=\textwidth,center, scale = 1}
    \begin{tabular}{c c c c*{6}{c}}
      \toprule
      \multirow{2}{*}{\textbf{Parameter count}}
        & \multirow{2}{*}{\textbf{Input dim}}
        & \multirow{2}{*}{\textbf{Layer count}}
        & \multirow{2}{*}{\textbf{Cache Type}}
        & \multicolumn{6}{c}{\textbf{Prefill Times associated with Total length $L (= L_{gen} + L_{prompt})$}} \\
      \cmidrule(lr){5-10}
      & & &
      & \textbf{4096} & \textbf{8192}
      & \textbf{16384} & \textbf{32768}
      & \textbf{65536} & \textbf{131072} \\
      \midrule
      $180.13$M & $512$ & $8$ & Epoched FutureFill
        & $0.72$ & $0.73$ & $0.74$ & $0.74$ & $0.77$ & $0.91$ \\
      $218.98$M & $512$ & $12$ & Epoched FutureFill
        & $1.09$ & $1.09$ & $1.10$ & $1.11$ & $1.15$ & $1.36$ \\
      $417.08$M & $896$ & $8$ & Epoched FutureFill
        & $1.27$ & $1.27$ & $1.28$ & $1.29$ & $1.34$ & $1.58$ \\
      $535.99$M & $896$ & $12$ & Epoched FutureFill
        & $1.89$ & $1.90$ & $1.91$ & $1.93$ & $2.00$ & $2.36$ \\
      $515.46$M & $1024$ & $8$ & Epoched FutureFill
        & $1.44$ & $1.45$ & $1.46$ & $1.48$ & $1.53$ & $1.81$ \\
      $670.75$M & $1024$ & $12$ & Epoched FutureFill
        & $2.16$ & $2.17$ & $2.18$ & $2.21$ & $2.29$ & $2.70$ \\
      \midrule
      $180.13$M & $512$ & $8$ & Baseline
        & $0.73$ & $0.73$ & $0.73$ & $0.73$ & $0.73$ & $0.73$ \\
      $218.98$M & $512$ & $12$ & Baseline
        & $1.09$ & $1.09$ & $1.09$ & $1.09$ & $1.09$ & $1.09$ \\
      $417.08$M & $896$ & $8$ & Baseline
        & $1.26$ & $1.26$ & $1.26$ & $1.26$ & $1.26$ & $1.26$ \\
      $535.99$M & $896$ & $12$ & Baseline
        & $1.88$ & $1.88$ & $1.88$ & $1.88$ & $1.88$ & $1.88$ \\
      $515.46$M & $1024$ & $8$ & Baseline
        & $1.44$ & $1.44$ & $1.44$ & $1.44$ & $1.44$ & $1.44$ \\
      $670.75$M & $1024$ & $12$ & Baseline
        & $2.14$ & $2.14$ & $2.14$ & $2.14$ & $2.14$ & $2.14$ \\
      \bottomrule
    \end{tabular}
  \end{adjustbox}
  \captionsetup{justification=centering}
  \caption{Prefill time (in s) for STU-only models, associated with an input prompt of length $L_{prompt} = 2048$ tokens.}
  \label{tab:app_stu_only_results_prefill_time_with_prefill_2048}
\end{table}

\begin{table}[H]
  \centering
  \scriptsize                   
  \setlength{\tabcolsep}{2pt}    
  \renewcommand{\arraystretch}{0.9}
  \begin{adjustbox}{width=\textwidth,center, scale = 1}
    \begin{tabular}{c c c c *{5}{c}}
      \toprule
      \multirow{2}{*}{\textbf{Parameter count}}
        & \multirow{2}{*}{\textbf{Input dim}}
        & \multirow{2}{*}{\textbf{Layer count}}
        & \multirow{2}{*}{\textbf{Cache Type}}
        & \multicolumn{4}{c}{\textbf{Total length $L (= L_{gen} + L_{prompt})$}} \\
      \cmidrule(lr){5-9}
      & & &
      & \textbf{8192}
      & \textbf{16384} & \textbf{32768}
      & \textbf{65536} & \textbf{131072} \\
      \midrule
      $180.13$M & $512$ & $8$ & Epoched FutureFill
        & $12.70 \pm 0.01$ & $37.98 \pm 0.11$ & $88.26 \pm 0.12$ & $189.96 \pm 0.51$ & $438.92 \pm 0.69$ \\
      $218.98$M & $512$ & $12$ & Epoched FutureFill
        & $18.58$ & $55.46 \pm 0.03$ & $129.42 \pm 0.15$ & $277.51 \pm 0.29$ & $645.49 \pm 0.29$ \\
      $417.08$M & $896$ & $8$ & Epoched FutureFill
        & $12.81 \pm 0.01$ & $38.27 \pm 0.19$ & $89.61 \pm 0.13$ & $212.41 \pm 0.16$ & $630.59 \pm 0.20$ \\
      $535.99$M & $896$ & $12$ & Epoched FutureFill
        & $18.53$ & $55.65 \pm 0.01$ & $130.20 \pm 0.30$ & $307.42 \pm 0.13$ & $923.36 \pm 0.33$ \\
      $515.46$M & $1024$ & $8$ & Epoched FutureFill
        & $12.77 \pm 0.03$ & $38.28 \pm 0.22$ & $89.17 \pm 0.77$ & $226.41 \pm 0.02$ & $701.11 \pm 0.12$ \\
      $670.75$M & $1024$ & $12$ & Epoched FutureFill
        & $18.67 \pm 0.01$ & $56.04 \pm 0.12$ & $129.21 \pm 0.52$ & $326.18 \pm 0.05$ & $1028.72 \pm 1.43$ \\
      \midrule
      $180.13$M & $512$ & $8$ & Baseline
        & $9.50 \pm 0.02$ & $28.67 \pm 0.15$ & $79.24$ & $235.57 \pm 0.12$ & $729.12 \pm 0.07$ \\      
      $218.98$M & $512$ & $12$ & Baseline
        & $13.69$ & $41.50 \pm 0.04$ & $116.01 \pm 0.04$ & $347.99 \pm 0.25$ & $1082.01 \pm 0.09$ \\
      $417.08$M & $896$ & $8$ & Baseline
        & $10.37$ & $36.16 \pm 0.02$ & $110.52 \pm 0.02$ & $338.62 \pm 0.04$ & $1101.93 \pm 0.05$  \\
      $535.99$M & $896$ & $12$ & Baseline
        & $14.82 \pm 0.01$ & $52.33$ & $161.64 \pm 0.04$ & $499.29 \pm 0.13$ & $1635.09 \pm 0.10$ \\
      $515.46$M & $1024$ & $8$ & Baseline
        & $10.93 \pm 0.01$ & $39.15 \pm 0.03$ & $120.89 \pm 0.01$ & $373.59 \pm 0.07$ & $1229.29 \pm 0.15$ \\
      $670.75$M & $1024$ & $12$ & Baseline
        & $15.70$ & $56.81 \pm 0.02$ & $176.90 \pm 0.15$ & $550.78 \pm 0.03$ & $1824.47 \pm 0.01$ \\

      \bottomrule
    \end{tabular}
  \end{adjustbox}
  \captionsetup{justification=centering}
  \caption{Inference time (in s) for STU-only models, with prefill on an input prompt of length $L_{prompt} = 4096$ tokens.}
  \label{tab:app_stu_only_results_dim_ablation_with_prefill_4096}
\end{table}

\begin{table}[H]
  \centering
  \scriptsize                   
  \setlength{\tabcolsep}{2pt}    
  \renewcommand{\arraystretch}{0.9}
  \begin{adjustbox}{width=\textwidth,center, scale = 1}
    \begin{tabular}{c c c c*{5}{c}}
      \toprule
      \multirow{2}{*}{\textbf{Parameter count}}
        & \multirow{2}{*}{\textbf{Input dim}}
        & \multirow{2}{*}{\textbf{Layer count}}
        & \multirow{2}{*}{\textbf{Cache Type}}
        & \multicolumn{5}{c}{\textbf{Prefill Times associated with Total length $L (= L_{gen} + L_{prompt})$}} \\
      \cmidrule(lr){5-9}
      & & &
      & \textbf{8192}
      & \textbf{16384} & \textbf{32768}
      & \textbf{65536} & \textbf{131072} \\
      \midrule
      $180.13$M & $512$ & $8$ & Epoched FutureFill
        & $1.41$ & $1.41$ & $1.42$ & $1.45$ & $1.57$ \\
      $218.98$M & $512$ & $12$ & Epoched FutureFill
        & $2.10$ & $2.11$ & $2.13$ & $2.17$ & $2.35$ \\
      $417.08$M & $896$ & $8$ & Epoched FutureFill
        & $2.46$ & $2.47$ & $2.48$ & $2.53$ & $2.74$ \\
      $535.99$M & $896$ & $12$ & Epoched FutureFill
        & $3.67$ & $3.68$ & $3.71$ & $3.78$ & $4.10$ \\
      $515.46$M & $1024$ & $8$ & Epoched FutureFill
        & $2.81$ & $2.81$ & $2.83$ & $2.89$ & $3.13$ \\
      $670.75$M & $1024$ & $12$ & Epoched FutureFill
        & $4.20$ & $4.21$ & $4.24$ & $4.32$ & $4.68$ \\
      \midrule
      $180.13$M & $512$ & $8$ & Baseline
        & $1.40$ & $1.40$ & $1.40$ & $1.40$ & $1.40$ \\      
      $218.98$M & $512$ & $12$ & Baseline
        & $2.08$ & $2.08$ & $2.08$ & $2.08$ & $2.08$ \\
      $417.08$M & $896$ & $8$ & Baseline
        & $2.45$ & $2.45$ & $2.45$ & $2.45$ & $2.45$ \\
      $535.99$M & $896$ & $12$ & Baseline
        & $3.65$ & $3.65$ & $3.65$ & $3.65$ & $3.65$ \\      
      $515.46$M & $1024$ & $8$ & Baseline
        & $2.79$ & $2.79$ & $2.79$ & $2.79$ & $2.79$ \\
      $670.75$M & $1024$ & $12$ & Baseline
        & $4.16$ & $4.16$ & $4.16$ & $4.16$ & $4.16$ \\
      \bottomrule
    \end{tabular}
  \end{adjustbox}
  \captionsetup{justification=centering}
  \caption{Prefill time (in s) for STU-only models, associated with an input prompt of length $L_{prompt} = 4096$ tokens.}
  \label{tab:app_stu_only_results_prefill_time_with_prefill_4096}
\end{table}

\subsection{Additional Ablations on the Epoched-FutureFill cache length $K$, Without Prefill}
\label{sec:app_real_world_exps_ablations_on_K_without_prefill}

\begin{table}[H]
  \centering
  \scriptsize              
  \setlength{\tabcolsep}{2pt}    
  \renewcommand{\arraystretch}{0.9}
  \begin{adjustbox}{width=\textwidth,center, scale = 1}
    \begin{tabular}{c c c c c *{6}{c}}
      \toprule
      \multirow{2}{*}{\textbf{Parameter count}}
        & \multirow{2}{*}{\textbf{Input dim}}
        & \multirow{2}{*}{\textbf{Layer count}}
        & \multirow{2}{*}{\textbf{Cache Type}}
        & \multicolumn{6}{c}{\textbf{FutureFill cache length $K$}} \\
      \cmidrule(lr){5-10}
      & & & 
      & \textbf{128} & \textbf{256}
      & \textbf{512} & \textbf{1024}
      & \textbf{2048} & \textbf{4096} \\
      \midrule
    $417.08$M & $896$ & $8$  & Epoched FutureFill
        & $244.92 \pm 0.19$ & $232.66 \pm 0.08$ & $227.99 \pm 0.31$
        & \textbf{\boldmath$226.28 \pm 0.18$} & $228.75 \pm 0.04$ & $236.42 \pm 0.23$ \\
      $535.99$M & $896$ & $12$ & Epoched FutureFill
        & $357.80 \pm 0.16$ & $339.77 \pm 0.63$ & $332.89 \pm 0.50$
        & \textbf{\boldmath$330.90 \pm 0.52$} & $334.38 \pm 0.03$ & $345.00 \pm 0.10$ \\
      $654.90$M & $896$ & $16$ & Epoched FutureFill
        & $472.03 \pm 0.78$ & $447.05 \pm 0.05$ & $436.86 \pm 0.35$
        & \textbf{\boldmath$434.49 \pm 0.40$} & $439.29 \pm 0.46$ & $454.67 \pm 0.28$ \\
      $515.465$M & $1024$ & $8$ & Epoched FutureFill
        & $262.33 \pm 0.17 $ & $248.88 \pm 0.57$ & $243.49 \pm 0.23$ & \textbf{\boldmath$241.69 \pm 0.10$} & $244.61 \pm 0.45$ & $257.85 \pm 0.06$ \\
      $670.75$M & $1024$ & $12$ & Epoched FutureFill
        & $384.37 \pm 0.024$ & $364.04 \pm 0.52$ & $355.01 \pm 0.17$ & \textbf{\boldmath$352.96 \pm 0.20$} & $358.316 \pm 0.15$ & $377.52 \pm 0.43$ \\
      $826.05$M & $1024$ & $16$ & Epoched FutureFill
        & $505.50 \pm 0.17$ & $478.47 \pm 0.2$ & $467.12 \pm 0.16$ & \textbf{\boldmath$463.74 \pm 0.35$} & $471.32 \pm 0.14$ & $496.88 \pm 0.13$ \\
      \bottomrule
    \end{tabular}
  \end{adjustbox}
  \captionsetup{justification=centering}
  \caption{Inference time (in s) for STU-only models for a fixed generation length of \(65{,}536\) tokens (without prefill).}
  \label{tab:results_ablations_K}
\end{table}

\section{Experimental Comparison with Transformers}
\label{sec:app_exps}
We experimentally evaluate Epoched-FutureFill (Algorithm \ref{alg:epoch_ff}) which has a runtime of $O(L^{3/2} \sqrt{\log L})$ and Continuous-FutureFill (Algorithm \ref{alg:cont_ff}) which has a runtime of $O(L \log^2 L)$ against the naive implementation of convolution which has a runtime of $O(L^2)$ when generating $L$ tokens from scratch. We also provide a comparison with a self-attention based Transformer model (with a standard implementation of KV cache and with the same hidden dimension, number of layers and commensurately chosen other parameters, see next subsection for complete details on these models).

For increasing values of $L$, we measure the time it takes for the model to generate $L$ tokens from scratch (i.e. no prompt provided). In Figure \ref{fig:amortizedtimings-app} we plot the total generation time, as functions of $L$.  We see the behavior that is expected: the naive decoder runs in total time $O(L^2)$, similar to the decoder for transformer while our method EpochedFutureFill is able to achieve a significant sub-quadratic improvement. 

\begin{figure}[H]

        \centering
\includegraphics[width=0.6\linewidth]{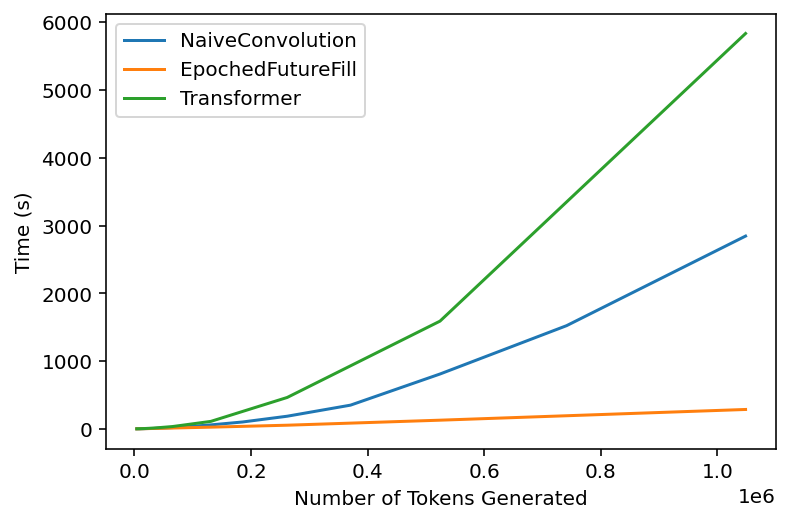}
\caption{Total time for generating $L$ tokens, as a function of $L$. 
}
    \label{fig:amortizedtimings-app}
    
\end{figure}

In the next section we provide the details of our implementation.

\subsection{Experiment Details}

For our experiments we consider a two layer model with either multi-headed self-attention layers (referred to as Transformer) or STU layers (referred to as convolutional network). The hidden dimension $d$ (or the model dimension) of the networks are fixed to be 32, and for the Transformer we set the number of heads to be $4$ and the key/value size to be $8$. The networks do not have embedding or unembedding layers, and contain standard implementations of residual connections, layer-norms and a feed-forward (FFN) layer between every attention or STU layer. More information on the STU with tensordot approximation is available in Appendix \ref{sec:app_real_world_exps_impl_details}. The FFN layer used in the experiments is the $\textrm{FFN}_{\textrm{GeGLU}}$ layer proposed in \cite{shazeer2020glu}. For the Transformer we employ a standard implementation of KV-cache for efficiency (i.e. caching the KV values of previously generated tokens for every attention layer). 

Since we have equated the hidden dimensionality of the network across all our settings, we can see that, naively computed, the number of flops per token of both the Transformer as well as the convolutional model are of the same order which is also observed in the experiments. 


Finally the experiments in this section are implemented in Jax \cite{jax2018github} were performed on a single Google TPUv2 machine (\cite{jouppi2020domain}).
\section{Extended Related Work and Details on Convolutional Sequence Prediction Models}
\subsection{Related Work}\label{sec:related_work_deets}
\paragraph{State space models and convolutional sequence prediction.} 
Recurrent neural networks have been revisited in recent deep learning literature for sequence prediction in the form of state space models (SSMs), many of which can be parameterized as convolutional models.
\cite{NEURIPS2020hippo} propose the HiPPO framework for continuous-time memorization, and shows that with a special class of system matrices $A$ (HiPPO matrices), SSMs have the capacity for long-range memory. Later works \cite{gu2021combining,gu2021efficiently,gupta2022diagonal,smith2023simplified} focus on removing nonlinearities and devising computationally efficient methods that are also numerically stable. To improve the performance of SSMs on language modeling tasks \cite{dao2022hungry} propose architectural changes as well as FFT algorithms with better hardware utilization, to close the speed gap between SSMs and Transformers. 
Further investigation in \cite{orvieto2023resurrecting} shows that training SSMs is brittle in terms of various hyperparameters. Many convolutional models have been proposed for sequence modelling, see e.g. \cite{fu2023simple,li2022makes,pmlr-v202-shi23f}.
These works parameterize the convolution kernels with specific structures. The Hyena architecture was proposed in \cite{poli2023hyena}  and distilling it into an SSM was studied in \cite{massaroli2024laughing}. Other proposed convolutional models include the LongConv \cite{fu2023simple} and SGConv \cite{li2022makes} architectures, as well as multi-resolution convolutional models \cite{shi2023sequencemodelingmultiresolutionconvolutional}.

\paragraph{Spectral filtering.} A promising technique for learning in linear dynamical systems with long memory is called  spectral filtering  put forth in \cite{hazan2017learning}. This work studies online prediction of the sequence of observations $y_t$,  and the goal is to predict as well as the best symmetric LDS using past inputs and observations. Directly learning the dynamics is a non-convex optimization problem, and spectral filtering is developed as an improper learning technique with an efficient, polynomial-time algorithm and near-optimal regret guarantees. Different from regression-based methods that aim to identify the system dynamics, spectral filtering's guarantee does not depend on the stability of the underlying system, and is the first method to obtain condition number-free regret guarantees for the MIMO setting. Extension to asymmetric dynamical systems was further studied in \cite{hazan2018spectral}. Spectral filtering is particularly relevant to this study since it is a convolutional model with fixed filters. Thus, our results can be immediately applied to this technique and imply provable regret bounds with guaranteed running time, improving upon the state of the art.

\paragraph{Online learning and regret minimization in sequence prediction.}
The methodology of online convex optimization, see e.g. \cite{hazan2016introduction}, applies to sequences prediction naturally. In this setting, a learner iteratively predicts, and suffers a loss according to an adversarially chosen loss function. Since nature is assumed to be adversarial, statistical guarantees are not applicable, and performance is measured in terms of regret, or the difference between the total loss and that of the best algorithm in hindsight from a class of predictors. This is a particulary useful setting for sequential prediction since it requires no assumptions on the true sequence and leads to robust methods. Sequential prediction methods that apply to dynamical systems are more complex as they incorporate the notion of a state. Recently the theory of online convex optimization has been applied to learning in dynamical systems, and the spectral filtering methodology was developed in this context. See \cite{hazan2022introduction} for an introduction to this area.  

In independent work \cite{oncescu2024flash} presents a very similar algorithm for inference with convolutional models, with a total runtime of $O(L \log^2(L))$ (same as our Continuous-FutureFill result) via the method of relaxed polynomial interpolation. Our algorithm builds on the simple and intuitive idea of FutureFill, allowing us to create a spectrum of trade-offs between compute and memory. An intermediate point on this spectrum is the Epoched-FutureFill algorithm, which has a streamlined implementation, low memory usage, and potentially stronger performance in practice.     

\subsection{More Details on Convolutional Sequence Prediction Models}
\label{sec:con_seq_deets}
\paragraph{State Space Models} State space models such as those considered in \cite{gu2021efficiently} have shown considerable success and adoption for long range sequence modelling. They can be defined  via the following dynamics equation of a Linear Dynamical System (LDS)
\begin{align}
 x_{t} &= A x_{t-1} + B u_t, y_{t} = C x_{t} + D u_t  \label{eqn:LDS}
\end{align}
where $u, y$ are the input and output sequences and $A,B,C,D$ are the learned parameters. Various works deal with specifications of this model including initialization \citep{NEURIPS2020hippo}, diagonal versions \citep{gupta2022diagonal}, gating \citep{mehta2023long} and other effective simplifications \citep{smith2023simplified}. All these models can be captured by convolutional models  since the output sequence $y$ in \eqref{eqn:LDS} can be written as $$y = \phi * u + Du,$$
where the filter $\phi$ satisfies $\phi_{i} = C A^{i-1} B$. Thus a convolutional sequence model with learnable filters $\phi$ generalizes these SSMs. However, SSMs are more efficient for generation as they can generate a token in constant time. 

\paragraph{LongConv/SGConv. } The LongConv \citep{fu2023simple} and SGConv \citep{li2022makes} architectures exploit the above connection and propose direct regularizations of the convolution kernel to bias them towards representing a state space model.

\paragraph{Spectral Transform Units. }  The STU architecture was proposed in \cite{agarwal2023spectral} based on the spectral filtering technique for linear dynamical systems \citep{hazan2017learning,hazan2018spectral}. These are convolutional sequence models based on carefully constructed filters that are {\bf not data-dependent}. 
Let $\phi_1,...,\phi_k$ be the first $k$ eigenvectors of the Hankel matrix $H_L$ given by 
$$
    H_L = \int_{0}^1 \mu_\alpha \mu_\alpha^\top d\alpha \ \in \reals^{L \times L },\ \ \  \mu_\alpha = (\alpha-1)[1 , \alpha, \alpha^2 ,.., \alpha^{L-1}] .
$$
The STU predicts according to the following rule \footnote{more precisely, there are additional linear and constant terms depending on the exact filters used, such as $ \hat{y}_t =   \hat{y}_{t-2} +  \sum_{i=1}^{3} M^{u}_{i} u_{t+1-i} + \sum_{i=1}^k M_i \langle \phi_i , u_{t:t-L} \rangle $, see \cite{agarwal2023spectral} for more details.} 
$ \hat{y}_t =  \sum_{i=1}^k M_i \langle \phi_i , u_{t:t-L} \rangle  , $
where $M_{1:k}$ are learned projection matrices. Note that the inner products $\langle \phi_i , u_{t:t-L} \rangle$ are the outputs of $\phi_i * u$.
The STU architecture is particularly appealing for learning LDS with long memory, as demonstrated by its dimension-free sublinear regret guarantees for this setting \cite{agarwal2023spectral}. 

\subsection{Algorithm Schematics}
\label{sec:algoscematics}

We provide illustrations of the FutureFill operation in Figure \ref{fig:ffillschematic}. We further provide schematics describing our Algorithms \ref{alg:epoch_ff} and \ref{alg:cont_ff} in Figures \ref{fig:epoch_ff} and \ref{fig:qlinearschematic}

\begin{figure}
    \includegraphics[width=0.6\linewidth]{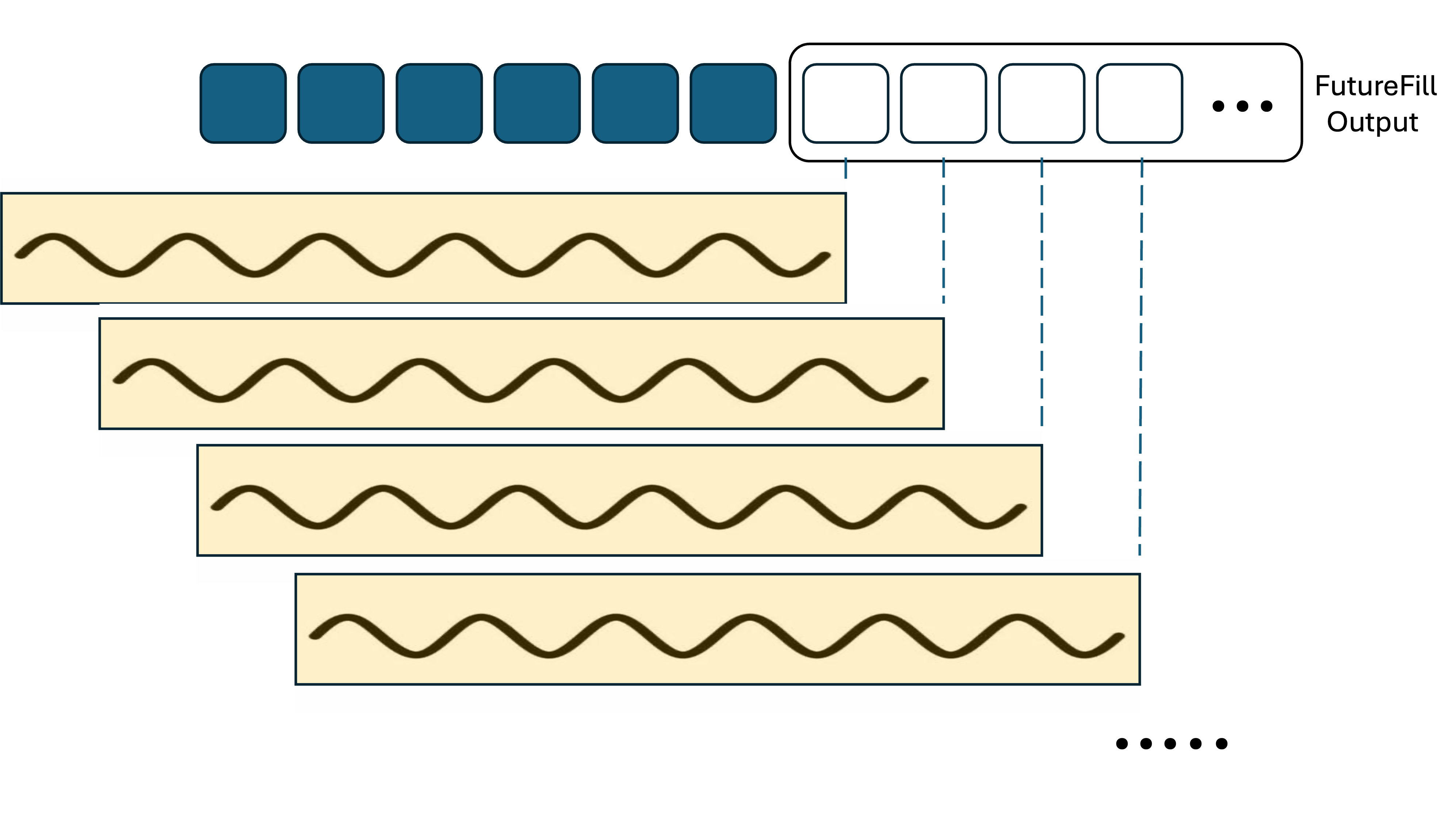}
    \captionof{figure}{FutureFill between an input sequence and a convolutional filter.}
    \label{fig:ffillschematic}
\end{figure}

\begin{figure}[h]
    \centering
    \includegraphics[width=0.8\linewidth]{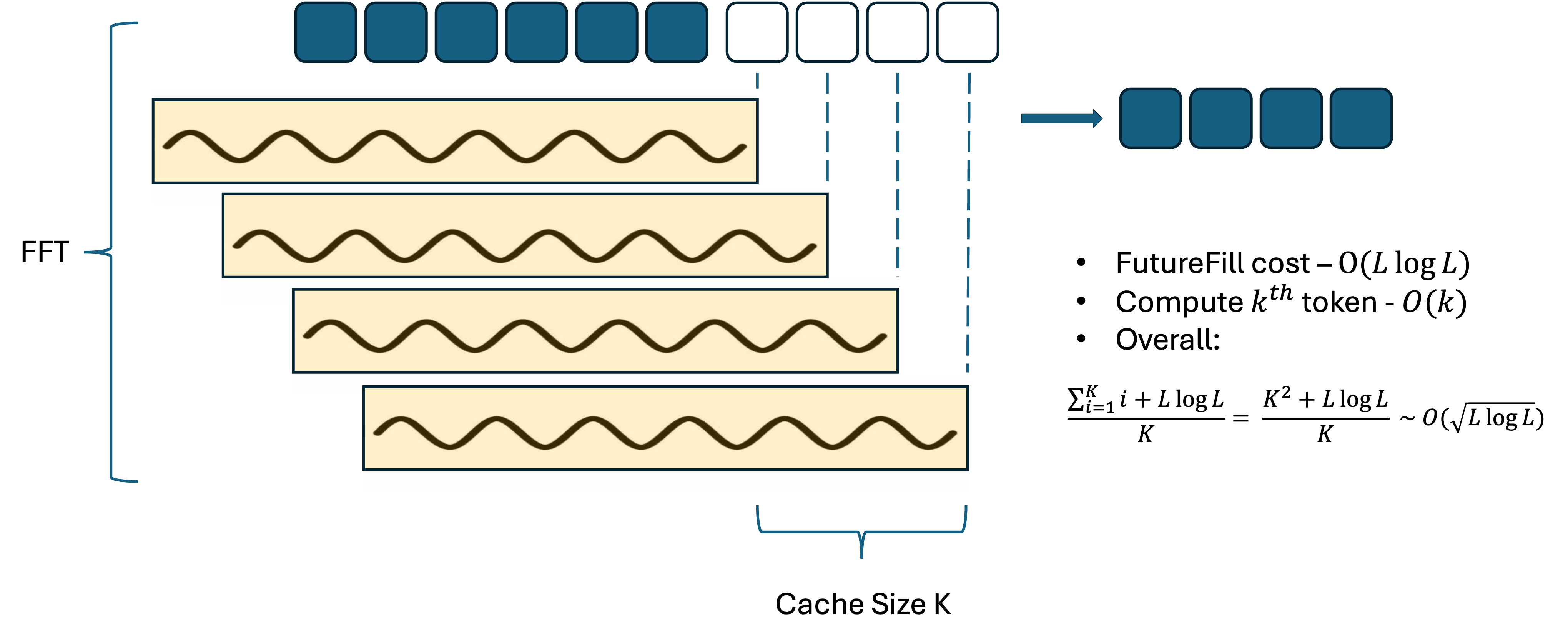}
    \caption{Illustration for Algorithm \ref{alg:epoch_ff}}
    \label{fig:epoch_ff}
\end{figure}
\begin{figure}[h]
    \centering
    \includegraphics[width=0.6\linewidth]{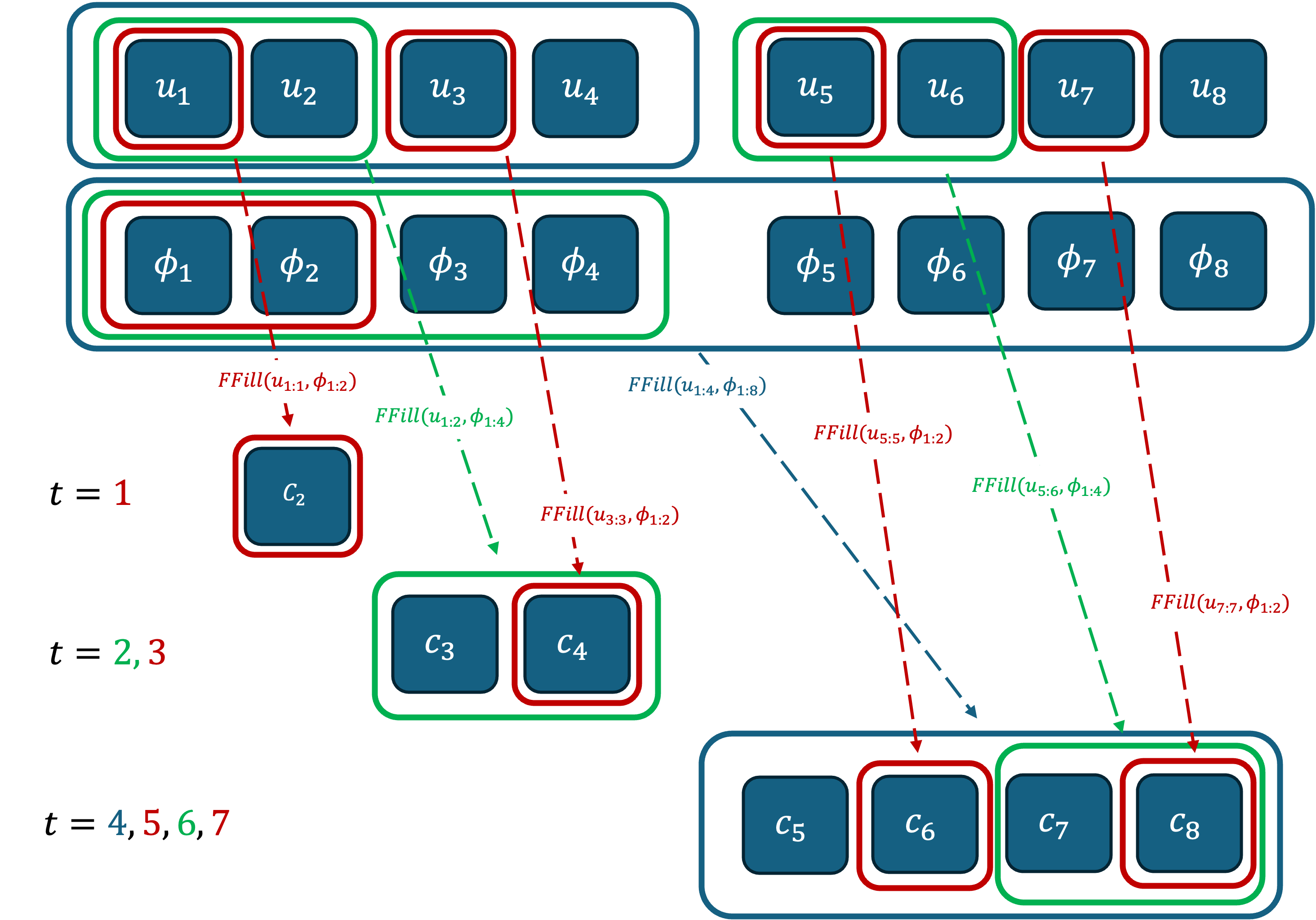}
    \caption{Quasilinear Online Convolution using FutureFill: Figure shows the execution flow for Algorithm \ref{alg:cont_ff} for convolving $8$-length sequences. Input sequence $u$ streams in an online fashion and filter $\phi$ is fully available to the algorithm. Colors are representative of the size of the FutureFill operations performed and the time $t$ (also color-coded) highlights when the FutureFill operations were performed.}
    \label{fig:qlinearschematic}
\end{figure}

\subsection{Algorithm for Fast Auto-regressive Sequence Generation from a Prompt}
\label{sec:algofastprompt}

\begin{algorithm}[h]
\caption{Fast auto-regressive sequence generation from a prompt using FutureFill} \label{alg:efficient_sf_pf}
\begin{algorithmic}[1]
\STATE {\bf Input:} Generation length $K > 0, L >0$,  prompt $p_{1:L}$, convolutional filter $\phi \in \reals^{L+K}$.

\STATE \label{alg_line:fft_compute} Set up a FutureFill cache $C \in \reals^K$ as  $C \leftarrow \mathrm{FutureFill}(p, \phi).$
\STATE Set up the online convolution algorithm with filter $\phi$ and sequence length $K$, i.e. $\A \leftarrow \mathrm{ContinuousFutureFill}(\phi).$ 
\STATE Running candidate token $y \leftarrow 0$.
\FOR{ $t = 1,..., K$}
\STATE Output $\hat{y}_t = C_t + y$.
\STATE Generate next token candidate $y \leftarrow \A(\hat{y}_t)$.
\ENDFOR 
\end{algorithmic}
\end{algorithm}

\section{Fast Online Convolutional Prediction}
In this section, we give a more detailed treatment on how FutureFill improves online convolutional prediction in the context of regret minimization. When predicting a sequence in an auto-regressive fashion, an online learner iteratively sees an input $u_t$ and has to predict output $\hat{y}_t$, after which the true output $y_t$ is revealed. The goal is to minimize error according to a given Lipschitz loss function $\ell_t(y_t, \hat{y}_t)$. 
In online learning it is uncommon to assume that the true output sequence was generated by the same family of models as those learned by the learner. As a result the metric of performance is usually taken to be regret. 
Given a class of possible predictors, the goal is to minimize regret with respect to these predictors. For example, a linear predictor predicts according to the rule 
$$ \pi_{M_{1:k},N_{1:l}} (u_{1:t},y_{1:t-1}) = \sum_{i=1}^k M_i u_{t-i} + \sum_{j=1}^l N_j y_{t-j}.$$
The performance of a prediction algorithm $\mA$ is measured by regret, or difference in total loss compared to a class of predictors $\prod$, such as that of linear predictors, e.g. 
$$ \regret_T(\mA) = \sum_{t=1}^T \ell_t( y_t , \hat{y}_t^\mA ) - \min_{\pi \in \prod} \sum_{t=1}^T \ell_t( y_t , \hat{y}_t^\pi ). $$

This formulation is valid for online sequence prediction of any signal. 
We are particularly interested in signals that are generated by dynamical systems. 
A partially observed time-invariant linear dynamical system is given by the dynamics equations 
$$ x_{t+1} = A x_t + B u_t + w_t, \ \ y_{t} = C x_t + D u_t + \zeta_t , $$
where $x_t$ is the (hidden) state, $u_t$ is the input or control to the system, and $y_t$ is the observation. The terms $w_t, \zeta_t$ are noise terms, and the matrices $A,B,C,D$ are called the system matrices. 
A linear dynamical predictor with parameters $A,B,C,D$ predicts according to 
$$ \pi_{ABCD} (u_{1:t},y_{1:t-1}) = \sum_{i=1}^{t-1} C A^{i-1} B u_{t-i} + D u_t . $$
The best such predictor for a given sequence is also called the optimal open loop predictor, and it is accurate if the signal is generated by an LDS without noise. 

When modeling long-range dependencies, the class of marginally stable linear dynamical systems is of particular interest. Marginally stable systems are systems whose dynamics matrix $A$ has eigenvalues of magnitude up to 1, and thus observations $y_t$ can depend on inputs that are arbitrarily far in the past. The long-range dependencies also make learning these systems challenging, and most techniques based on system identification do not have guarantees in this setting. The spectral filtering algorithm \citep{hazan2017learning} is a convex relaxation of the problem of learning marginally stable LDS online, and was the first algorithm to achieve sublinear, hidden dimension-free regret for learning systems with symmetric dynamics matrices. Spectral filtering uses convolutions to compute the prediction at each time step, and we demonstrate below how FutureFill can naturally be applied to accelerate this algorithm. 

\subsection{Case Study: Fast Online Spectral Filtering}


We illustrate in more detail how the method works for the spectral filtering algorithm from \cite{hazan2017learning}, improving the total running time from $O(L^2)$ to $O(L \log^2 L)$ while maintaining the same regret bound. 

\begin{algorithm}[h]
\caption{Efficient Spectral Filtering via FutureFill} \label{alg:efficient_sf}
\begin{algorithmic}[1]
\STATE {\bf Input:} Number of filters $N > 0, L >0$. 
\STATE Set variables $\{M_{1}^1 \ldots M_{N}^1 \in \reals^{d_{out} \times d_{in}} \leftarrow 0 \}$ and set $\{\phi_1 \ldots \phi_N\}$ as the largest eigenvectors of $H_L$, the Hankel matrix corresponding to length-$L$ sequences. 
\STATE Initialize $N$ OnlineConvolution modules, one for each filter $\{\A_k(\phi_k)\}_{k=1}^N$. 
\FOR {$t = 1,2,...,L$}
\STATE Receive input token $u_t$. 
\FOR {$k=1, 2, \ldots N$}
    \STATE $F_{k} \leftarrow \A_k(\phi_k)(u_t)$.
\ENDFOR

\STATE \label{alg_line:shalom} Compute and predict  
$ \hat{y}_t = \sum_{k=1}^N M_{k}^t F_k.$

\STATE Observe $y_t$, suffer loss $\ell_t(M_{1:k}^t) = \|y_t - \hat{y}_t\|^2$, and update 
$ M^{t+1}_{1:k} \leftarrow \nabla \ell_t( M^t_{1:k} ) .$

\ENDFOR 
\end{algorithmic}
\end{algorithm}

\ignore{
\subsection{Spectral Policies and their Regret}
\xc{Should we not include this section?}
To understand the theoretical guarantee of Algorithm \ref{alg:efficient_sf}, we first recall the original spectral filtering algorithm for linear dynamical system, and generalize it to use smaller context lengths. 
\begin{algorithm}[h]
\caption{Spectral Filtering with Context Window} \label{alg:window_sf}
\begin{algorithmic}[1]
\STATE {\bf Input:} $k > 0, L >0$. Set $M_{1:k} \leftarrow 0$, $\Phi_{1:k}$ be the largest eigenvectors of $H_L$. 
\FOR {$t = 1,2,...,T$}

\STATE \label{alg_line:shalom-window} Compute and predict  
$$ \hat{y}_t = \sum_{i=1}^k M_{i}  \cdot \left \langle \Phi_{i}  , u_{t:t-L} \right \rangle 
 $$
\STATE Observe $y_t$, and update 
$$ M^{t+1}_{1:k} \leftarrow \nabla_{M} \ell_t( M^t_{1:k} ) .$$
\ENDFOR 
\end{algorithmic}
\end{algorithm}

The main theorem in \cite{hazan2017learning} was that for $L=T$, 
$$ \regret_T(\mA) = \sum_{t=1}^T \ell_t( y_t , \hat{y}_t^\mA ) - \min_{\pi_{ABCD} \in \prod_{LDS}} \sum_{t=1}^T \ell_t( y_t , \hat{y}_t^\pi ) = \tilde{O}(\sqrt{T} + T e^{-\frac{k}{\log T}}). $$
However, the same is not true for $L < T$, as the policy class of using spectral filters with limited context window is more restrictive. 
Define the class of policies that use spectral filtering over a context length of size $L$ as follows
$$ \prod_L = \left\{ \pi_{M_{1:K}} | \pi_{M_{1:L}}(u_{t:t-L}) = \sum_{i=1}^L M_i \langle \Phi_i , u_{t:t-L} \rangle  \right\} . $$
Using the properties of OGD, we can show that the regret of Algorithm \ref{alg_line:shalom-window} vs the class of all context-bounded policies is bounded as follows 
$$ \sum_{t=1}^T \ell_t( y_t , \hat{y}_t^\mA ) - \min_{\pi_{M} \in \prod_{L}} \sum_{t=1}^T \ell_t( y_t , \hat{y}_t^\pi ) = \tilde{O}(D G \sqrt{T} + T e^{-\frac{k}{\log L}} ), $$
where $G,D$ are constants that depend on the norm of the matrices $M$ and the Lipschitz constant of the loss functions $\ell_t$. 
However, for $L < T$ the policy class of all $L$-spectral filtered predictors is more limited than general online spectral filtering. 
}

The main claim regarding the performance of Algorithm \ref{alg:efficient_sf} follows directly from Theorems \ref{thm:epoch_ff} and \ref{thm:cont_ff} and is as follows. 

\begin{corollary}
\label{thm:main}
Algorithm \ref{alg:efficient_sf} with sequence length $L$ guarantees the same regret bound as spectral filtering \citep{hazan2017learning} with context length $L$. Furthermore its computational complexity based on the online convolution module used are as follows:
\begin{itemize}
    \item If using EpochedFutureFill(Algorithm \ref{alg:epoch_ff}): Runtime - $O(L^{3/2}\sqrt{\log L})$, Memory - $O(\sqrt{L \log L})$.
    \item If using ContinuousFutureFill(Algorithm \ref{alg:cont_ff}): Runtime - $O(L \log^2 L)$, Memory - $O(L)$.
\end{itemize}
\end{corollary}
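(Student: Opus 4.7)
The plan is to establish the corollary in two independent parts: prediction equivalence, which transfers the regret bound verbatim, and runtime/memory accounting, which reduces to invoking Theorems \ref{thm:epoch_ff} and \ref{thm:cont_ff}.

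First, I would argue that Algorithm \ref{alg:efficient_sf} produces exactly the same prediction sequence $\hat{y}_1, \ldots, \hat{y}_L$ as the original spectral filtering algorithm of \cite{hazan2017learning} with context length $L$. By the definition of online convolution and \eqref{eqn:convdef}, each module $\A_k(\phi_k)$ returns, upon receiving the streamed token $u_t$, the quantity $[\phi_k * u]_t = \langle \phi_k, u_{t:t-L}\rangle$ under the convention that $u_j = 0$ for $j \le 0$. Hence the $F_k$ computed in Algorithm \ref{alg:efficient_sf} matches the filter output used by the original spectral filtering algorithm, so the combined prediction at line \ref{alg_line:shalom}, the incurred loss $\ell_t$, and the OGD update on $M_{1:k}$ are identical across the two algorithms. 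Because the prediction sequences coincide token-for-token, the regret guarantee of \cite{hazan2017learning} over a length-$L$ context transfers verbatim.

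Second, I would account for the runtime and memory by aggregating the costs of the $N$ online convolution subroutines with the cheap per-step bookkeeping. Forming the linear combination $\sum_{k=1}^N M_k^t F_k$ and performing the gradient step on $M_{1:k}$ at every round costs $O(N)$ time, so the aggregate bookkeeping contributes $O(NL)$, which is dominated (treating $N$ as a constant or a polylog factor) by the cost of executing the convolution modules themselves. Plugging in Theorem \ref{thm:epoch_ff} then yields total runtime $O(L^{3/2}\sqrt{\log L})$ and memory $O(\sqrt{L\log L})$ when using Epoched-FutureFill, while plugging in Theorem \ref{thm:cont_ff} yields $O(L\log^2 L)$ runtime and $O(L)$ memory when using Continuous-FutureFill.

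The main obstacle, if any, is clerical rather than mathematical: one must verify that at every round $t$ the output of each online convolution module aligns precisely with the inner product $\langle \phi_k, u_{t:t-L}\rangle$ used in the spectral filter's prediction rule, including the boundary behavior when $t < L$ where implicit zero-padding of the input sequence is required. Once this alignment is checked, both conclusions of the corollary follow by direct invocation of Theorems \ref{thm:epoch_ff} and \ref{thm:cont_ff} together with the existing regret analysis of spectral filtering, with no new analytical content needed.
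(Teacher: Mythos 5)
Your proposal is correct and matches the paper's treatment: the paper likewise observes that the prediction rule of Algorithm \ref{alg:efficient_sf} coincides with that of standard spectral filtering (so the regret bound transfers), and then obtains the complexity claims by directly invoking Theorems \ref{thm:epoch_ff} and \ref{thm:cont_ff} for the chosen online convolution module. Your extra care about the boundary alignment $[u*\phi_k]_t = \langle \phi_k, u_{t:t-L}\rangle$ under zero-padding is a valid but routine check consistent with the paper's conventions.
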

    
\ignore{
\begin{proof}
The performance guarantee is immediate since line \ref{alg_line:shalom} is equivalent to  the standard spectral filtering algorithm, namely
$$ \hat{y}_t = \sum_{i=1}^k M_i \langle \phi_i , u_{t:t-L} \rangle .  $$

It remains to calculate the amortized computational complexity per iteration. 
The complexity is composed of two components as follows
\begin{enumerate}
    \item Every iteration line \ref{alg_line:shalom} is implemented. It is composed of one term, which we already computed and saved in line \ref{alg_line:shalom2}, which we can retreive in time $O(1)$. The other term is a sum of $\tau$ products, for total compute of $\tau$. 

    \item 
    Every $\sqrt{L}$ iterations, we have to compute and store $\sqrt{L}$ terms, in line \ref{alg_line:shalom2}. Using the FFT this takes $O(L \log L)$ time. 
\end{enumerate}

The overall amortized complexity is computing by summing over $\sqrt{L}$ consecutive iterations and averaging. In each such block we compute the FFT exactly once, and hence 
$$ \frac{  L \log L +  \sum_{\tau=1}^{\sqrt{L}} \tau } {\sqrt{L}} \sim \frac{ L \log L + L }{\sqrt{L}} \sim \sqrt{L} \log L. $$
\end{proof}
}

\section{Deferred Proofs}
\label{sec:def_proofs}
\begin{proof} [Proof of Proposition \ref{prop:future_fill}]
Note that by definition, $[a*b]_s = \sum_{i=1}^{s} a_i b_{s+1-i}$. We now consider the two cases: for $s \leq t_1$, we have that 
\[[a_{1:t_1}*b_{1:t_1}]_{s} = \sum_{i=1}^{s} a_i b_{s+1-i} = [a*b]_{s}.\]
For the case when $t \geq s > t_1$, we have that
\[[a_{t_1+1:t}*b_{1:t-t_1}]_{s-t_1} = \sum_{i=1}^{s-t_1} a_{t_1 + i} b_{s -t_1 +1-i} = \sum_{i=t_1+1}^{s} a_{i} b_{s+1-i},\]
where the last equality follows by redefining $i = t_1+i$. Further we have that
\[[\mathrm{FutureFill}(a_{1:t_1}, b)]_{s-t_1} = \sum_{i=1}^{t-s+t_1} a_{t_1-i+1} \cdot b_{s-t_1+i} = \sum_{i=1}^{t_1} a_{t_1-i+1} \cdot b_{s-t_1+i} = \sum_{i=1}^{t_1} a_{i} \cdot b_{s+1-i},\]
where the second last equality follows by noting that $a_j$ is assumed to be 0 for all $j \leq 0$ and the last equality follows by redefining $i = t_1 -i+1$. Overall putting the two together we get that 
\[[a_{t_1+1:t}*b_{1:t-t_1}]_{s-t_1} + [\mathrm{FutureFill}(a_{1:t_1}, b)]_{s-t_1} = \sum_{i=1}^{t_1} a_{i} \cdot b_{s+1-i} + \sum_{i=1}^{t_1} a_{i} \cdot b_{s+1-i} = \sum_{i=1}^{s} a_{i} \cdot b_{s+1-i} = [a*b]_s.\]
This finishes the proof.
\end{proof}

\subsection{Proofs for Algorithm \ref{alg:epoch_ff}}

\begin{proof} [Proof of correctness for Algorithm \ref{alg:epoch_ff}]

Consider any time $t$ and the output $\hat{y}_t$. Let $t' \leq t$ be the last time when Line \ref{alg_line:ff_comp_epoch_ff} was executed, i.e. FutureFill was computed. By definition $t' = t - \tau$. Note the following computations.

\begin{align*}
    \hat{y}_t &= \sum_{j=1}^\tau u_{t+1-j} \cdot \phi_j + C_{\tau} = \sum_{j=1}^\tau u_{t+1-j} \cdot \phi_j + [\mathrm{FutureFill}(u_{1:t'}, \phi_{1:t'+K})]_{\tau} \\
    &= \sum_{j=1}^\tau u_{t+1-j} \cdot \phi_j + \sum_{j=1}^{t' + K - \tau} u_{t'-j+1} \cdot \phi_{\tau + j} \\
    &= \sum_{j=1}^\tau u_{t+1-j} \cdot \phi_j + \sum_{j=1}^{t'} u_{t'  -j+1} \cdot \phi_{\tau + j} \\
    &= \sum_{j=1}^\tau u_{t+1-j} \cdot \phi_j + \sum_{j=1}^{t - \tau} u_{t - \tau-j+1} \cdot \phi_{\tau + j} \\
    &= \sum_{j=1}^\tau u_{t+1-j} \cdot \phi_j + \sum_{j=\tau + 1}^{t} u_{t -j+1} \cdot \phi_{j}  = [u * \phi]_t\\
\end{align*}
    
\end{proof}

\subsection{Proofs for Algorithm \ref{alg:cont_ff}}

\begin{proof} [Proof of Theorem \ref{thm:cont_ff}]
    As can be seen from the algorithm for every generated token the most expensive operation is the FutureFill computed in Line \ref{alg_line:ff_comp_cont_ff} so we bound the total runtime of that operation. Note that at any time $t$, the cost of FutureFill operation is $O((1\vee k(t)) \cdot 2^{k(t)})$, where $a\vee b$ denotes the max of $a$ and $b$. Summing this over every time step $t$ we get, 
    \begin{multline*}
         \sum_{t=1}^{L} (1\vee k(t)) 2^{k(t)} = \sum_{k=0}^{\lfloor \log L \rfloor} | \{t: k(t) = k\} | (1\vee k)2^{k} \\ 
         \le L+\sum_{k=1}^{\lfloor \log L \rfloor} 2^{\lfloor \log L \rfloor - k + 1} \cdot k 2^{k} \leq 3 L \sum_{k=1}^{\lfloor \log L \rfloor} k \le 3 L \log^{2} L.
    \end{multline*}
    Thus the total runtime of the algorithm is bounded by $O(L \log^2 L)$.
\end{proof}

\begin{proof} [Proof of correcteness for Algorithm \ref{alg:cont_ff}]
    We will focus on showing that $C_t = \sum_{i=2}^t u_{t+1-i}\phi_{i}$. Since the output is $C_t + u_t \cdot \phi_1$, this will suffice for the proof. For brevity of the proof and without loss of generality we will assume $L$ is a power of $2$. For cleaner presentation for the $s^{th}$ coordinate of vector $v$ we will use the notation $v_s$ and $v[s]$ interchanegably in this section. 

    We first introduce some definitions for convenience in this section. Given an index $i \leq L$ we define its decomposition $\{i_1, i_2 \ldots i_m\}$ as the unique sequence of numbers $\leq \log L$ such that following holds 
    \[ i_1 > i_2 > i_3 \ldots \text{ and } i = \sum_{j} 2^{i_j}.\]
    These indices correspond to the ones in a $\log L$-bit representation of $i$. Note that $k(i)$ as defined in the algorithm is equal to $i_m$. Further we define the cumulants of $i$ as the following sequence of numbers $\{ i'_1, i'_2 \ldots \}$ satisfying
    \[ i'_\tau = \sum_{j=1}^{\tau} 2^{i_j}.\]
    Thus we have that $i'_1 < i'_2 < \ldots i'_m = i$. We now prove the following lemma which specifies when the FutureFill cache gets updated in an execution of the algorithm.

    \begin{lemma}
    \label{lem:C_updates}
        Given an index $i \leq L$, consider its decomposition $\{i_1, i_2 \ldots i_m\}$ and cumulants $\{i'_1, i'_2 \ldots i'_m\}$ as defined above. It holds that the value of $C_{i+1}$ is updated (as in Line 8 in the algorithm) only when $t$ is one of $\{i'_1, i'_2 \ldots i'_m\}$. 
    \end{lemma}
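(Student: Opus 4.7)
The plan is to translate the statement into a question about binary representations. From the algorithm, $C_{i+1}$ is updated at time $t$ exactly when $t+1 \le i+1 \le t + 2^{k(t)}$, i.e. when
\[ i - 2^{k(t)} + 1 \le t \le i. \]
Writing $k = k(t)$, this is the condition that $t$ is a multiple of $2^k$ (but not of $2^{k+1}$) lying in a window of length $2^k$ ending at $i$. Equivalently, $t$ and $i$ must agree in every bit at position $\ge k$, and bit $k$ of each must be $1$. I would prove the ``if'' and ``only if'' directions separately against this characterization.

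For the ``if'' direction, fix $\tau \in [m]$ and set $t = i'_\tau = 2^{i_1} + \cdots + 2^{i_\tau}$. Since $i_1 > \cdots > i_\tau$, the smallest power of two appearing is $2^{i_\tau}$, so $k(t) = i_\tau$. The difference $i - t = \sum_{j=\tau+1}^{m} 2^{i_j}$ is nonnegative (so $t \le i$), and since each $i_j$ with $j > \tau$ is strictly less than $i_\tau$, the geometric sum bound gives
\[ i - t \le \sum_{\ell = 0}^{i_\tau - 1} 2^{\ell} = 2^{i_\tau} - 1 = 2^{k(t)} - 1. \]
Thus $t \in [i - 2^{k(t)} + 1, i]$ and so $C_{i+1}$ is updated at $t = i'_\tau$.

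For the ``only if'' direction, suppose $C_{i+1}$ is updated at time $t$, and let $k = k(t)$. Since $0 \le i - t \le 2^k - 1$, the binary expansions of $t$ and $i$ coincide at every bit position $\ge k$, and $t$ is obtained from $i$ by zeroing out some subset of the bits at positions $0, 1, \ldots, k-1$. Because $k(t) = k$, bit $k$ of $t$ is $1$, hence bit $k$ of $i$ is $1$ as well, so there is some index $\tau$ with $i_\tau = k$. Then the set of bits of $t$ is exactly $\{i_1, \ldots, i_\tau\}$ (all the bits of $i$ of position at least $k$), giving $t = \sum_{j=1}^{\tau} 2^{i_j} = i'_\tau$, as claimed.

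The main conceptual step is recognizing that the update window at time $t$ corresponds to preserving the high-order bits $\ge k(t)$ of an index while freely choosing the low-order bits, and that the cumulants $\{i'_1, \ldots, i'_m\}$ are exactly the strictly increasing prefixes obtained from $i$'s binary expansion in decreasing bit-position order. Once this correspondence is spelled out, the lemma reduces to the simple geometric bound $\sum_{\ell < i_\tau} 2^\ell = 2^{i_\tau} - 1$, which I expect to be the only nontrivial calculation.
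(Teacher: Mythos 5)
Your proof is correct and follows essentially the same route as the paper's: both arguments reduce the update condition $i+1 \in [t+1, t+2^{k(t)}]$ to a statement about binary representations, showing that the admissible $t$ are exactly the prefix sums (cumulants) of $i$'s bit decomposition. If anything, your treatment of the ``only if'' direction is more explicit than the paper's, which compresses that step into ``it can then be seen that''; your observation that $t$ must agree with $i$ on all bits at positions $\ge k(t)$ and have bit $k(t)$ set fills in exactly that gap.
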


    A direct consequence of the above lemma is that given any index $i$ we have that the value of $C_{i+1}$ is not updated after time step $i$. Further using the decomposition $\{i_1, i_2 \ldots i_m\}$ and cumulants $\{i'_1, i'_2 \ldots i'_m\}$ of $i$ and the update equations for $C$ (Line 8), we have that final value of $C_{i+1}$ is given by the following, 
    \begin{align*}
        C_{i+1} &= \sum_{j=1}^{m} \mathrm{FutureFill}(u[i'_j - 2^{i_j} + 1:i'_j], \phi[1:2^{i_j+1}])[i+1-i'_j] \\
        &= \sum_{j=1}^{m} \sum_{k=1}^{2^{i_j}} u[i'_j - k + 1] \cdot \phi[i+1-i'_j+k] \\
        &= \sum_{j=1}^{m} \;\; \sum_{r=i'_j-2^{i_j}+1}^{i'_j} u[r] \cdot \phi[i+1 - r + 1] \\
        &= \sum_{r=1}^{i} u[r] \cdot \phi[i+1 - r + 1]
    \end{align*}
    
    Thus the output of the algorithm for any $i$, satisfies \[\hat{y}_{i+1} = C_{i+1} + u_{i+1} \cdot \phi_1 = \sum_{r=1}^{i} u[r] \cdot \phi[i+1 - r + 1] + u_{i+1} \cdot \phi_1 = \sum_{r=1}^{i+1} u[r] \cdot \phi[i+1 - r + 1] = [u*\phi]_{i+1}.\]
    
    This proves the requisite. We finally provide a proof of Lemma \ref{lem:C_updates} to finish the proof.

    \begin{proof} [Proof of Lemma \ref{lem:C_updates}]
        By the definition of the algorithm, to be able to update $C_{i+1}$ at some time $t < i+1$ it must be the case that 
        \[ i+1 \in [t+1, t+2^{k(t)}].\]
        Consider some $t$ and its decomposition $\{t_1, t_2 \ldots t_n\}$ and cumulants $\{t'_1, t'_2 \ldots t'_n\}$. By the definition of the update in Line 8, we have that at time $t$ we only update indices $i+1$ for which $i$ has the sequence $\{t'_1, t'_2 \ldots t'_{n-1}\}$ in its decomposition as a prefix. It can then be seen that for a given number $i$, the only such numbers are its cumulants, i.e. $\{i'_1 \ldots i'_{m}\}$ which finishes the proof. 
    \end{proof}
     
\end{proof}

\end{document}